\definecolor{mygray}{gray}{0.5}
\definecolor{cblue}{RGB}{8, 85, 153}
\definecolor{darkblue}{RGB}{31, 64, 96}
\definecolor{cgreen}{RGB}{8, 153, 83}
\definecolor{green}{RGB}{8, 200, 50}
\definecolor{cmaroon}{RGB}{128, 0, 0}
\renewcommand\algorithmicdo{}
\Crefname{section}{Sec}{Secs.}
\Crefname{figure}{Fig}{Figs.}
\newcommand{\method}{HS}
\newcommand{\methods}{HS }
\newcommand{\methodlong}{Hierarchical Shrinkage}
\newcommand{\leafbased}{LBS}
\newcommand{\bx}{\mathbf{x}}
\newcommand{\by}{\mathbf{y}}
\newcommand{\bbeta}{\boldsymbol{\beta}}
\newcommand{\R}{\mathbb{R}}
\newcommand{\E}{\mathbb{E}}
\newcommand{\Var}{\textnormal{Var}}
\DeclarePairedDelimiter{\braces}{\lbrace}{\rbrace}
\DeclarePairedDelimiter{\paren}{(}{)}
\DeclarePairedDelimiter{\abs}{|}{|}
\DeclarePairedDelimiter{\norm}{\|}{\|}
\newcommand{\data}[1][n]{\mathcal{D}_{#1}}
\newcommand{\node}{\mathfrak{t}}
\newtheorem{theorem}{Theorem}[]
\icmltitlerunning{\methodlong}
\begin{document} 

\twocolumn[
\icmltitle{\methodlong: \\  improving the accuracy and interpretability 
of tree-based methods}
\icmlsetsymbol{equal}{*}
\begin{icmlauthorlist}
    \icmlauthor{Abhineet Agarwal}{equal,ucbphysics}
    \icmlauthor{Yan Shuo Tan}{equal,ucbstat}
    \icmlauthor{Omer Ronen}{ucbstat}
    \icmlauthor{Chandan Singh}{ucbeecs}
    \icmlauthor{Bin Yu}{ucbstat,ucbeecs}
\end{icmlauthorlist}
\icmlcorrespondingauthor{Bin Yu}{binyu@berkeley.edu}
\icmlaffiliation{ucbphysics}{Department of Physics, UC Berkeley, Berkeley, California, USA}
\icmlaffiliation{ucbstat}{Department of Statistics, UC Berkeley, Berkeley, California, USA}
\icmlaffiliation{ucbeecs}{EECS Department, UC Berkeley, Berkeley, California, USA}
\vskip 0.3in
] 
\printAffiliationsAndNotice{\icmlEqualContribution} 

\begin{abstract}

Tree-based models such as decision trees and random forests (RF) are a cornerstone of modern machine-learning practice.
To mitigate overfitting, trees are typically regularized by a variety of techniques that modify their structure (e.g. pruning).
We introduce \methodlong~(HS), a post-hoc algorithm that does not modify the tree structure, and instead regularizes the tree 
by shrinking the prediction over each node towards the sample means of its ancestors. 
The amount of shrinkage is controlled by a single regularization parameter and the number of data points in each ancestor.
Since \methods is a post-hoc method, it is \emph{extremely fast}, compatible with \emph{any} tree-growing algorithm, and can be used synergistically with other regularization techniques. 
Extensive experiments over a wide variety of real-world datasets show that \methods substantially increases the predictive performance of decision trees, even when used in conjunction with other regularization techniques.
Moreover, we find that applying \methods to each tree in an RF often improves accuracy, as well as its interpretability by simplifying and stabilizing its decision boundaries and SHAP values. 
We further explain the success of \methods in improving prediction performance by showing its equivalence to ridge regression on a (supervised) basis constructed of decision stumps associated with the internal nodes of a tree.
All code and models are released in a full-fledged package available on Github.\footnote{\methods is integrated into the \texttt{imodels} package \href{https://github.com/csinva/imodels}{\faGithub\,github.com/csinva/imodels}~\cite{singh2021imodels} with an sklearn-compatible API. Experiments for reproducing the results here can be found at \href{https://github.com/Yu-Group/imodels-experiments}{\faGithub\,github.com/Yu-Group/imodels-experiments}.}
 
 \end{abstract}
\section{Introduction}
\label{sec:intro}

Decision tree models, used for supervised learning since the 1960s \cite{morgan1963problems,messenger1972modal,quinlan1986induction}, have recently attained renewed prominence because they embody key elements of interpretability: shallow trees are easily described and visualized, and can even be implemented by hand.
While the precise definition and utility of interpretability have been a subject of much debate \cite{murdoch2019definitions,doshi2017towards,rudin2019stop,rudin2021interpretable}, all agree that it is an important notion in high-stakes decision-making, such as medical-risk assessment and criminal justice.
For this reason, decision trees have been widely applied in both areas \cite{steadman2000classification, kuppermann2009identification, letham2015interpretable, angelino2017learning}. 

By far the most popular decision tree algorithm is \citet{breiman1984classification}'s Classification and Regression Trees (CART).
These can be ensembled to form a Random Forest (RF) \cite{breiman2001random} or
used as weak learners in Gradient Boosting (GB) \cite{friedman2001greedy}; both algorithms have achieved state-of-the-art performance over a wide class of prediction problems \cite{caruana2006empirical,caruana2008empirical,fernandez2014we,olson2018data,hooker2021bridging}, and are implemented in popular machine learning packages such as \texttt{ranger} \cite{wright2015ranger} and \texttt{scikit-learn} \cite{pedregosa2011scikit}. 
Variants of these algorithms, such as \citet{basu2018iterative}'s iterative random forest for finding stable interactions, have found use in scientific applications.

In view of the widespread use of tree-based methods, we seek to provide a new lens on their regularization. On the one hand, decision trees often obey traditional statistical wisdom in that they need to be regularized to prevent overfitting. In practice, this is carried out by specifying an early stopping condition for tree growth, such as a maximum depth, or alternatively, pruning the tree after it is grown \cite{friedman2001elements}. These procedures, however, only regularize tree models via their \emph{tree structure}, and it is usually taken for granted that the prediction over each leaf should be the average response of the training samples it contains. 
We show that this can be very limiting: \emph{shrinking these predictions in a hierarchical fashion can significantly reduce generalization error in both regression and classification settings} (e.g. see~\cref{fig:intro}).

\begin{figure*}[t]
    \centering
    \includegraphics[width=0.4\textwidth]{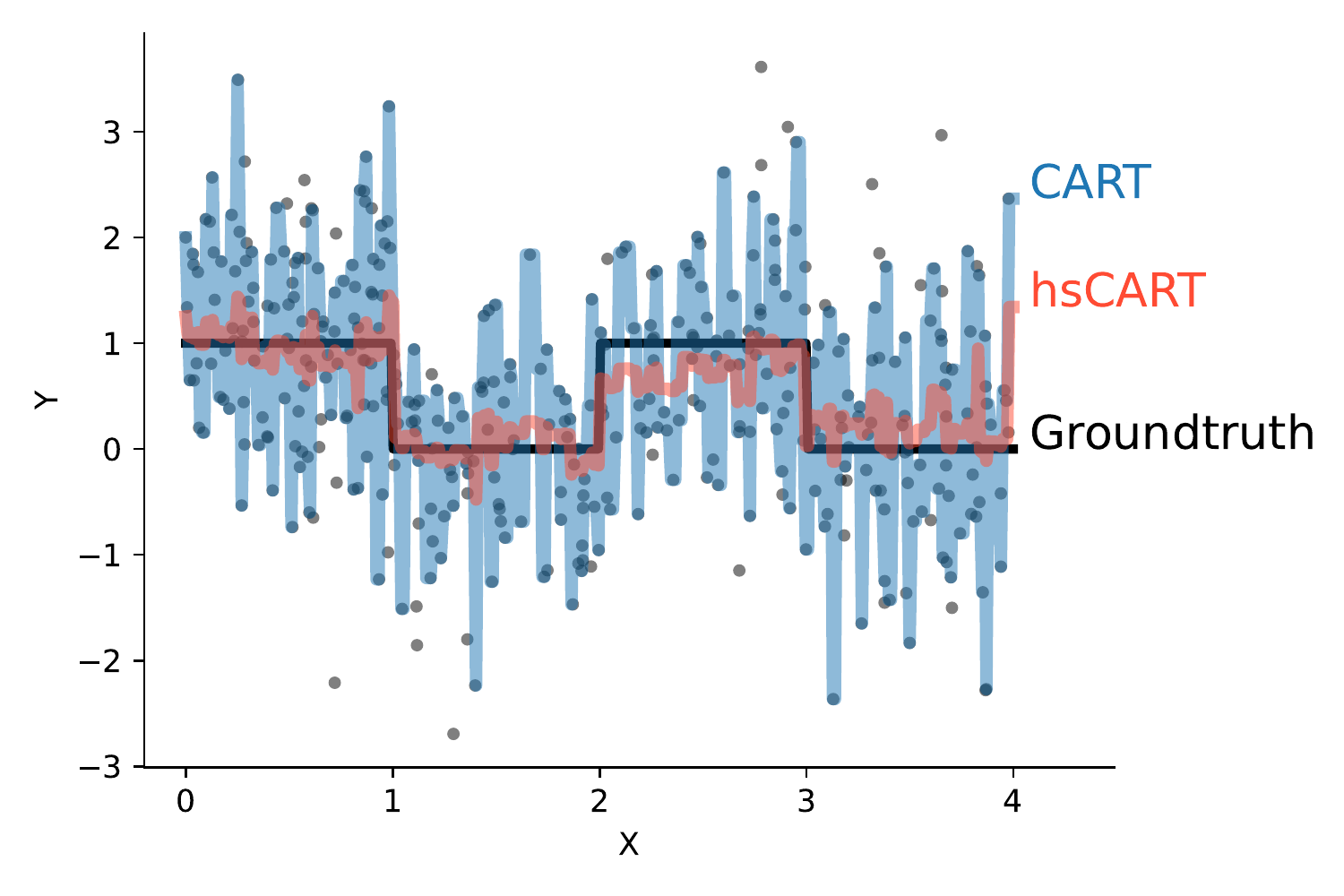}
    \includegraphics[width=0.4\textwidth]{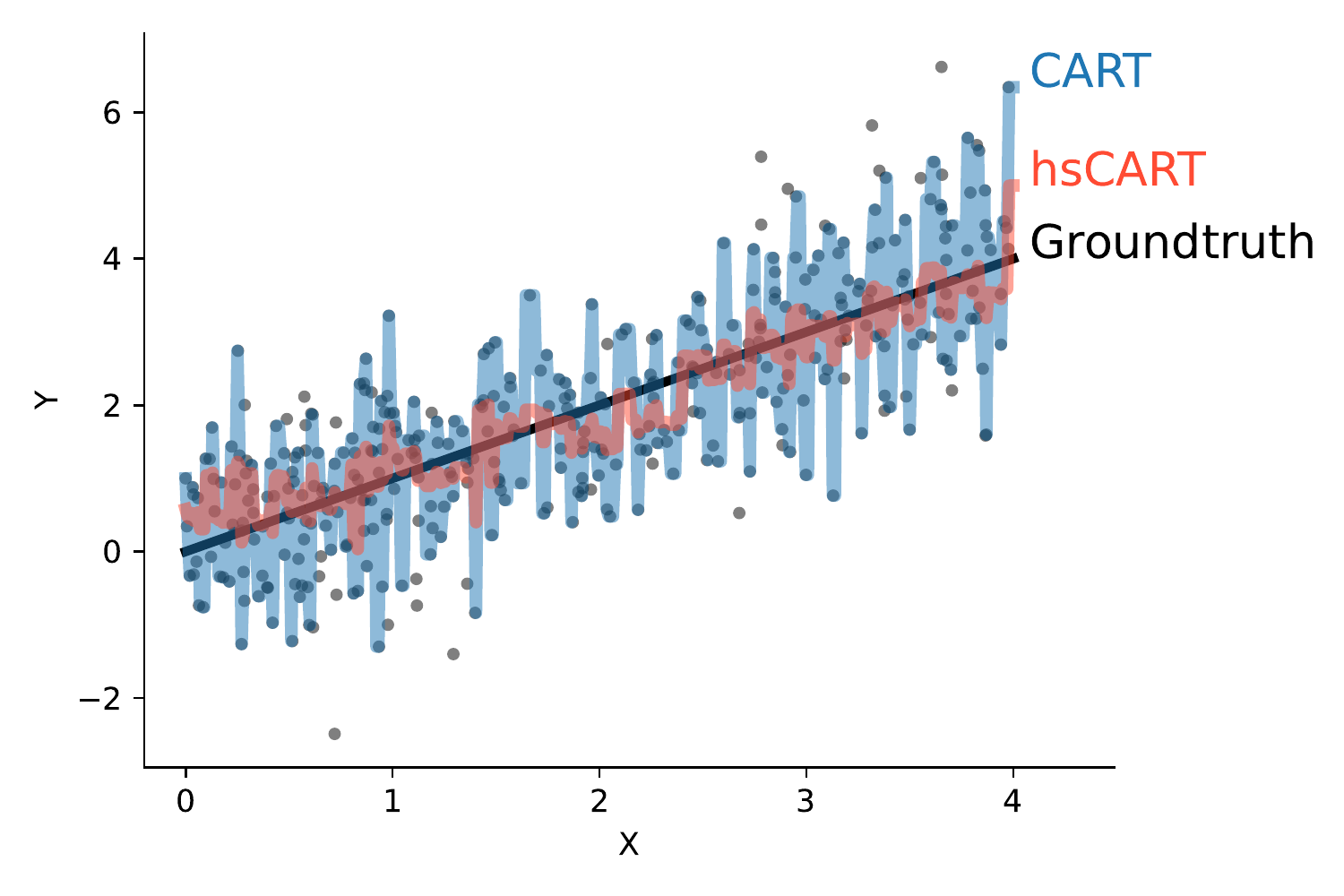}
    \vspace{-13pt}
    \caption{Example of \methods for toy univariate regression problems. \methods regularizes model predictions to improve estimates in noisy leaves that have few samples. CART is fit to the data in the blue dots and then \methods is applied posthoc (hsCART).} 
    \label{fig:intro}
\end{figure*}

On the other hand, trees used in an RF are usually not explicitly regularized and interpolate the data by being grown to purity (e.g. see the default settings of \texttt{scikit-learn} and \texttt{ranger}).
Instead, RF prevents overfitting by relying upon the randomness injected into the algorithm during tree growth, which acts as a form of implicit regularization~\cite{breiman2001random, mentch2019randomization}. We show that apart from this implicit regularization, more regularization, in the form of  hierarchical shrinkage, does improve generalization even while using a smaller ensemble for many data sets.

Equally important, regularizing RFs also improves the quality of their post-hoc interpretations.
RFs are usually interpreted via their feature and interaction importances, which have been used to provide scientific insight in areas such as remote sensing and genomics \cite{svetnik2003random,evans2011modeling,belgiu2016random,diaz2006gene,boulesteix2012overview,chen2012random,basu2018iterative,behr2020learning}.
The reproducibility and scientific meaning of such interpretations become questionable when the underlying RF model has poor predictive performance \cite{murdoch2019definitions}, or when they are highly sensitive to data perturbations \cite{yu2013stability}.
We show that \methods improves the interpretability of RF by both simplifying and stabilizing its decision boundaries and SHAP  values \cite{lundberg2017unified} on a number of real-world data sets.

Our proposed method, which we call \emph{\methodlong} (\method), is an extremely fast and simple yet effective algorithm for the \textit{post-hoc} regularization of \emph{any} tree-based model.
It does not alter the tree structure, and instead replaces the average response (or prediction) over a leaf in the tree with a \emph{weighted average of the mean or average responses over the leaf and each of its ancestors}.
The weights depend on the number of samples in each leaf, and are controlled by a single regularization parameter $\lambda$ that can be tuned efficiently via generalized cross validation.
\methods is agnostic to the way the tree is constructed and can be applied post-hoc to trees constructed with greedy methods such as CART and C4.5~\cite{quinlan2014c4}, as well optimal decision trees grown via dynamic programming or integer optimization techniques \cite{lin2020generalized}.

A more naive form of shrinkage, which we call \emph{leaf-based shrinkage} (\leafbased), appears as part of XGBoost \cite{chen2016xgboost}:
whenever a new tree is grown, the average response (prediction) over each leaf is shrunk directly towards the sample mean of the responses.
\leafbased~ also occurs\footnote{When conditioned on the structure of a given tree, as well as all other trees in the ensemble, the posterior distribution for the contribution of a leaf node is a product of Gaussian likelihood functions centered at the model residuals as well as a Gaussian prior.
A simple calculation shows that the posterior mean can be obtained from the residual mean via \leafbased.} in Bayesian Additive Regression Trees (BART) \cite{chipman2010bart}, which grows an ensemble of trees via a backfitting MCMC algorithm.
Comparing \leafbased~to \methods on several rea-world datasets shows that \methods uniformly has better predictive performance than \leafbased.

We explain the advantages of \methods by building on recent work by \citet{klusowski2021universal}, who uses decision stumps associated to each interior node of a tree to construct a new (supervised) feature representation. 
The original tree model is recovered as the linear model obtained by regressing the responses on the supervised features.
We show that \methods is exactly the \emph{ridge regression} solution in this supervised feature space, while \leafbased~can also be viewed as ridge regression, but with a different (supervised) feature space (of the same dimension) that relies only on the leaf nodes.  
This allows us to use ridge regression calculations heuristically to partially explain both the reasonableness of the shrinkage scaling in \method, as well as our empirical evidence that \methods achieves consistently better predictive accuracy than \leafbased~(see \cref{sec:theory}).

The rest of the paper is organized as follows.
\cref{sec:methods} gives a formal statement of \method, and discusses several computational considerations.
\cref{sec:theory} discuss the interpretation of \methods as ridge regression on the supervised features.
\cref{sec:results} presents the results of extensive numerical experiments on simulated and real world data sets that illustrate the gains in prediction accuracy from applying the method.
\cref{sec:interpretations} shows how \methods improves the interpretability of RFs.

\section{The \methodlong~(\method) algorithm}
\label{sec:methods}

Throughout this paper, we work in the supervised learning setting where we are given a training set $\data = \braces*{(\bx_{i},y_{i})}_{i=1}^n$, from which we learn a tree model $\hat{f}$ for the regression function.
Given a query point $\bx$, let $\node_L \subset \node_{L-1} \subset \cdots \subset \node_0$ denote its leaf-to-root path, with $\node_L$ and $\node_0$ representing its leaf node and the root node respectively.
For any node $\node$, let $N(\node)$ denote the number of samples it contains, and $\hat{\E}_{\node}\braces*{y}$ the average response.
The tree model prediction can be written as the telescoping sum:
$$
\hat f(\bx) 
= \hat\E_{\mathfrak{t}_0}\braces{y} + \sum_{l=1}^L 
\paren*{\hat \E_{\mathfrak{t}_l}\braces{y} - \hat \E_{\mathfrak{t}_{l-1}}\braces{y}}. 
$$

\methods transforms $\hat f$ into a shrunk model $\hat{f}_{\lambda}$ via the formula:
\begin{equation}
    \label{eq:hierarchical_shrinkage}
    \hat{f}_{\lambda}(\bx)
    \coloneqq \hat\E_{\mathfrak{t}_0}\braces{y} + \sum_{l=1}^L \frac{\hat \E_{\mathfrak{t}_l}\braces{y} - \hat \E_{\mathfrak{t}_{l-1}}\braces{y}}{1 + \lambda/N(\mathfrak{t}_{l-1})},
\end{equation}
where $\lambda$ is a hyperparameter chosen by the user, for example by cross validation. We emphasize that \methods maintains the tree structure, and \emph{only} modifies the prediction over each leaf node.

Since \methods continues to make a constant prediction over each leaf node, 
our method thus comprises a one-off modification of these values. This can be computed in $O(m)$ time, where $m$ is the total number of nodes in the tree.
No other aspects of the underlying data structure are modified, with test time prediction occurring in exactly the same way as in the original tree.
Moreover, our method \methods does not even need to see the original training data, and only requires access to the fitted tree model.
These features make it extremely lightweight and easy to implement, as we have done in the open-source package 
\texttt{imodels} \cite{singh2021imodels}.
By applying \methods to each tree in an ensemble, it can be generalized to methods such as RF and gradient boosting.

While not typically done, it is possible to regularize RFs via other hyperparameters such as maximum tree depth. 
Tuning these hyperparameters, however, requires refitting the RF at every value in a grid. This quickly becomes computationally expensive, even for moderate dataset sizes\footnote{Many popular tree-building algorithms such as CART have a run time of $O( p n^2)$ for constructing a binary tree.}, over multiple folds in a cross-validation (CV) set up.
In contrast, since \methods is applied post-hoc, \emph{we only need to fit the RF once per CV fold}, leading to potentially enormous time savings.
In addition, due to the connection between our method and ridge regression, it is even possible to get away with fitting the RF only \emph{once} by using generalized cross-validation \cite{Golub79GCV}\footnote{This allows for efficient computation of leave-one-out cross-validation error, which can be used to select $\lambda$, without refitting the RF.}.

We also note the formula for \leafbased:
\begin{equation}
\label{eq:leaf-based_shrinkage}
    \hat f_{\lambda}^l(\bx) 
    \coloneqq \hat\E_{\node_0}\braces*{y} 
    + \frac{\hat \E_{\mathfrak{t}_L}\braces{y} - \hat \E_{\mathfrak{t}_{0}}\braces{y}}{1 + \lambda/N(\mathfrak{t}_{L})}.
\end{equation}
Expanding this into a telescoping sum similar to \eqref{eq:hierarchical_shrinkage}, we see that the major difference between the two formulas is that whereas \leafbased~shrinks each term by the same factor, \methods shrinks each term by a different amount, with the amount of shrinkage controlled by the number of samples in the ancestor. 
This increased flexibility leads to better prediction performance for the final model, as evidenced by our results presented in the next section.
\section{\methods as ridge regression on supervised features}
\label{sec:theory}

Recent work by \citet{klusowski2021universal} showed that decision trees are linear models on features obtained via supervised feature learning.
To see this, consider a tree model $\hat f$, with a fixed indexing of its interior nodes $\braces*{\node_0, \node_1,\ldots,\node_{m-1}}$.
We first associate to each node $\node$ the decision stump
\begin{equation} \label{eq:decision_stump}
\psi_{\node}(\bx) = \frac{N(\mathfrak{t}_{R})\mathbf{1}\{\bx \in \mathfrak{t}_{L}\} - N(\mathfrak{t}_{L})\mathbf{1}\{\bx \in \mathfrak{t}_{R}\}}{\sqrt{N(\mathfrak{t}_{L})N(\mathfrak{t}_{R})}},
\end{equation}
where $\node_L$ and $\node_R$ denote the left and right children of $\node$ respectively.
This is a tri-valued function that is positive on the left child, negative on the right child, and zero everywhere else.
Concatenating the decision stumps together yields a supervised feature map
via $\Psi(\bx) = \paren*{\psi_{\node_0}(\bx), \ldots, \psi_{\node_{m-1}}(\bx)}$ and a transformed training set
$\Psi(\data) \in \mathbb{R}^{n \times m}$.
One can easily check that these feature vectors
are orthogonal in $\R^n$, and furthermore that their squared $\ell_2$ norms are the number of samples contained in their corresponding nodes: $\norm{\psi_{\node_i}}^2 = N(\node_i)$.

More interestingly, \citet{klusowski2021universal} was able to show (see Lemma 3.2 therein) that we have functional equality between the tree model and the kernel regression model with respect to the supervised feature map $\Psi$, or in other words, $\hat f(\bx) = \hat{\bbeta}^T \Psi(\bx)$, where $\hat{\bbeta} = \Psi(\data)^\dagger\by$. 
An easy extension of his proof yields the following result.

\begin{theorem}
    \label{thm:ridge_to_shrinkage} 
    Let $\hat \bbeta_\lambda$ be the solution to the ridge regression problem
    \begin{equation} \label{eq:ridge_equation}
        \min_{\bbeta} \braces*{\sum_{i=1}^n \paren*{\bbeta^T\Psi(\bx_i) - y_i}^2 + \lambda\norm*{\bbeta}^2}.
    \end{equation}
    We have the functional equality $\hat f_\lambda(\bx) = \hat \bbeta_\lambda^T \Psi(\bx)$.
\end{theorem}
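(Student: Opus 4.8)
The plan is to exploit the orthogonality of the supervised features to reduce the ridge problem to a coordinate-wise computation, and then to match the shrunk coefficients against the telescoping form of $\hat f_\lambda$ term by term. Writing $\Psi = \Psi(\data)$, the normal equations for \eqref{eq:ridge_equation} give $\hat\bbeta_\lambda = (\Psi^T\Psi + \lambda\bI)^{-1}\Psi^T\by$. Since the columns of $\Psi$ are orthogonal (hence linearly independent) with $\norm{\psi_{\node_i}}^2 = N(\node_i)$, the Gram matrix is the diagonal matrix $\Psi^T\Psi = \diag(N(\node_0),\ldots,N(\node_{m-1}))$, so $\Psi^T\Psi + \lambda\bI$ is diagonal and trivially invertible. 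Comparing against the minimum-norm least-squares solution $\hat\bbeta = \Psi^\dagger\by$, whose $i$-th coordinate is $(\Psi^T\by)_i/N(\node_i)$, I would conclude that ridge simply shrinks each coordinate: $\hat\beta_{\lambda,i} = \hat\beta_i/(1 + \lambda/N(\node_i))$. This closed form is available precisely because orthogonality decouples the coordinates.

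Next I would evaluate $\hat\bbeta_\lambda^T\Psi(\bx)$ at a query point $\bx$. Because each stump $\psi_{\node_i}$ vanishes off its own node, only the stumps attached to the interior nodes on the leaf-to-root path of $\bx$ contribute, so $\hat\bbeta_\lambda^T\Psi(\bx) = \sum_{l=1}^L \hat\beta_{\lambda,\node_{l-1}}\psi_{\node_{l-1}}(\bx)$. The key input from \citet{klusowski2021universal} is the base equality $\hat f(\bx) = \hat\bbeta^T\Psi(\bx)$, which read off term by term along the path asserts exactly that $\hat\beta_{\node_{l-1}}\psi_{\node_{l-1}}(\bx) = \hat\E_{\node_l}\{y\} - \hat\E_{\node_{l-1}}\{y\}$; a short direct calculation from \eqref{eq:decision_stump} (using that the count at $\node_{l-1}$ is the sum of the counts at its two children) re-derives this if needed. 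Substituting the shrinkage factor from the previous step, and crucially noting that the factor attached to the $l$-th increment is $1/(1 + \lambda/N(\node_{l-1}))$ because $\norm{\psi_{\node_{l-1}}}^2 = N(\node_{l-1})$, each path increment is scaled by exactly the weight appearing in \eqref{eq:hierarchical_shrinkage}. Summing over the path then reproduces the \methods formula.

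The one point requiring care, and the main obstacle I anticipate, is the constant root term $\hat\E_{\node_0}\{y\}$, which \eqref{eq:hierarchical_shrinkage} leaves unshrunk. Every stump is centered --- it sums to zero over the samples in its node, so $\mathbf{1}$ is orthogonal to the column space of $\Psi$ --- and therefore the feature map alone cannot generate a nonzero grand mean. This term must instead be carried by an unpenalized intercept (equivalently, by working with centered $\by$), exactly as in the base equality $\hat f = \hat\bbeta^T\Psi$. I would therefore state the result under the same intercept convention used for the unregularized representation, so that excluding the intercept from the penalty in \eqref{eq:ridge_equation} is precisely what keeps $\hat\E_{\node_0}\{y\}$ unshrunk and makes the match with \eqref{eq:hierarchical_shrinkage} exact.
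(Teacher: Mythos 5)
Your proposal is correct and follows essentially the same route as the paper's proof: orthogonality of the stumps makes the Gram matrix equal to $\diag\paren*{N(\node_0),\dots,N(\node_{m-1})}$, so ridge shrinks each least-squares coordinate by $1/(1+\lambda/N(\node_i))$, and evaluating $\hat\bbeta_\lambda^T\Psi(\bx)$ along the leaf-to-root path recovers the telescoping \methods formula term by term. Your intercept discussion also matches the paper's convention, which assumes WLOG that $\hat\E_{\node_0}\braces*{y}=0$ --- legitimate precisely because, as you note, every stump is orthogonal to the constant vector, so centering $\by$ changes neither $\hat\bbeta_\lambda$ nor the path increments.
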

\begin{proof}
See \cref{sec:supp_theory}.
\end{proof}

Since the decision stumps \eqref{eq:decision_stump} are orthogonal, we can decompose \eqref{eq:ridge_equation} into $m$ independent univariate ridge regression problems, one with respect to each node $\node$:
\begin{equation} \label{eq:univariate_ridge}
    \min_{\beta} \braces*{\sum_{i=1}^n \paren*{\beta\psi_\node(\bx_i) - y_i}^2 + \lambda\beta^2}.
\end{equation}

Next, we use this connection of \methods to ridge regression to argue heuristically that \emph{the same $\lambda$ works well for each regression subproblem \eqref{eq:univariate_ridge}}.
This helps to justify our choice of denominator for each term in the \methods formula \eqref{eq:hierarchical_shrinkage} (a different choice would have led to a rescaling of the features $\psi_{\node_i}$.)

Assume for the moment that the tree structure and hence the feature map is independent of the responses, which can be achieved via sample splitting. 
This is known in the literature as the ``honesty condition'', and has been widely used to simplify the analysis of tree-based methods \cite{athey2016recursive}. 
Define $\bbeta_* \in \R^m$ to have the value
\begin{equation} \label{eq:formula_for_beta_star}
    \bbeta_*(\node) \coloneqq \frac{\sqrt{N(\node_L)N(\node_R)}}{N(\node)}\paren*{\E\braces*{y~|~\node_L} - \E\braces*{y~|~\node_R}},
\end{equation}
for the coordinate associated with each node $\node$.
For any query point $\bx$, $\bbeta_*^T \Psi(\bx)$ gives the mean response over the leaf $\node(\bx)$ containing it.
Furthermore, knowing $\Psi(\bx)$ is equivalent to knowing the leaf containing $\bx$.
Putting these two facts together show that the population residuals $r_i \coloneqq y_i - \bbeta_*^T\Psi(\bx)$ satisfy $\E\braces*{r_i~|~\Psi(\bx_i)} = 0$, so that we have a generative linear model,
in which we can calculate that the optimal regularization parameter for \eqref{eq:univariate_ridge} is equal to $\lambda_{opt}(\node) = \sigma^2(\node)/\bbeta_*(\node)^2$, where $\sigma^2(\node)$ is roughly equal to the conditional variance\footnote{More precisely, it is equal to $\frac{N(\node_L)}{N(\node)}\Var\braces*{r~|~\node_R} + \frac{N(\node_R)}{N(\node)}\Var\braces*{r~|~\node_L}$.} of the residual over $\node$.

Given the connection between impurity and residual variance, if the tree model $\hat{f}$ considered in this section is grown using an impurity decrease stopping condition, we should expect the residual variance to be relatively similar over all leaves, so that 
$\sigma^2(\node)$ also does not vary too much over different nodes. 
Meanwhile (see \cref{subsec:motivating_constant_lambda_supp})
\begin{equation} \label{eq:motivating_constant_lambda}
    \paren*{\E\braces*{y~|~\node_L} - \E\braces*{y~|~\node_R}}^2 \approx \textnormal{diam}(\node)^2 \approx 2^{-2\textnormal{depth}(\node)/p}
\end{equation}
where $p$ is the dimension of the original feature space.
Since the maximum depth is typically $O(\log n)$, $\lambda_{opt}(\node)$ also does not vary too much across different nodes, and we do not lose too much by using a common value of $\lambda$ across all the univariate subproblems \eqref{eq:univariate_ridge} corresponding to different decision stump features.
 
A more naive (supervised) tree-based feature map is the one-hot encoding of an original feature vector obtained by treating the leaf index as a categorical variable.
We denote this using $\Xi$.
While $\Xi$ can be obtained from $\Psi$ via an invertible linear transformation, the two maps result in different kernels, and thus different ridge regression problems.
Indeed, the ridge regression solution with respect to $\Xi$ is equivalent to performing \leafbased~on the tree model.
The leaf indicator features are also orthogonal, so we may similarly decompose this ridge regression problem into independent univariate subproblems, one for each leaf.
However, in this case, the 
population regression vector $\bbeta_*^l$, for which $\bbeta_*^{lT}\Xi(\bx)$ gives the expected response over each leaf, has coordinates equal to the population expectation over the leaves: $\bbeta_*^l(\node) = \E\braces*{y~|~\node}$.
As such, 
the optimal regularization parameters for different leaf nodes could be very different, and we lose more by having to use a common value of $\lambda$.

In practice, sample splitting is rarely done, and the feature map depends on the responses.
Nonetheless, we believe that the heuristics detailed above continue to hold to a certain extent as shown in our experimental results.
\section{\methods improves predictive performance on real-world datasets}
\label{sec:results}

\subsection{Data overview}
\label{subsec:data}
In this section, we study the performance of \methods on a collection of classification and regression datasets selected as follows.
For classification, we consider a number of datasets used in the classic Random Forest paper ~\cite{breiman2001random,asuncion2007uci}, as well as two 
that are commonly used to evaluate rule-based models~\cite{pmlr-v97-wang19a}.
For regression, we consider all regression datasets used by \cite{breiman2001random} with at least 200 samples, as well as a variety of data-sets from the PMLB benchmark~\cite{romano2020pmlb} ranging from small to large sample sizes.
\cref{tab:datasets} displays the number of samples and features present in each dataset, with more details provided in \cref{sec:data_supp}.
In all cases, $\nicefrac 2 3$ of the data is used for training (hyperparameters are selected via 3-fold cross-validation on this set) and $\nicefrac 1 3$ of the data is used for testing.

\begin{table}[H]
    \footnotesize
    \centering
    \setlength{\tabcolsep}{4pt}
\begin{tabular}{llrr}
 \toprule
  &                                       Name &  Samples &  Features \\
 \midrule
 \parbox[c]{2mm}{\multirow{8}{*}{\rotatebox[origin=c]{90}{Classification}}}               &                    Heart &      270 &        15 \\
                                &Breast cancer &      277 &        17 \\
                                 &    Haberman &      306 &         3  \\
&Ionosphere \cite{sigillito1989classification} &      351 &        34  \\
 &             Diabetes \cite{smith1988using} &      768 &         8 \\
  &                              German credit &     1000 &        20  \\
   &       Juvenile \cite{osofsky1997effects} &     3640 &       286  \\
    &                              Recidivism &     6172 &        20 \\
              
\midrule
  \parbox[c]{0.5mm}{\multirow{8}{*}{\rotatebox[origin=c]{90}{Regression}}} &    Friedman1 \cite{friedman1991multivariate} &      200 &        10  \\
&     Friedman3 \cite{friedman1991multivariate} &      200 &         4 \\
&     Diabetes  \cite{efron2004least} &      442 &        10  \\
&     Geographical music &     1059 &       117 \\
&     Red wine &     1599 &        11 \\
&     Abalone \cite{nash1994population} &     4177 &         8  \\
&     Satellite image \cite{romano2020pmlb} &     6435 &        36  \\
&     CA housing \cite{pace1997sparse} &    20640 &         8  \\ 
 \bottomrule
 \end{tabular}

    \vspace{-5pt}
    \caption{Real-world datasets analyzed here for classification (top panel) and regression (bottom panel).}
    \label{tab:datasets}
\end{table}





\subsection{\methods improves prediction performance for commonly used tree methods}
\label{subsec:performance}

The prediction performance results for classification and regression are plotted in \cref{fig:performance_curves}A and \cref{fig:performance_curves}B respectively, with the number of leaves, as a proxy for the model complexity, plotted on the $x$-axis. 
We consider trees grown using four different techniques: CART, CART with cost-complexity pruning (CCP), C4.5, and GOSDT \cite{lin2020generalized}, a method that grows optimal trees in terms of the cost-complexity penalized misclassification loss.
To reduce clutter, we only display the classification results for CART and CART with CCP in \cref{fig:performance_curves}A/B and defer the results for C4.5 (\cref{fig:c45_classification_results}) and GOSDT~(\cref{subsec:gosdt}) to the appendix.

For each of the four tree-growing methods, we grow a tree to a fixed number of leaves $m$,\footnote{For CART (CCP), we grow the tree to maximum depth, and tune the regularization parameter to yield $m$ leaves.}
for several different choices of $m \in \{2,4,8,12,15,20,24,28,30,32\}$ (in practice, $m$ would be pre-specified by a user or selected via cross-validation).
For each tree, we compute its prediction performance before and after applying \method, where the regularization parameter for \methods is selected from the set $\lambda \in \{0.1,1.0,10.0,25.0,50.0,100.0\}$ via cross-validation.
Results for each experiment are averaged over 10 random data splits.
We observe that \methods (solid lines in \cref{fig:performance_curves}A,B) \emph{does not hurt prediction in any of our data sets, and often leads to substantial performance gains}.
For example, taking $m = 15$, we observe an average increase in relative predictive performance (measured by AUC) of $6.2\%,6.5\%,8\%$
for \methods applied to CART and CART with CCP, and C4.5 respectively for the classification data sets. For the regression data sets with $m = 15$, we observe an average relative increase in $R^{2}$ performance of $9.8\%,10.1\%$ for CART and CART with CCP respectively.

As expected, the improvements tend to be larger when $m$ increases and for smaller datasets (e.g. the top row of \cref{fig:performance_curves}A and \cref{fig:performance_curves}B), although for larger datasets we see substantial improvements using \methods once the number of leaves in the model is increased (\cref{subsec:deep_tree_supp_results}).

The fact that improvements hold for CART (CCP) shows that the effect of \methods is not entirely replicated by tree structure regularization, and instead, \emph{the two regularization methods can be used synergistically}.
Indeed, applying \methods can lead to the selection of a larger tree.
Since tree models are sometimes used for subgroup search, larger trees from \methods could allow for the discovery of otherwise undetected subgroups.

\cref{fig:bias_variance_linear_gaussian_uniform} shows a simulation result analyzing the bias-variance tradeoff for CART with and without \method.
Here, data is generated from a linear model with Gaussian noise added during training (see \cref{sec:sim_supp} for experimental details, and other simulations).
While predictive performance curves are often U-shaped because of the bias-variance tradeoff, those for \methods are monotonic since \methods is able to effectively reduce variance.
The optimal regularization parameter $\lambda$ decreases with the total number of leaves; this is corroborated by our calculations in \cref{sec:theory}.

\begin{figure}[htbp]
    \centering
    \includegraphics[width=0.8\columnwidth]{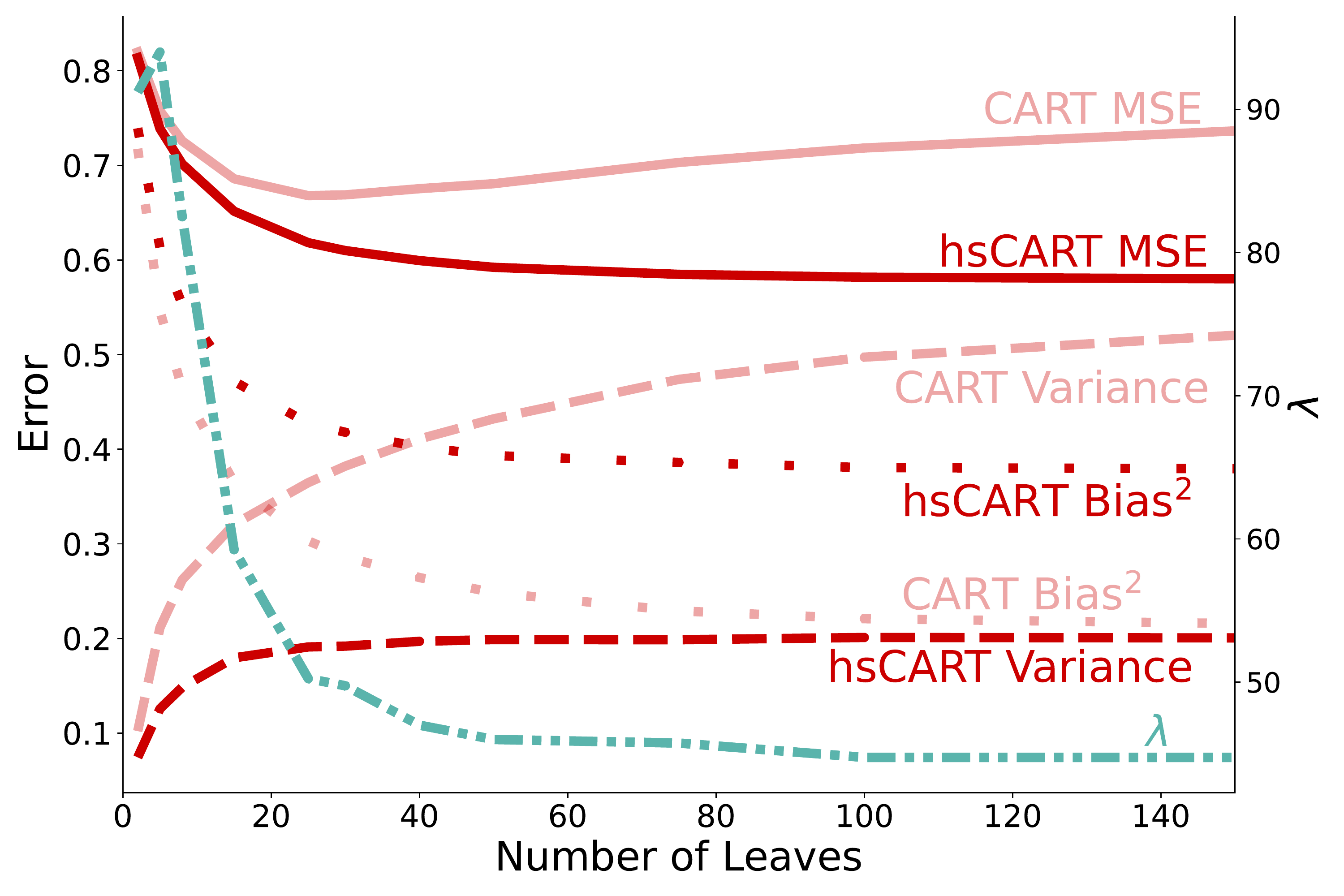}
    \vspace{-12pt}
    \caption{Test error for CART with \methods (hsCART) stays low as the number of leaves increases, whereas CART test error increases due to overfitting. Data is simulated from a linear model with Gaussian noise.} 
    \label{fig:bias_variance_linear_gaussian_uniform}
\end{figure}

\begin{figure*}[htbp]
    \vspace{-8pt}
    \aboverulesep = 0.0mm
    \belowrulesep = 0.0mm
    \centering
    \begin{tabular}{lc}
    \begin{minipage}{0.1cm}
    \rotatebox[origin=c]{90}{\large \textbf{(A)} Classification}
    \end{minipage}%
    & \raisebox{\dimexpr-.5\height-1em}{\includegraphics[width=0.91\textwidth]{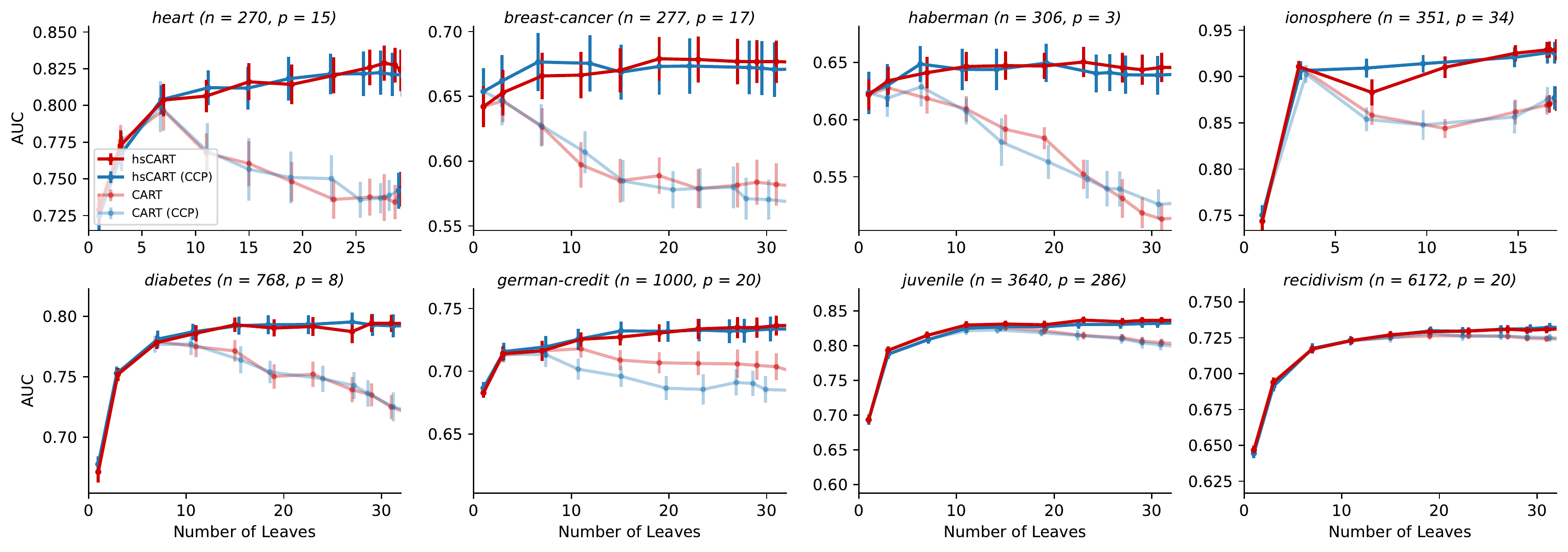}}\\
    \midrule
    \begin{minipage}{0.1cm}
    \rotatebox[origin=c]{90}{\large \textbf{(B)} Regression}
    \end{minipage}
    & \raisebox{\dimexpr-.5\height-1em}{\includegraphics[width=0.91\textwidth]{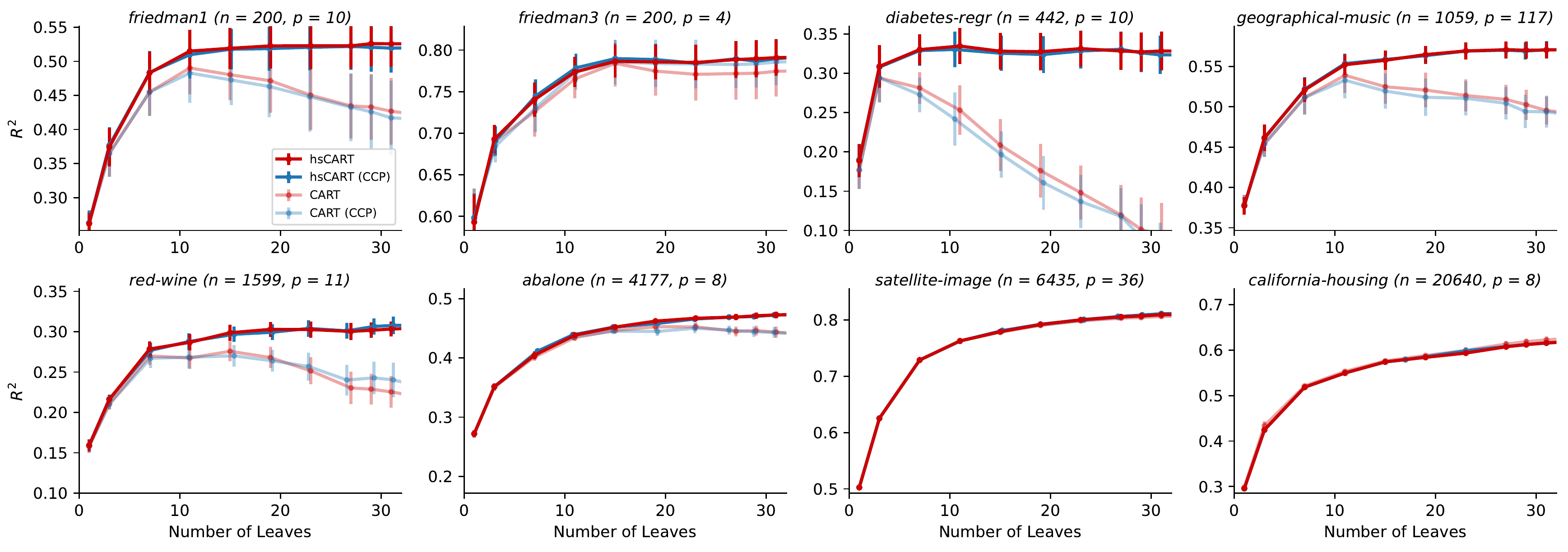}}\\
    \midrule
    \begin{minipage}{0.1cm}
    \rotatebox[origin=c]{90}{\large \textbf{(C)} LBS comparisons}
    \end{minipage}%
    & \raisebox{\dimexpr-.5\height-1em}{\includegraphics[width=0.91\textwidth]{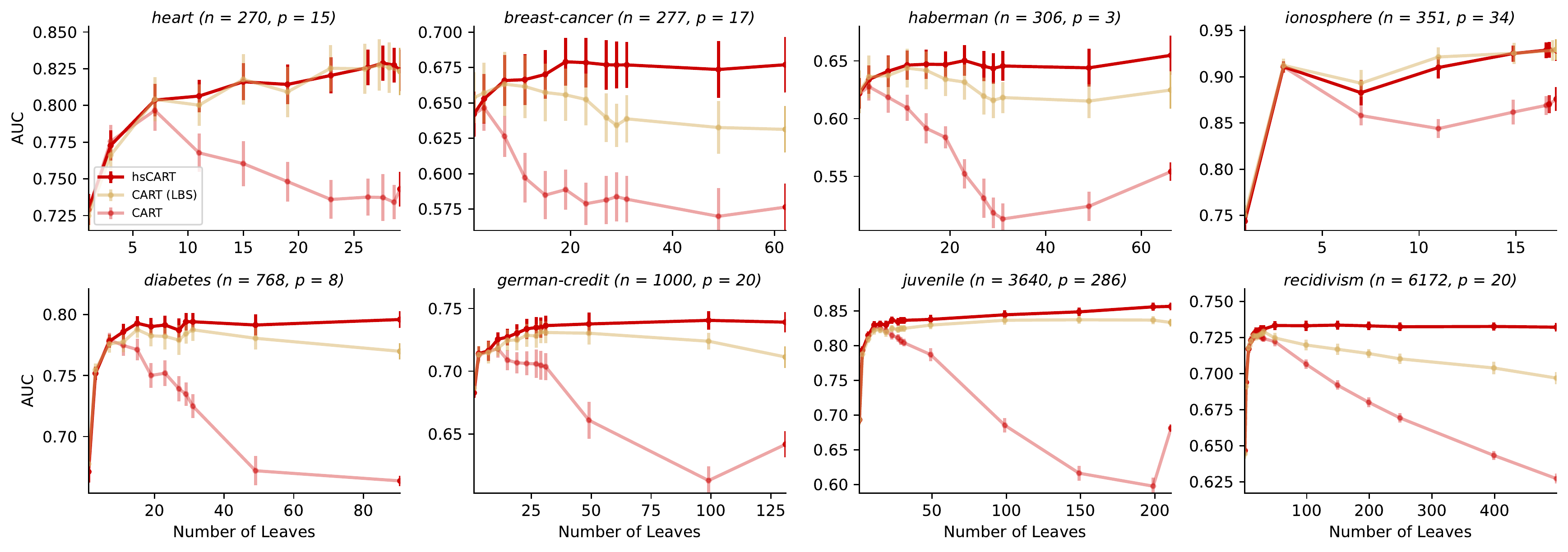}}\\
    \midrule
    \begin{minipage}{0.1cm}
    \rotatebox[origin=c]{90}{\large \textbf{(D)} RF Comparisons}
    \end{minipage}%
    & \raisebox{\dimexpr-.5\height-1em}{\includegraphics[width=0.91\textwidth]{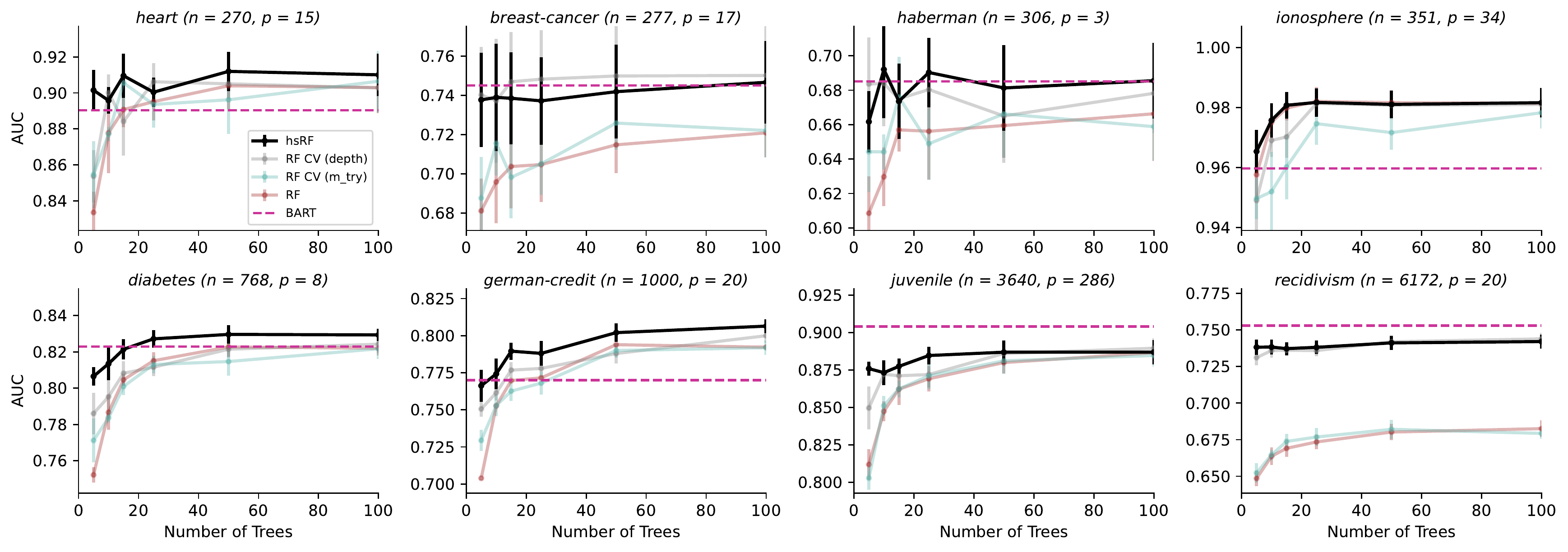}}
    \end{tabular}
    \vspace{-10pt}
    \caption{\methodlong~(solid lines) often improves predictive performance across various datasets, particularly for small datasets.
    \textbf{(A)} Top two rows show results for classification datasets (measured by AUC of the ROC curve) and \textbf{(B)} the next two rows show results for regression datasets (measured by $R^2$).
    \methods often significantly improves the performance over CART, CART with CCP, and \textbf{(C)} leaf-based shrinkage.
    \textbf{(D)} \methods even improves results for Random Forests as a function of the number of trees.
    Across all panels, errors bars show standard error of the mean computed over 10 random data splits. Note that the y-axis scales differ across plots.}
    \label{fig:performance_curves}
\end{figure*}

\subsection{\methods outperforms \leafbased}
\label{subsec:lbs_classification}

We next compare the performance of \methods to that of leaf-based shrinkage (\leafbased), which is used in XGBoost.
\cref{fig:performance_curves}C shows that hsCART tends to outperform CART (\leafbased), when repeating the same experiments as in \cref{fig:performance_curves}A); regression results are pushed to \cref{subsec:lbs_regression} due to space constraints.

\subsection{\methods improves prediction performance for RF}
\label{subsec:ensemble_results}

As mentioned earlier, trees in an RF are typically grown to purity without any constraints on depth or node size.
Nonetheless, \citet{mentch2019randomization} argues that the \texttt{mtry} parameter\footnote{This parameter is denoted \texttt{mtry} in \texttt{ranger} and \texttt{max\_features} in \texttt{scikit-learn}.}, which controls the degree of feature sub-selection, ``serves much the same purpose as the shrinkage penalty in explicitly regularized regression procedures like lasso and ridge regression.''
This parameter is typically set to a default value\footnote{Typically $\sqrt{p}$ for classification and $p/3$ for regression, where $p$ is the number of features.}, and is not tuned.
We compare the performance of regularizing RF via \methods against maximum tree depth and \texttt{mtry}, tuning the hyperparameter for each method via cross validation.
We repeat this for several different choices of $B$, the number of trees in the RF, and like before, average the results over 10 random data splits.

The results, displayed in \cref{fig:performance_curves}D
show that \emph{\methods significantly improves the prediction accuracy of RF} across the datasets we considered.
Moreover, \methods clearly outperforms the two RF-regularization methods (using \texttt{depth} and \texttt{mtry}) in all but one dataset (breast-cancer).
This is especially promising because \methods is also the fastest and easiest method to implement, as it does not require refitting the RF. 
Moreover, \emph{hsRF tends to achieve its maximum performance with fewer trees} than RF without regularization; as a consequence, RF with \methods is often able to achieve the same performance with an ensemble that is five times smaller, allowing us to achieve large savings in computational resources.

We also compare hsRF to the predictive performance of BART, and observe that hsRF and BART are comparable in terms of prediction performance.
However, hsRF is much faster to fit than BART (typically 10-15 times faster) and we can also apply \methods to BART, see \cref{sec:bart}.
\section{\methods improves RF interpretations by simplifying and stabilizing them}
\label{sec:interpretations}

In addition to improving predictive performance, \methods reduces variance and removes sampling artifacts, resulting in (i) simplified boundaries, (ii) stabilized feature importance scores, and (iii) making it easier to interpret interactions in the model.

\begin{figure}[H]
    \centering
         \includegraphics[width=0.7\columnwidth]{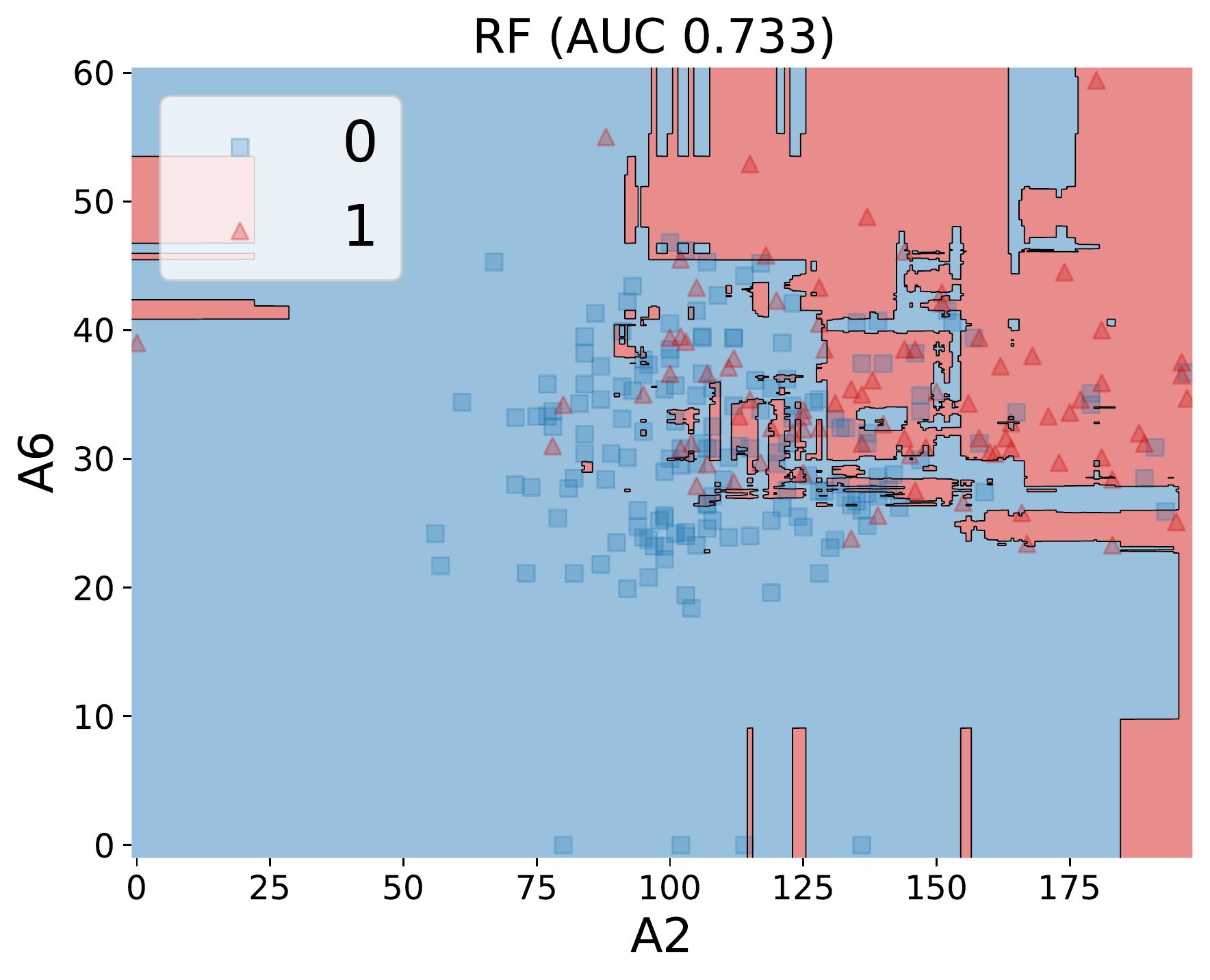} \includegraphics[width=0.7\columnwidth]{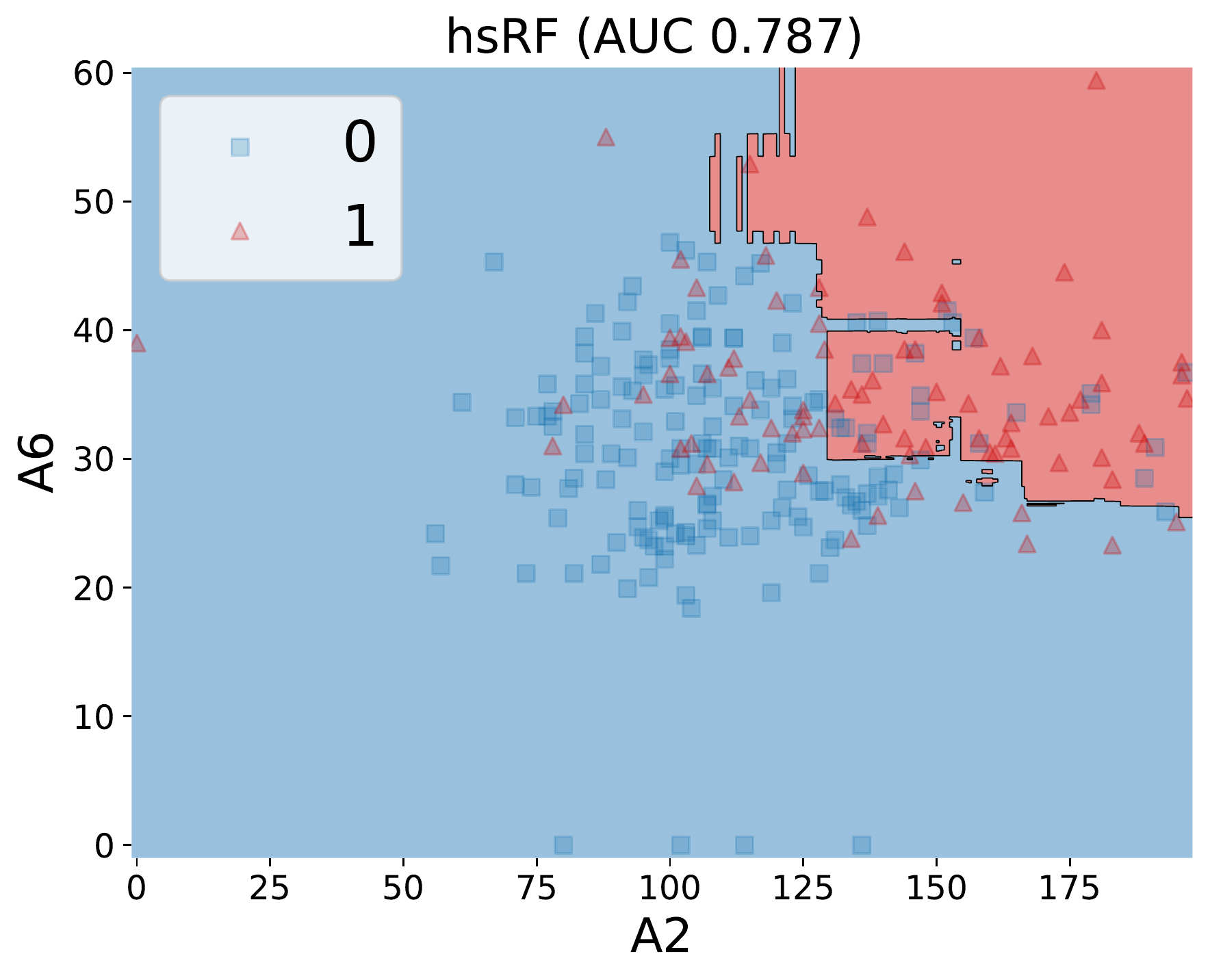}
    \vspace{-7pt}
    \caption{Comparison of decision boundary learned by RF vs hsRF on the Diabetes dataset, when fitted using only two features.
    \methods prevents overfitting by creating a smoother, simpler decision boundary, resulting in improved performance and interpretability.}
    \label{fig:boundary_diabetes}
\end{figure}

\cref{fig:boundary_diabetes} shows an example of simplification: smoothing decision boundaries.
On the diabetes dataset~\cite{smith1988using}, RF can achieve strong performance (AUC 0.733) even when fitted to only two features.
When \methods is applied to this RF, the performance increases (to an AUC of 0.787), but the decision boundary also becomes considerably smoother and less fragmented.
Since these two features are the only inputs to the model, these smooth boundaries enable a user to identify much clearer regions for high-risk predictions.
\cref{subsec:decision_boundary_supp} shows boundary maps for all 8 classification datasets, showing that \methods consistently makes boundaries smoother (these maps visualize the boundary with respect to the two features with the highest mean-decrease-impurity (MDI) score).

In models with many features, post-hoc interpretations, such as SHAP scores~\cite{lundberg2017unified}, can help a practitioner understand how a model makes its predictions.
\cref{fig:shap_variability_breast} shows that \methods improves the stability of SHAP scores.
Stability is measured using the variance of SHAP values when models are fit to 100 random train-test splits of the breast-cancer dataset.
This makes the model interpretations less sensitive to minor data perturbations and thus more trustworthy.
Moreover, these improvements in stability persist even for datasets such as Heart, Diabetes, and Ionosphere, for which \methods does not greatly improve prediction performance (see SHAP stability plots for all datasets in \cref{fig:shap_variability_small} and \cref{fig:shap_variability_big}).
Hence, \methods can improve the stability and interpretability of RF, even when it does not improve its predictive performance.

\begin{figure}[H]
    \centering \includegraphics[width=0.8\columnwidth]{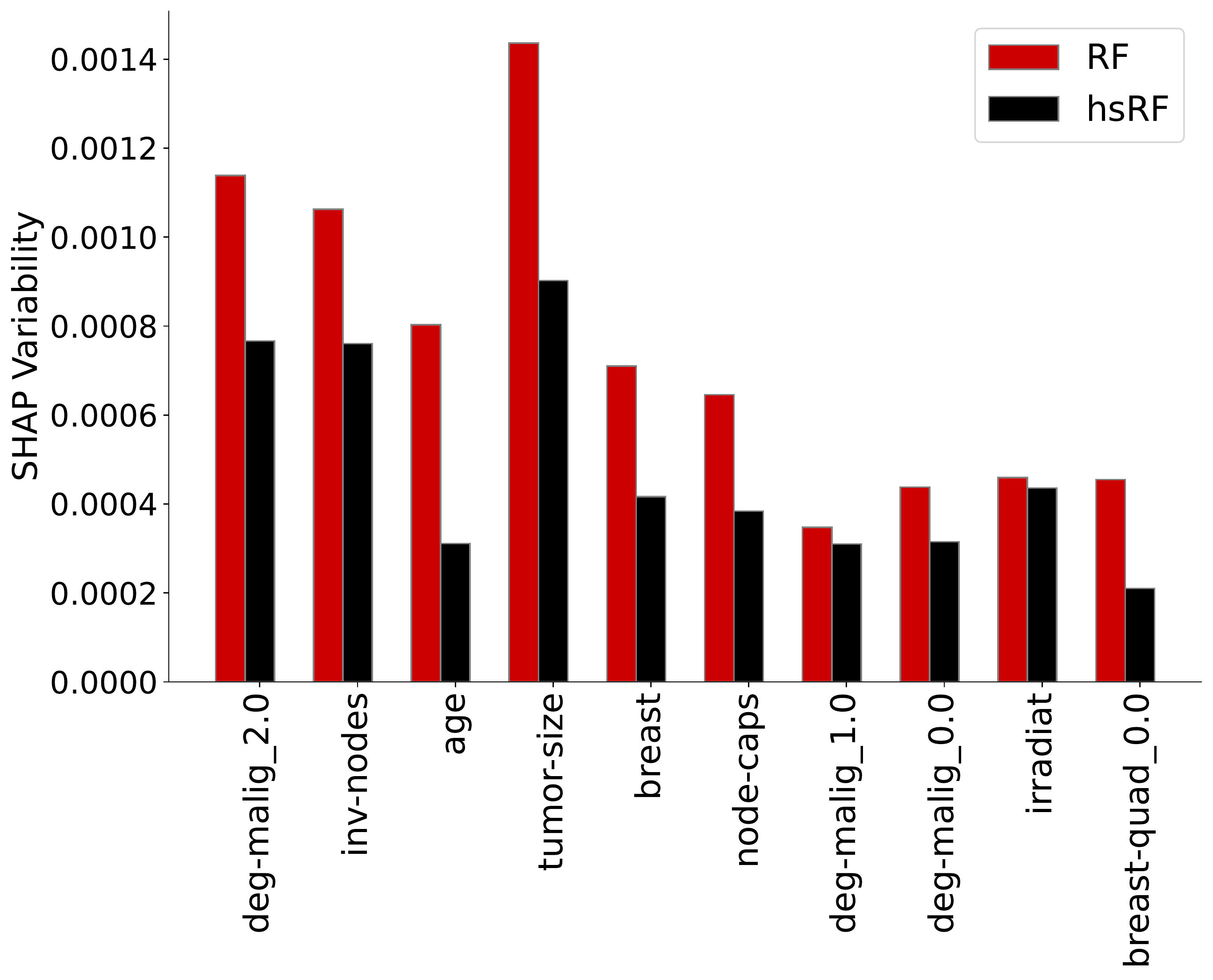}
    \caption{Comparison of SHAP plots learnt by Random Forests on the breast-cancer dataset before / after applying \method.
    \methods displays lower variability across different data perturbations (without dropping in predictive performance), indicating enhanced stability.}
    \label{fig:shap_variability_breast}
\end{figure}

\cref{fig:breast_cancer_shap_plot} shows an example investigating the SHAP scores across the breast-cancer dataset for a trained RF.
After applying \method, the SHAP values for each feature often cluster much more nicely.
Each cluster corresponds to a group of samples for which a feature contributes a similar amount to the predicted response, regardless of the value of other features.
As a result, each cluster can be interpreted without taking into account feature interactions, simplifying the user's interpretation.
Since \methods improves the model's predictive performance, the clustered SHAP scores suggest that \methods improves performance by regularizing some unnecessary interactions in the model, and makes the fitted function closer to being additive, which allows for simpler interpretations. \cref{subsec:shap_plots} shows the SHAP plots for all 8 classification datasets, showing that \methods consistently leads to more clustered SHAP values. 



\begin{figure*}[!b]
    \centering
    \includegraphics[width =0.7\textwidth]{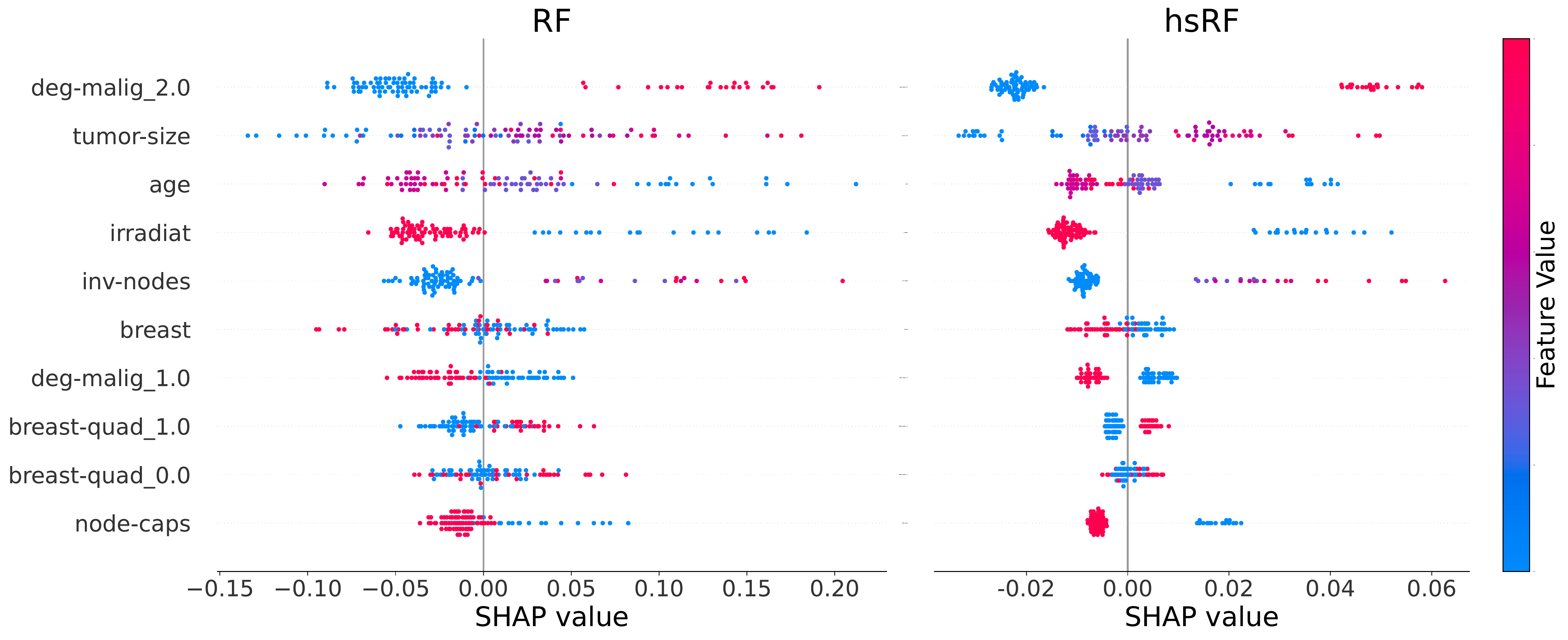}
    \vspace{-12pt}
    \caption{Comparison of SHAP values for RF on the breast-cancer dataset before/after applying \methods with features arranged by importance.
    Each point represents a single sample in the dataset.
    \methods leads to more clustered SHAP values for each feature, reflecting less heterogentity in the SHAP values of each feature.   }
    \label{fig:breast_cancer_shap_plot}
\end{figure*}

\section{Discussion}
\label{sec:discussion}

\methods is a fast yet powerful regularization procedure that can be applied to any tree-based model without changing its structure.
In our experiments, \methods never hurts prediction performance, and often leads to substantial gains in both predictive performance and interpretability. \methods is partly motivated by \citet{tan2021cautionary}'s non-minimax-optimal generalization lower bounds for decision trees that predict using average responses over each leaf, pointing to a possible limitation of averaging.\methods allows us to break this inferential barrier by pooling information from multiple leaves.

The work here naturally suggests many exciting future directions regarding the regularization of trees and RFs.
First, replacing ridge regression with more sophisticated linear methods such as lasso or elastic net can result in other promising regularization methods.
In addition, \methods could improve other structured rule-based models, such as rule lists and tree sums~\cite{tan2022fast}.

Meanwhile, we have only scratched the surface of the relationships between regularization, robustness, and interpretability.
Indeed, the connection between \methods and ridge regression suggests the relevance of \citet{donhauser2021interpolation}'s observations that ridge regression helps with robust generalization.
Furthermore, we have seen that the simpler and smoother decision boundaries resulting from \methods are more likely to generalize well.
Hence, we conjecture that \methods could improve the predictive performance of RF with respect to covariate shift.
Moreover, the improved clustering and stability of SHAP scores after applying \methods suggest that regularization via \methods could improve the identification of important features, even when using alternative methods such as MDI.
\section{Acknowledgements}
We gratefully acknowledge partial support from 
NSF TRIPODS Grant 1740855, DMS-1613002, 1953191, 2015341, IIS 1741340, ONR grant N00014-17-1-2176, the Center for Science of Information (CSoI), an NSF Science and Technology Center, under grant
agreement CCF-0939370, NSF grant 2023505 on
Collaborative Research: Foundations of Data Science Institute (FODSI),
the NSF and the Simons Foundation for the Collaboration on the Theoretical Foundations of Deep Learning through awards DMS-2031883 and 814639, and a grant from the Weill Neurohub.

{
    \small
    \bibliographystyle{icml2021}
    \bibliography{refs}
}
\appendix
\setcounter{figure}{0}
\setcounter{section}{0}

\renewcommand{\thetable}{S\arabic{table}}
\renewcommand{\thefigure}{S\arabic{figure}}
\renewcommand{\thesection}{S\arabic{section}} 

\newpage
\onecolumn

\begin{center}
    \Huge
    Supplement 
\end{center}
\label{sec:supp}

\section{Data details}
\label{sec:data_supp}

\begin{table}[htbp]
    \footnotesize
    \centering
    \begin{tabular}{lrrrrr}
\toprule
                                         Name &  Samples &  Features &  Class 0 &  Class 1 &  Majority class \% \\
\midrule
                                        Heart &      270 &        15 &      150 &      120 &              55.6 \\
                                Breast cancer &      277 &        17 &      196 &       81 &              70.8 \\
                                     Haberman &      306 &         3 &       81 &      225 &              73.5 \\
Ionosphere \cite{sigillito1989classification} &      351 &        34 &      126 &      225 &              64.1 \\
               Diabetes \cite{smith1988using} &      768 &         8 &      500 &      268 &              65.1 \\
                                German credit &     1000 &        20 &      300 &      700 &              70.0 \\
           Juvenile \cite{osofsky1997effects} &     3640 &       286 &     3153 &      487 &              86.6 \\
                                   Recidivism &     6172 &        20 &     3182 &     2990 &              51.6 \\
\bottomrule
\end{tabular}

    \caption{Classification datasets (extended).}
    \label{tab:data_classification_full}
\end{table}

\begin{table}[htbp]
    \footnotesize
    \centering
    \begin{tabular}{lrrrrrr}
\toprule
                                     Name &  Samples &  Features &  Mean &   Std &  Min &    Max \\
\midrule
    Friedman1 \cite{friedman1991multivariate} &      200 &        10 &  14.3 &   4.6 &  2.1 &   25.2 \\
    Friedman3 \cite{friedman1991multivariate} &      200 &         4 &   1.3 &   0.4 &  0.0 &    1.6 \\
    Diabetes  \cite{efron2004least} &      442 &        10 & 152.1 &  77.0 & 25.0 &  346.0 \\
    Geographical music &     1059 &       117 &   0.0 &   1.0 & -1.5 &    5.9 \\
    Red wine &     1599 &        11 &   5.6 &   0.8 &  3.0 &    8.0 \\
    Abalone \cite{nash1994population} &     4177 &         8 &   9.9 &   3.2 &  1.0 &   29.0 \\
    Satellite image \cite{romano2020pmlb} &     6435 &        36 &   3.7 &   2.2 &  1.0 &    7.0 \\
    California housing \cite{pace1997sparse} &    20640 &         8 &   2.1 &   1.2 &  0.1 &    5.0 \\

\bottomrule
\end{tabular}
    \caption{Regression datasets (extended).}
    \label{tab:data_regr_full}
\end{table}
\section{Additional simulations}
\label{sec:sim_supp}
In this section, we provide experimental details for our bias-variance trade-off simulation in \cref{fig:bias_variance_linear_gaussian_uniform} as well as other simulations settings that we display below. Note that we reproduce \cref{fig:bias_variance_linear_gaussian_uniform} in \cref{fig:linear_model_bias_variance} for ease of the reader. \\
\textbf{Experimental Design:} For \cref{fig:linear_model_bias_variance}, we simulate data via data via a linear model $y = \sum_{i=1}^{10}x_{i} + \epsilon$ with $\bx \sim \text{Unif}[0,1]^{50}$ and $\epsilon$ being drawn from a gaussian and laplacian distribution for the left and right panel respectively with $\sigma^2 = 0.01$. In \cref{fig:polynomial_model_bias_variance}, we simulate responses from a linear model with pairwise interactions  $y = \sum_{i=1}^{10}x_{i} + x_{1}x_{2} + x_{5}x_{6} + x_{11}x_{12} + \epsilon$. We used a training and test set of 500 samples, and computed the bias and variance over 100 independent draws of the training set. In all of our experiments, we fit CART and applied \methods post-hoc while varying the number of leaves. The regularization parameter $\lambda$ was chosen on the training set via 3-fold cross-validation. 
\begin{figure}[H]
    \centering 
    \begin{tabular}{cc}
         \includegraphics[width=0.4\textwidth]{figs/shrinkage_bias_variance_linear_gaussian_uniform.pdf} & \includegraphics[width=0.4\textwidth]{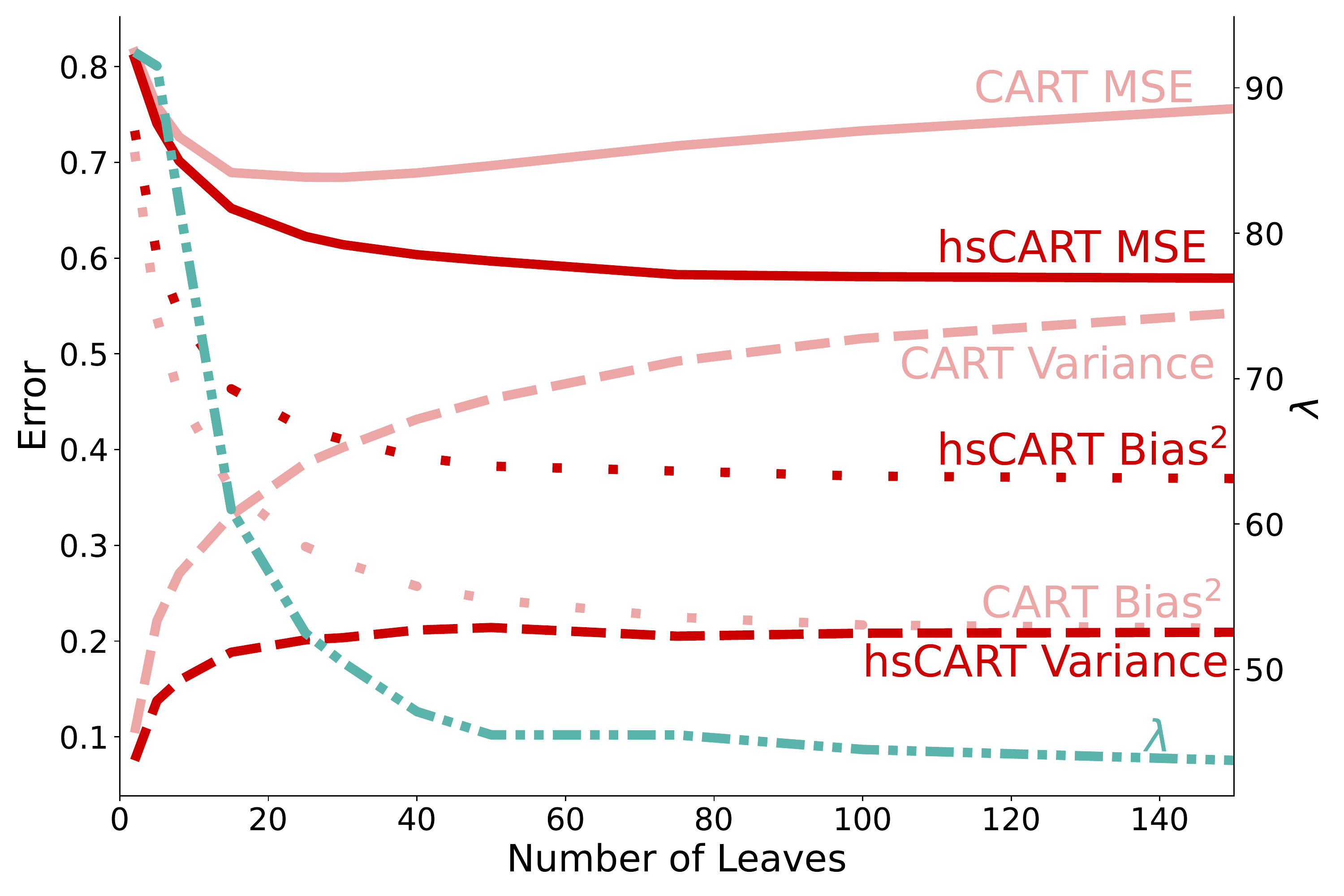}
    \end{tabular}
    \caption{CART (HTS) test error rate for a linear model with gaussian (left)/laplacian (right) noise stays low as the number of leaves increases, whereas CART test error displays U-shaped bias-variance tradeoff as model complexity increases. }
    \label{fig:linear_model_bias_variance}
\end{figure}
\begin{figure}[H]
    \centering
    \begin{tabular}{cc}
         \includegraphics[width=0.4\textwidth]{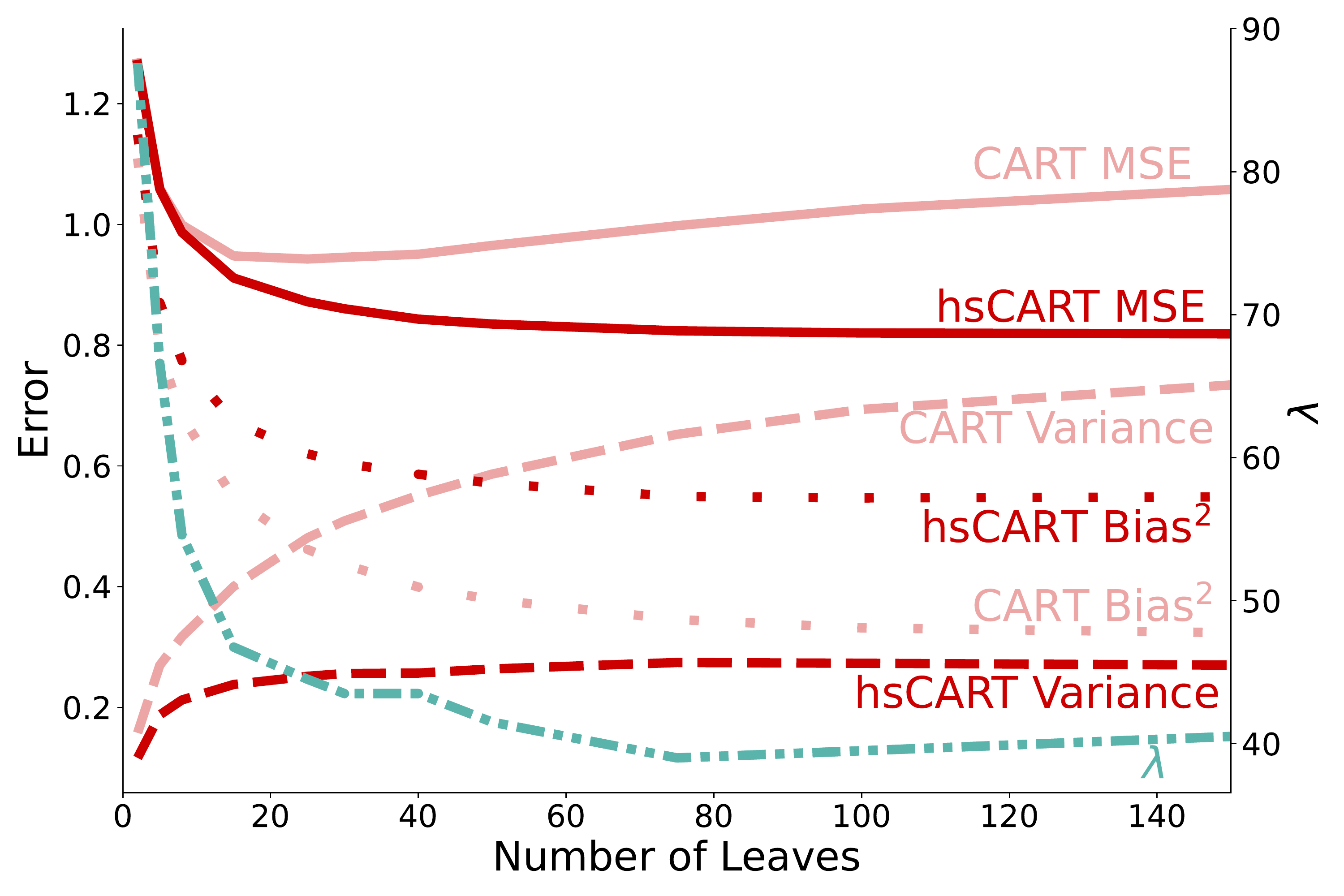} & \includegraphics[width=0.4\textwidth]{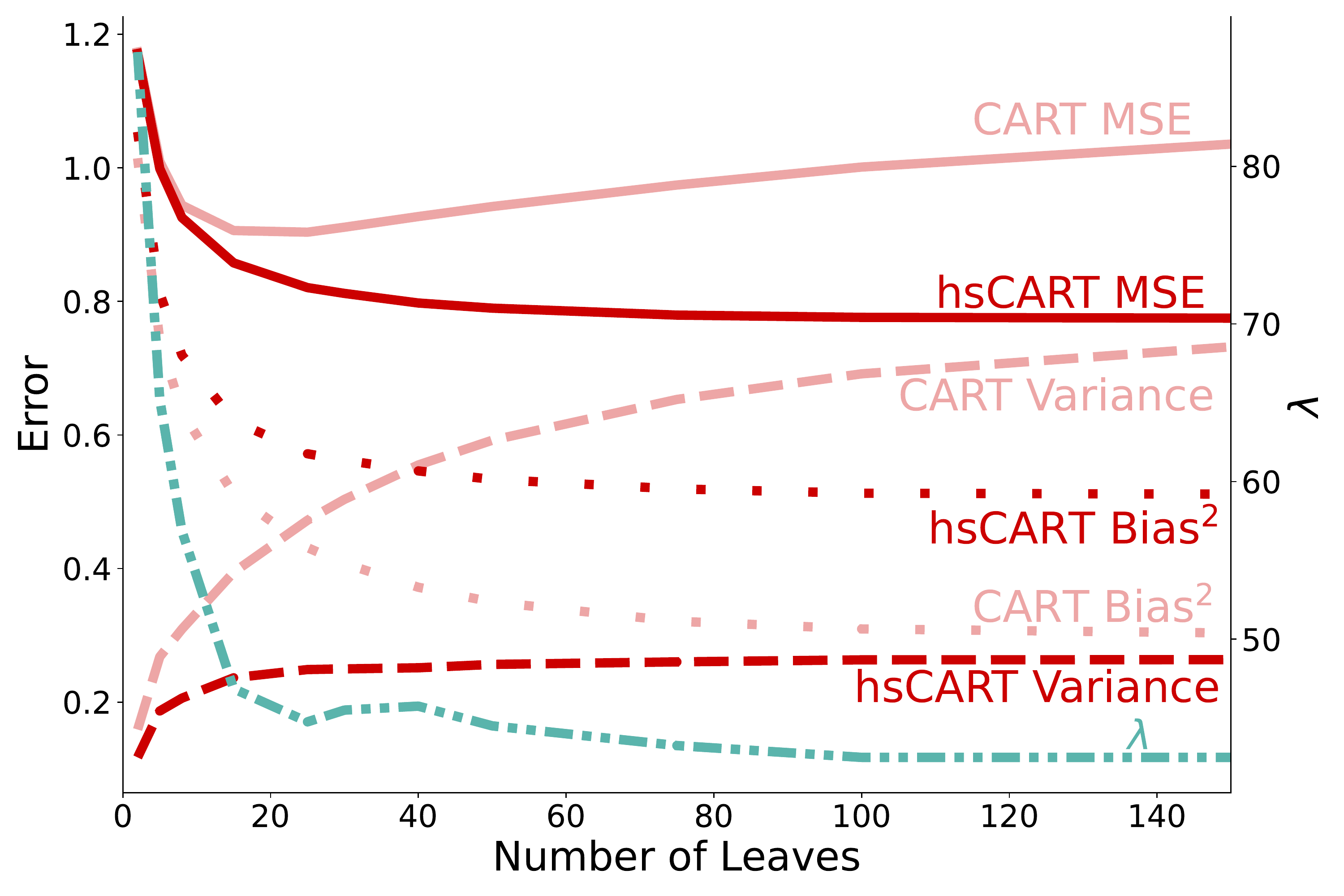}
    \end{tabular}
    \caption{CART (HTS) test error rate for a linear model with pairwise interactions and gaussian (left)/laplacian (right) noise stays low as the number of leaves increases, whereas CART test error displays U-shaped bias-variance tradeoff as model complexity increases.}
    \label{fig:polynomial_model_bias_variance}
\end{figure}

\section{Experimental results}
\label{sec:experimental_supp}
\subsection{C4.5 classification performance improves after adding \methods}
In this section, we display results on the classification data sets \cref{tab:datasets} for C4.5 before/after applying \methods. The experimental details are provided in \cref{subsec:performance}
\begin{figure}[H]
    \centering
    \includegraphics[width=1.0\textwidth]{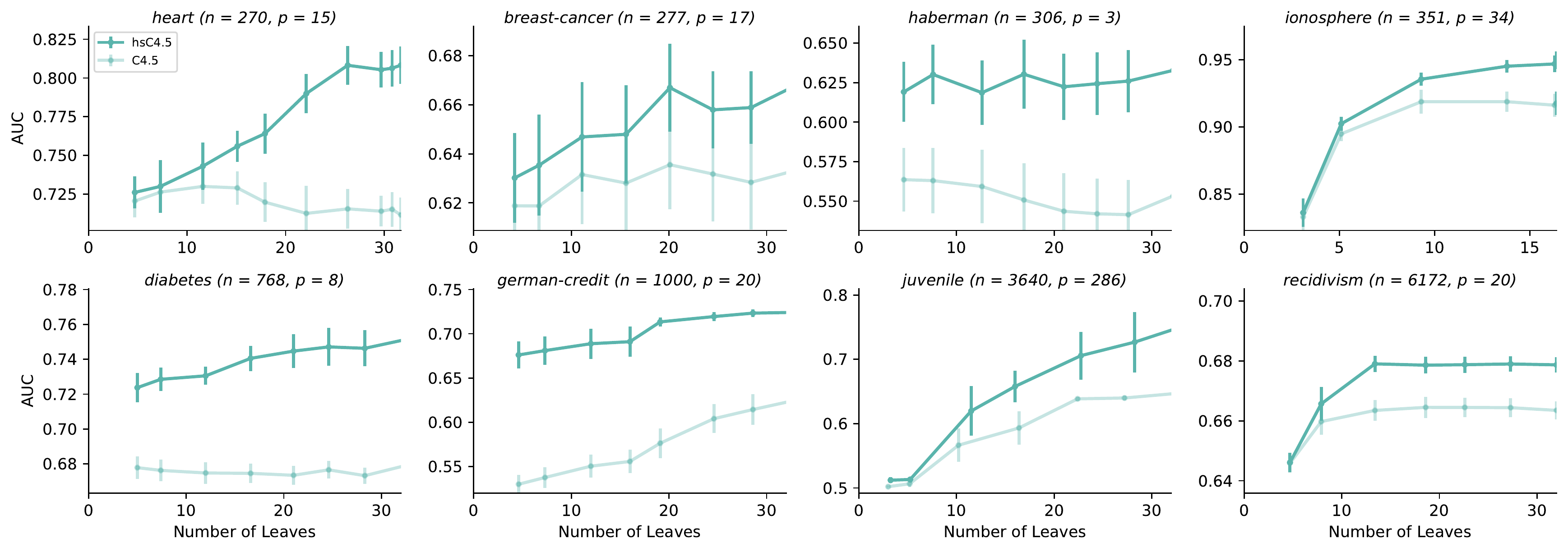}
    \caption{Classification results for C4.5. \methods significantly improves the performance  even with very few leaves. Errors bars show standard error of the mean computed over 10 random data splits}
    \label{fig:c45_classification_results}
\end{figure}

\subsection{GOSDT classification performance after adding \methods}
\label{subsec:gosdt}
Here, we compare \methods to GOSDT.
We use the best tree found by GOSDT within one hour, since GOSDT fails to complete for many datasets within a reasonable timeframe (24 hours).
To improve performance, we use a single pre-processing step of selecting the 5 most important features for each dataset (using RF feature importance).
This step ensures that the resulting tree, fitted within a time limit of one hour, is not extremely shallow (number of leaves $\leq 3$) and that GOSDT does not fail to due to a memory error.

For GOSDT, we are unable to tune $m$ exactly, and instead tune its cost-complexity regularization parameter.
\ref{fig:gosdt} shows the results, where we omit all the datasets for which all the resulting trees are extremely shallow. 
We note that GOSDT only outputs the root node for many data sets with regularization parameter larger than 0. 

\begin{figure}[H]
    \centering
    \includegraphics[width=0.6\textwidth]{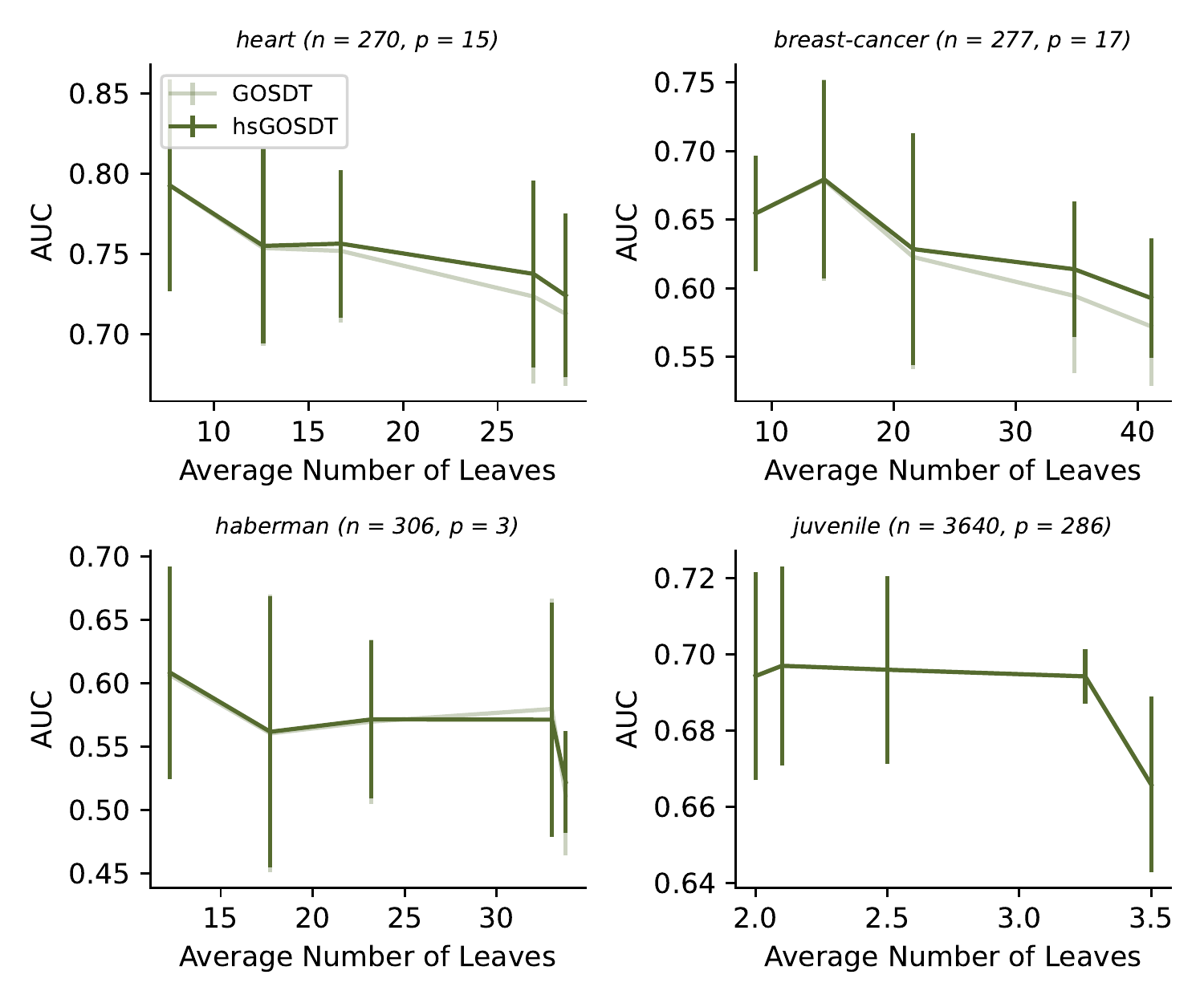}
    \caption{$\method$ performance across selected datasets.
    \methods is as good as and sometimes slightly improves GOSDT.}
    \label{fig:gosdt}
\end{figure}

\subsection{Leaf Based Results for Regression Datasets}
\label{subsec:lbs_regression}
We compare the performance of XG-Boost's LBS to \methods on the regression datasets in \cref{tab:datasets} with experimental details provided in \cref{subsec:lbs_classification}
\begin{figure}[H]
\includegraphics[width=1.0\textwidth]{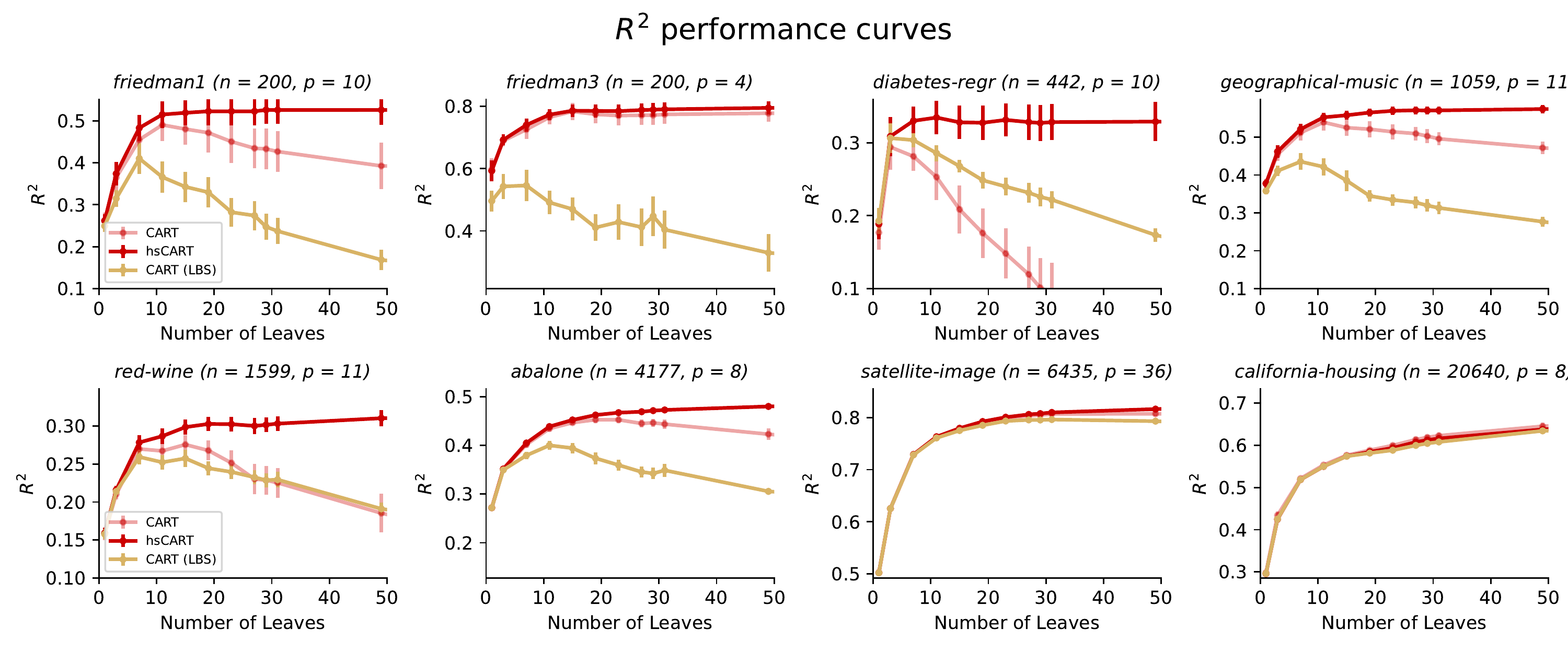}{}
    \caption{\methods performs better than LBS for regression datasets in \cref{tab:datasets}. Error bars show the standard error of the mean computed over 10 random data splits. }
    \label{fig:regression_lbs}
\end{figure}

\subsection{Experimental Results for Deeper Trees}
\label{subsec:deep_tree_supp_results}
In this section, we investigate the performance of decision tree algorithms: (i) CART, (ii) C4.5, (iii) CART with CCP  before/after applying \methods while growing deeper trees for all the data sets in \cref{tab:datasets} and \cref{tab:datasets}. We observe that the performance of the baselines drop dramatically while the performance of \methods continuously improves as the ratio of number of leaves to samples increases. This is explained by our of connection of \methods with ridge regression discussed on a supervised basis in \ref{sec:theory}.

\begin{figure}[H]
    \vspace{-8pt}
    \aboverulesep = 0.0mm
    \belowrulesep = 0.0mm
    \centering
    \begin{tabular}{lc}
    \begin{minipage}{0.1cm}
    \rotatebox[origin=c]{90}{\large \textbf{(A)} Classification}
    \end{minipage}%
    & \raisebox{\dimexpr-.5\height-1em}{\includegraphics[width=0.91\textwidth]{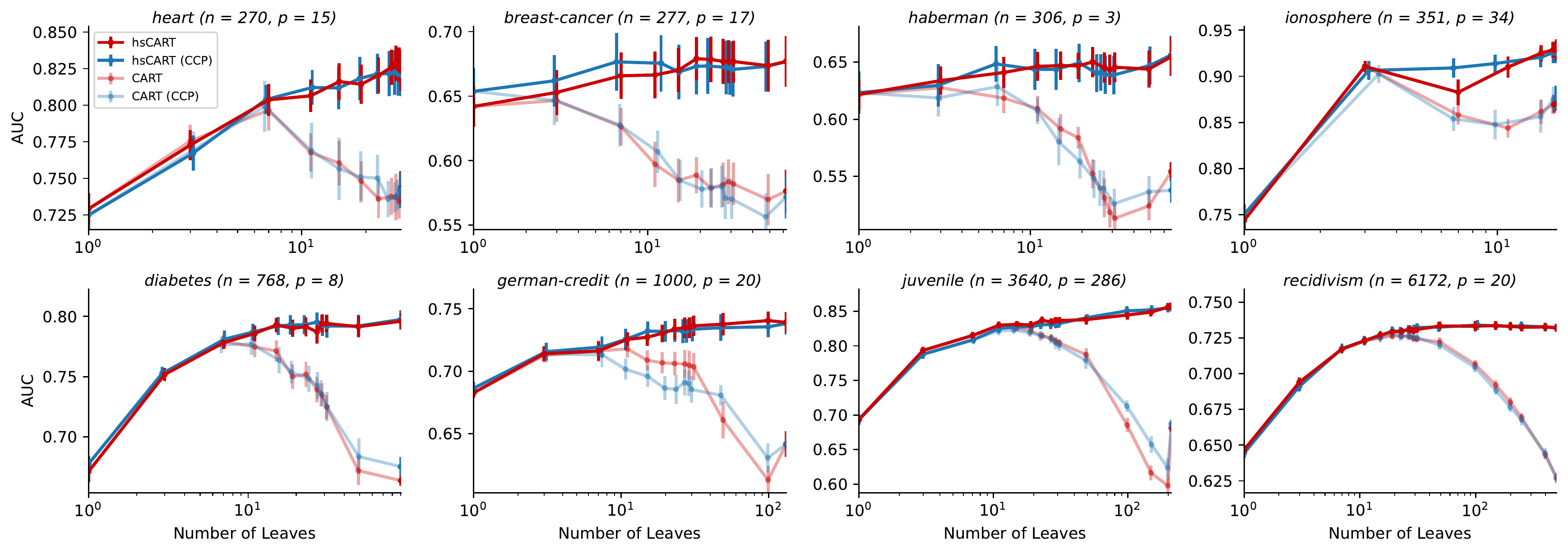}}\\
    \midrule
    \begin{minipage}{0.1cm}
    \rotatebox[origin=c]{90}{\large \textbf{(B)} Regression}
    \end{minipage}
    & \raisebox{\dimexpr-.5\height-1em}{\includegraphics[width=0.91\textwidth]{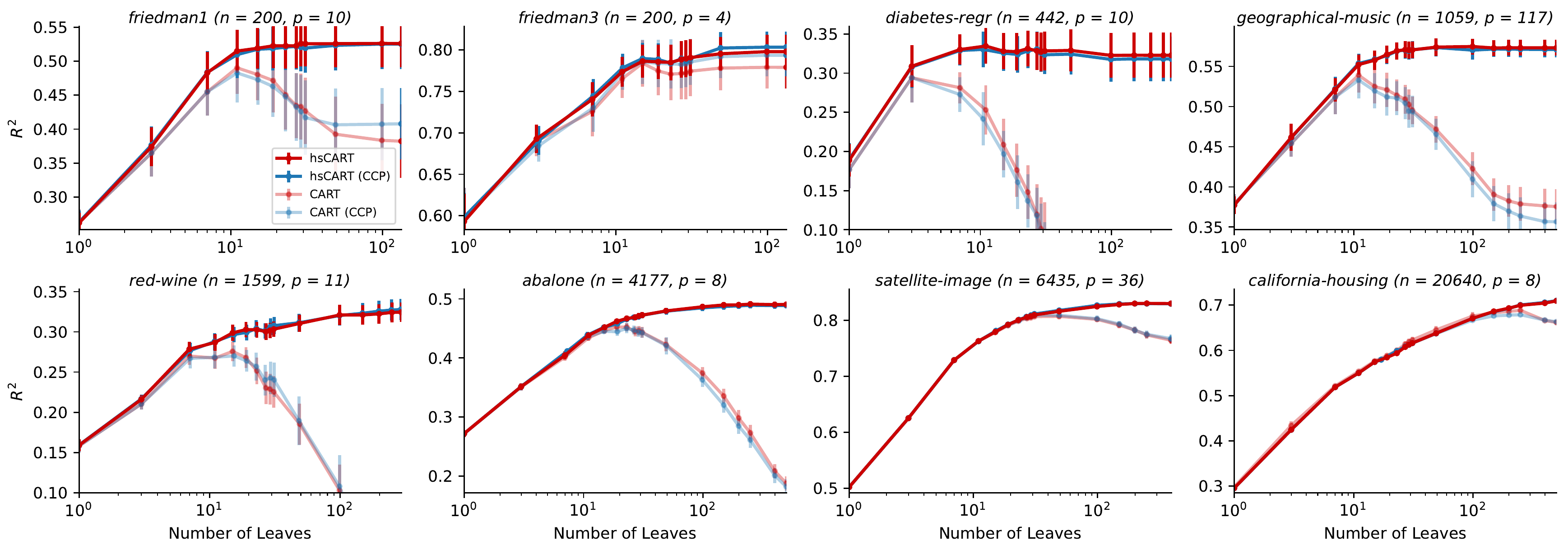}}\\
    \end{tabular}
    \vspace{-5pt}
    \caption{\methods is able to improve \textbf{(A)} Classification and \textbf{(B)} regression predictive performance for both baselines: CART and CART with CCP for larger data-sets when using deeper trees by preventing overfitting.
    As shown by our connection to Ridge regression in \cref{sec:theory},  \methods is beneficial when the ratio of number of leaves to samples is large. Error bars show standard error of the mean computer over 10 random data splits.}
    \label{fig:deep_performance}
\end{figure}

\subsection{BART}\label{sec:bart}
BART is a Bayesian ``sum of trees'' model, in particular the tree structure and leaf values are considered to be random variables, following a prior distribution. Conditioned on these random variables and the query point $\bx$, the response variable $y$ is assumed to follow a Gaussian distribution. The posterior distribution of the tree structures and leaf values is used for inference, where samples are generated via backfitting MCMC algorithm. Every single sample is a sum of trees function, and the final prediction is typically the average prediction over many such samples.

The tree growing algorithm in BART is fundamentally different from CART, in that it considers the mean response function, to be a random variable. In BART, the tree functions used for inference are samples from a posterior distribution, whereas in CART a tree is learned to minimize a certain objective. Another key difference is that BART shrinks the predictions to the grand mean, by centering the response variable and selecting a Gaussian prior with mean 0 for the leaf predictions. Therefore BART provides an interesting study case, both in terms a different tree growing method and a an alternative shrinkage method.

Our experimental setup is as follows:

First, a BART model is fitted using a single MCMC chain and otherwise default parameters using \textit{bartpy} package. Then, to control for model complexity (in terms of the number of leaves), a single sample from the posterior distribution is obtained, from which the BART predicted values and tree structure/s are extracted. Using the tree structure/s we impute leaf values and then apply \method, to obtain the  hsBART prediction.

\begin{figure}[H]
    \vspace{-8pt}
    \aboverulesep = 0.0mm
    \belowrulesep = 0.0mm
    \centering
    \begin{tabular}{lc}
    \begin{minipage}{0.1cm}
    \rotatebox[origin=c]{90}{\large \textbf{(A)} Classification}
    \end{minipage}%
    & \raisebox{\dimexpr-.5\height-1em}{\includegraphics[width=0.91\textwidth]{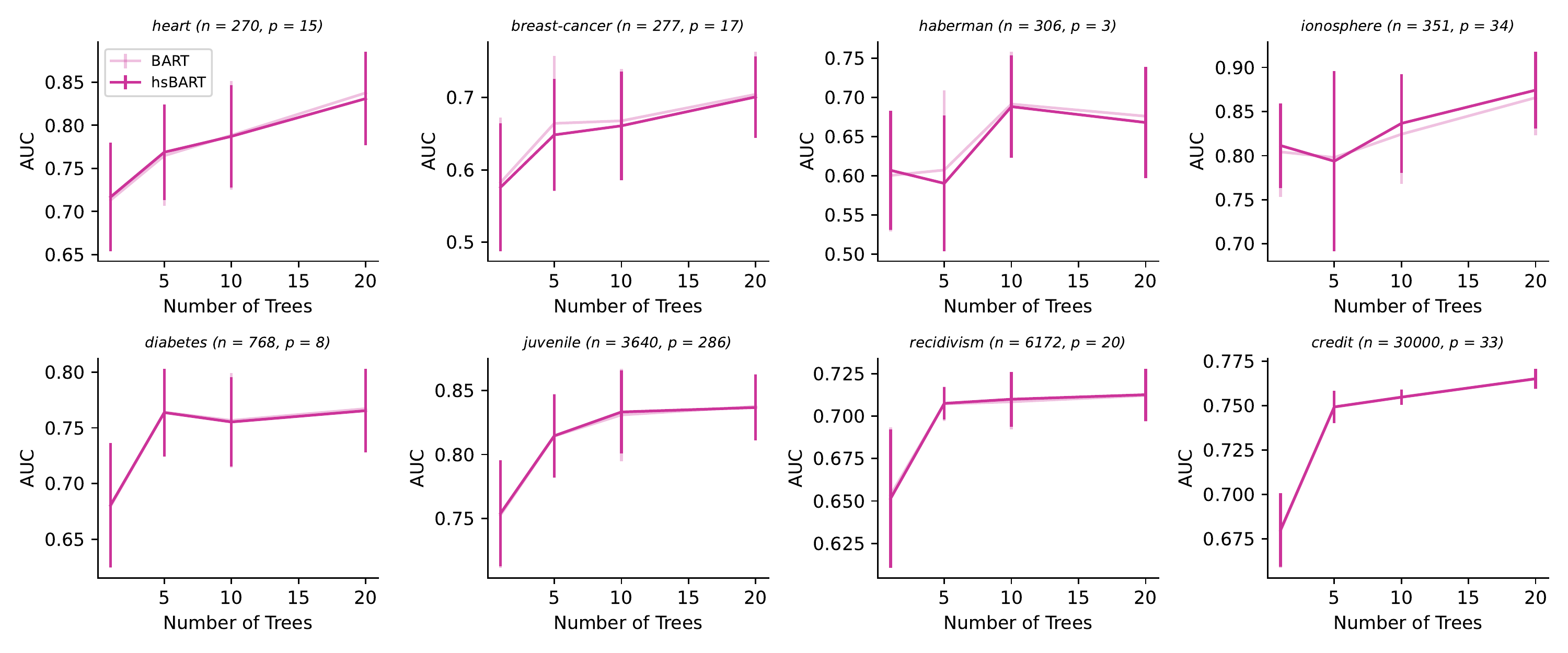}}\\
    \midrule
    \begin{minipage}{0.1cm}
    \rotatebox[origin=c]{90}{\large \textbf{(B)} Regression}
    \end{minipage}
    & \raisebox{\dimexpr-.5\height-1em}{\includegraphics[width=0.91\textwidth]{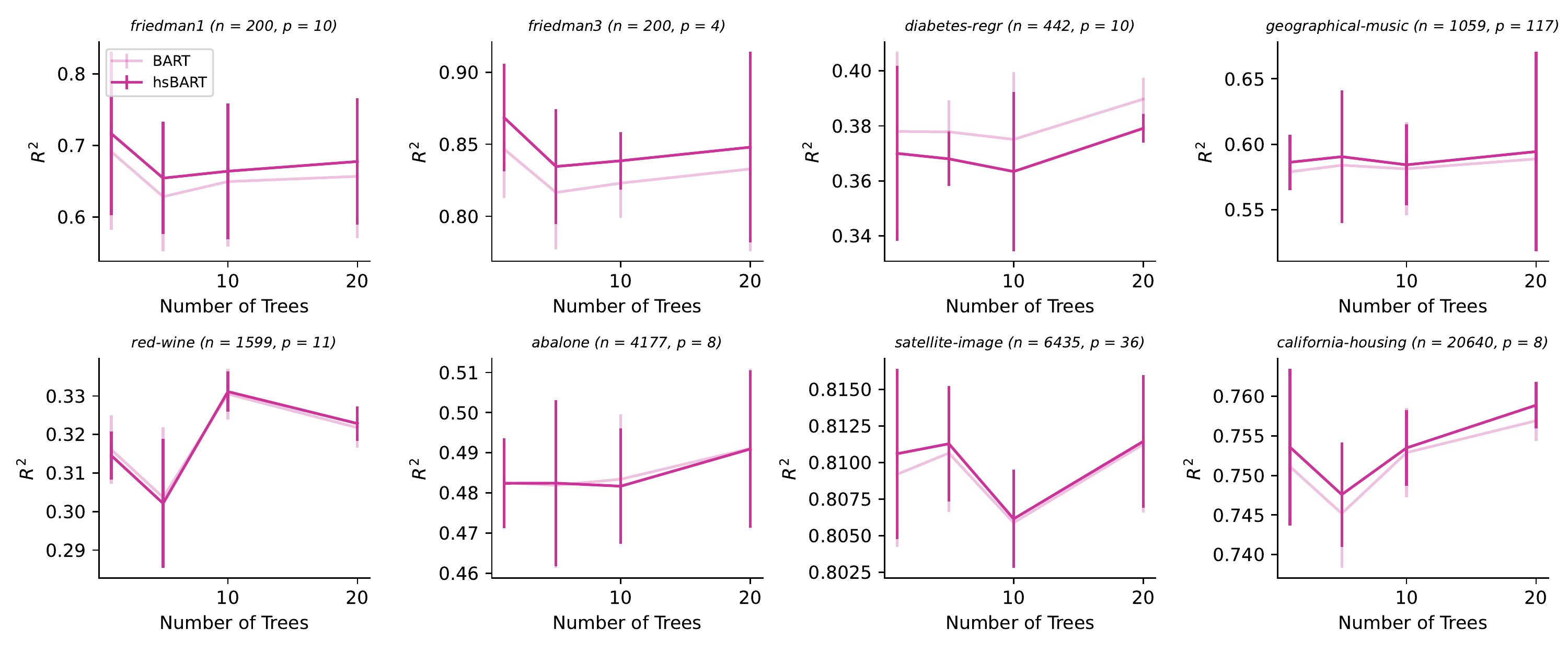}}\\
    \end{tabular}
    \vspace{-5pt}
    \caption{\methodlong~(solid lines) applied post-hoc to BART trees, shows similar performance for classification 
    \textbf{(A)}, and regression  \textbf{(B)} datasets in terms of AUC and $R^2$ respectively }
    \label{fig:performance_curves_bart}
\end{figure}

We hypothesize that the remarkably similar performance, for both regression and classification datasets, is due to BART's tendency to construct relatively small trees, when the recommended parameters are used \cite{chipman2010bart}.

\section{Experimental results for interpretability}

\subsection{Decision boundaries}
\label{subsec:decision_boundary_supp}
In this section, we plot the different decision boundaries learnt by a Random Forest model with 50 trees before/after applying \methods for all the data sets built on the two most important covariates as measured by random forest feature importance. For instance,  \cref{fig:decision_bounds_comp} $\textbf{(D)}$ visualizes the decision boundary for the recidivism data set using the covariates: \textit{age} and \textit{prior counts}.

\begin{figure}[H]
    \vspace{-8pt}
    \aboverulesep = 0.0mm
    \belowrulesep = 0.0mm
    \centering
    \begin{tabular}{lc}
    \begin{minipage}{0.1cm}
    \rotatebox[origin=c]{90}{\large \textbf{(A)} heart}
    \end{minipage}%
    & \raisebox{\dimexpr-.1\height-0.5em}{    \begin{tabular}{cc}
         \includegraphics[width=0.35\textwidth]{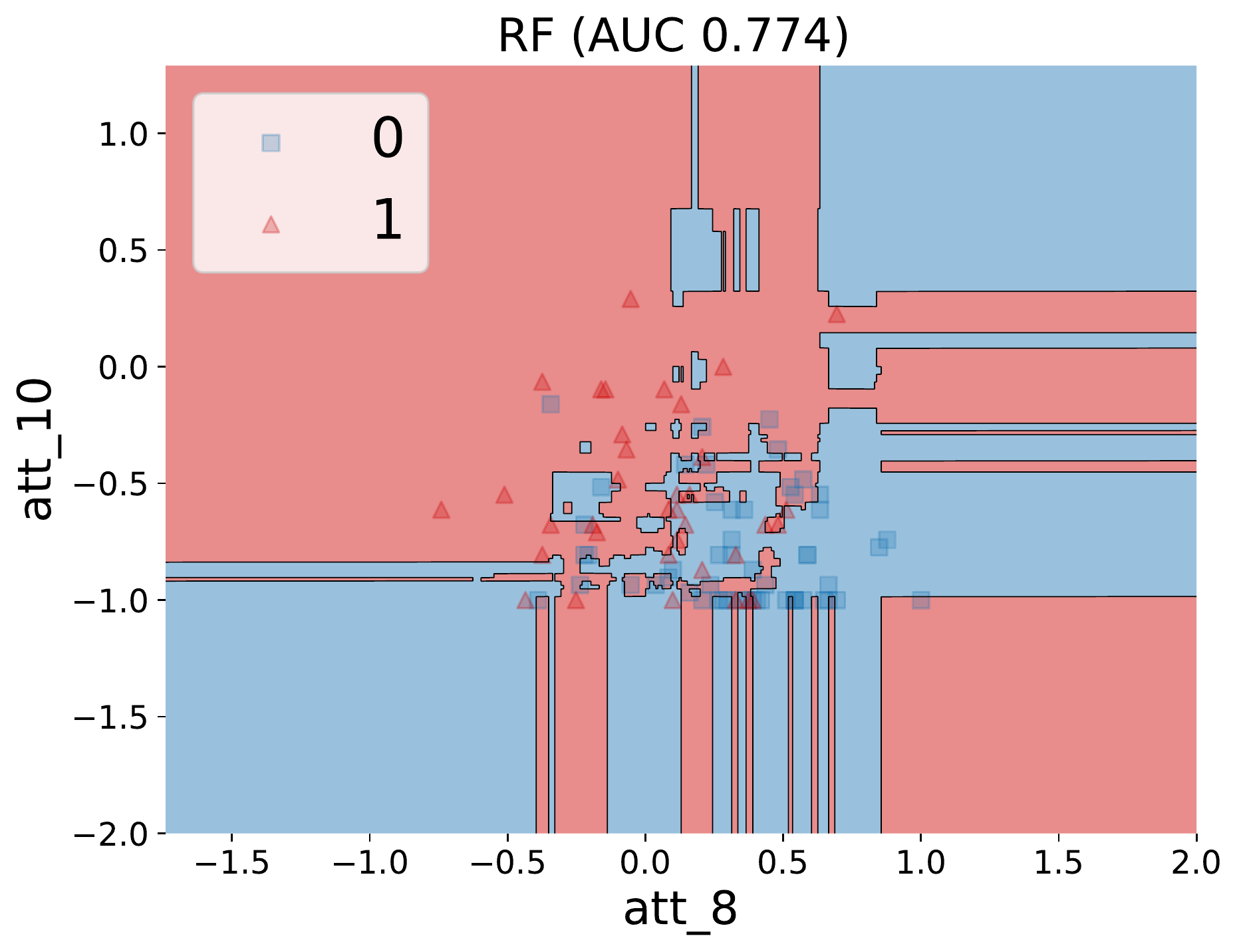} & \includegraphics[width=0.35\textwidth]{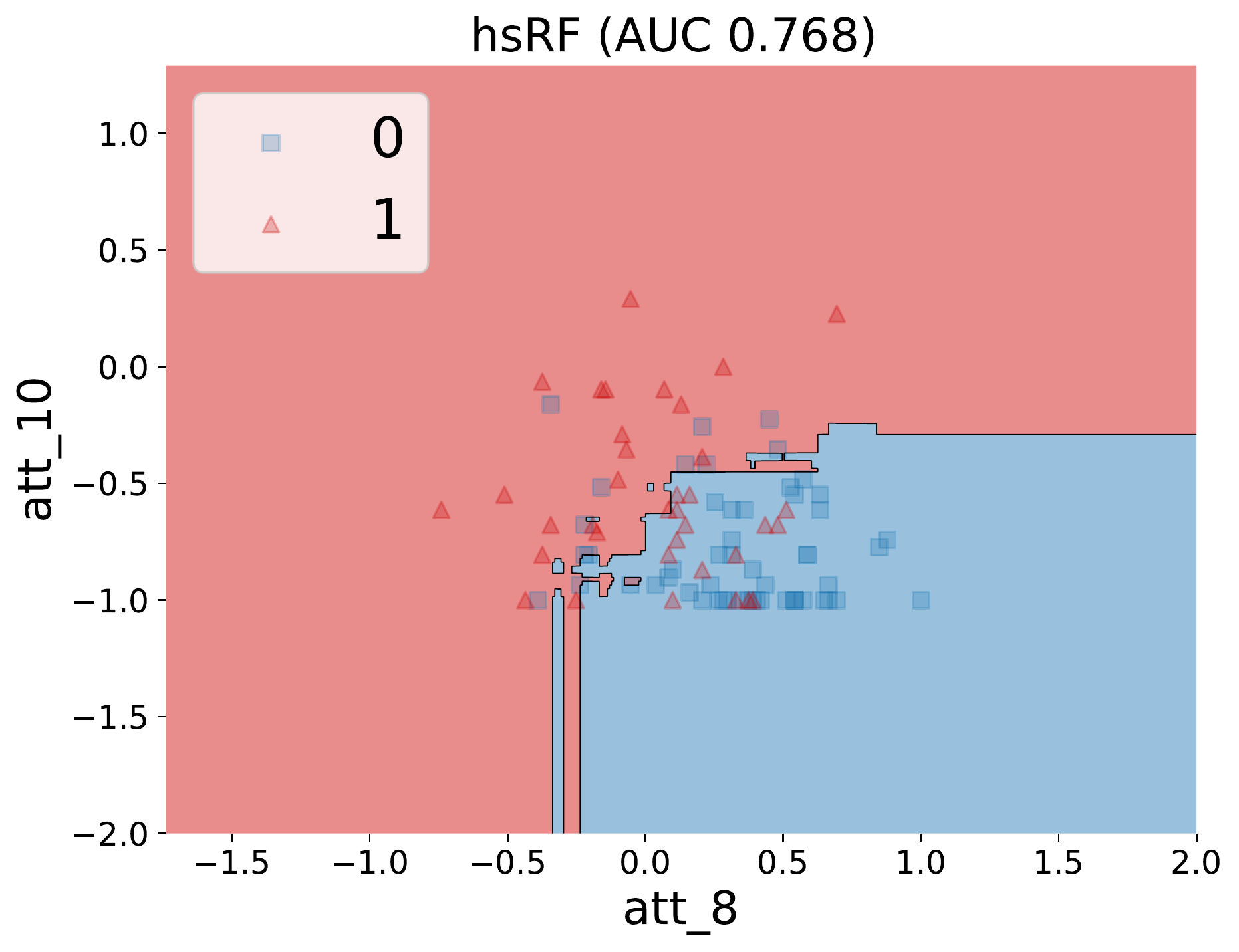}
    \end{tabular}
}\\
    \begin{minipage}{0.1cm}
    \rotatebox[origin=c]{90}{\large \textbf{(B)} breast cancer}
    \end{minipage}
    & \raisebox{\dimexpr-.1\height-0.5em}{    \begin{tabular}{cc}
         \includegraphics[width=0.35\textwidth]{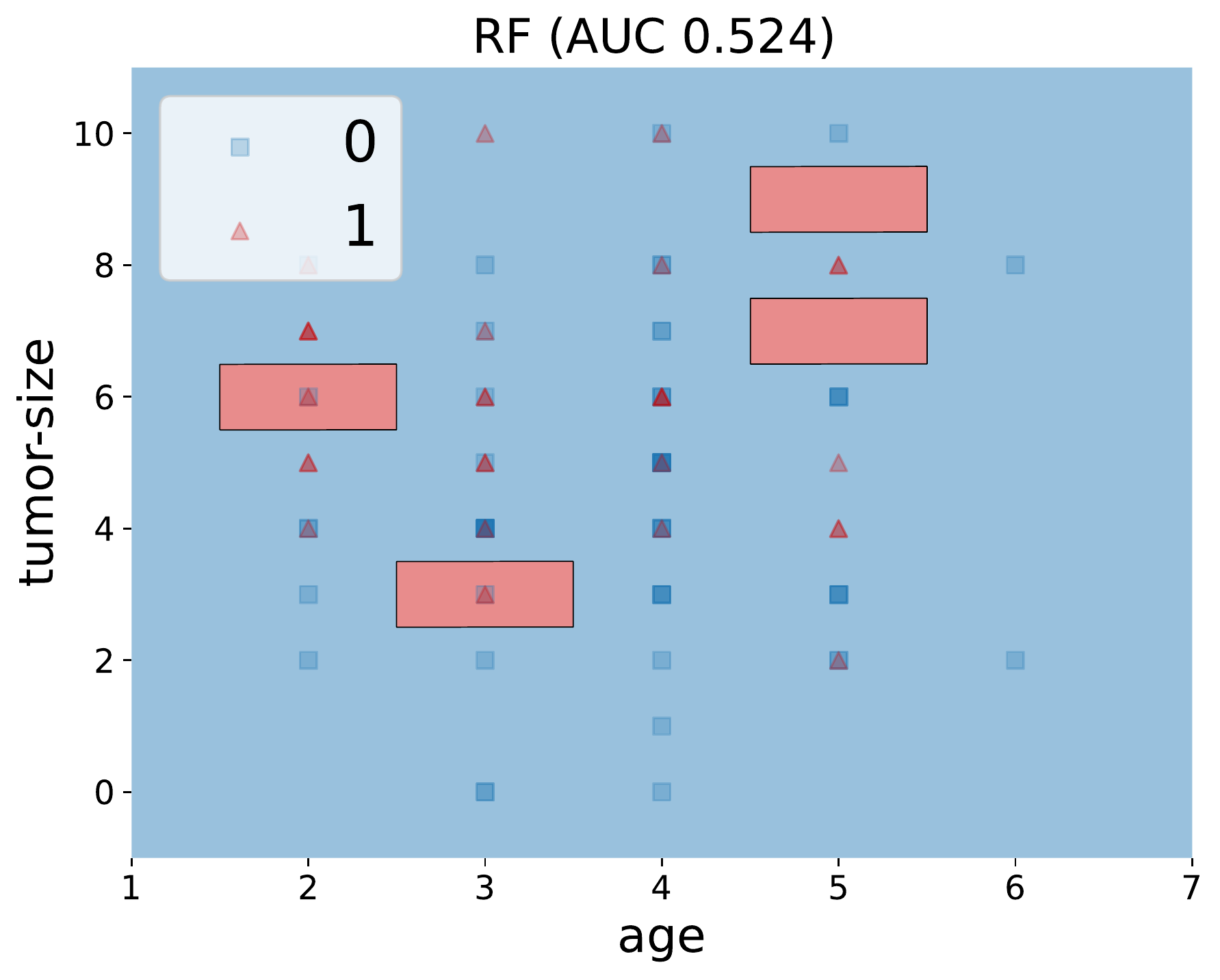} & \includegraphics[width=0.35\textwidth]{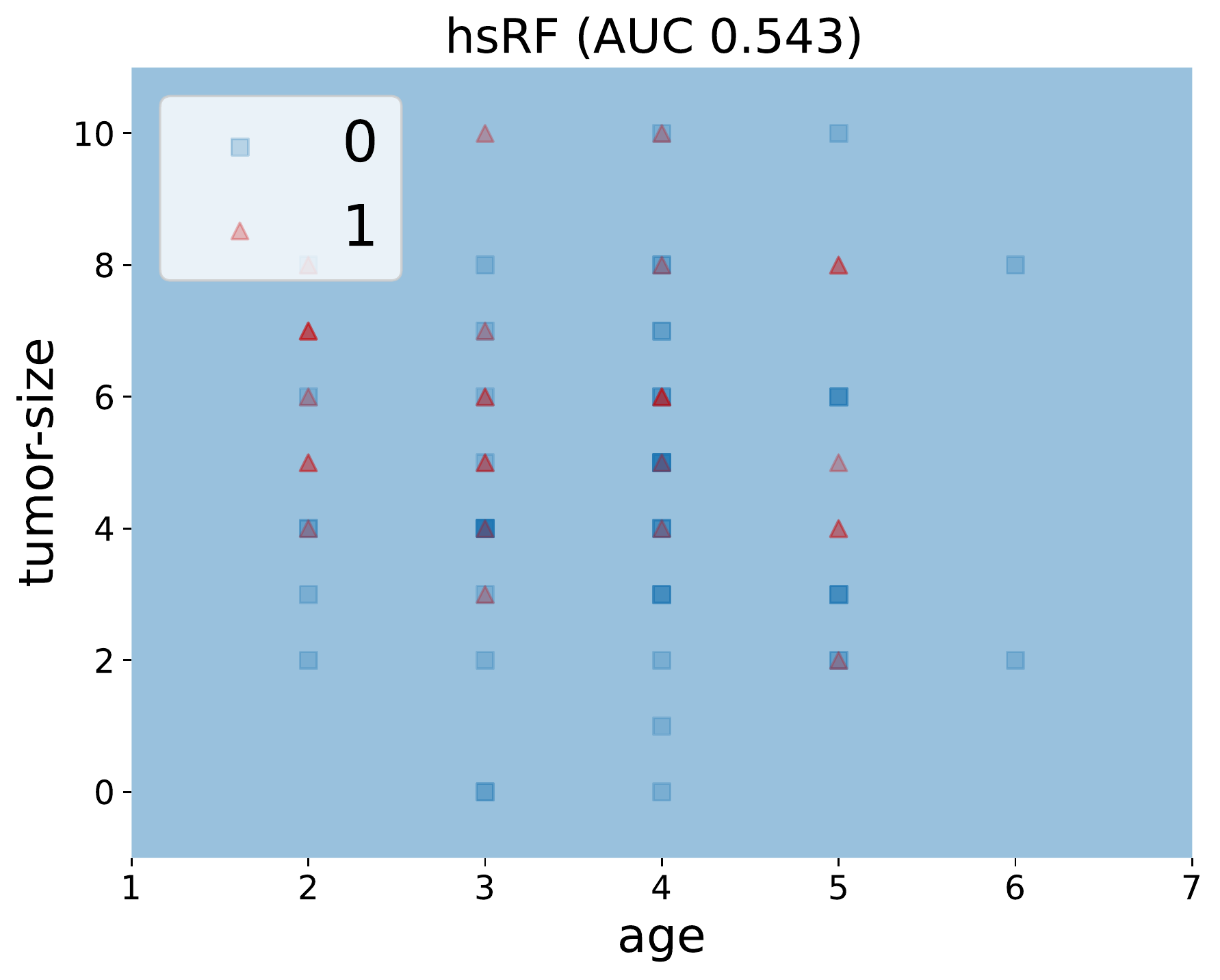}
    \end{tabular}
}\\
        \begin{minipage}{0.1cm}
    \rotatebox[origin=c]{90}{\large \textbf{(C)} haberman}
    \end{minipage}
    & \raisebox{\dimexpr-.1\height-0.5em}{    \begin{tabular}{cc}
          \includegraphics[width=0.35\textwidth]{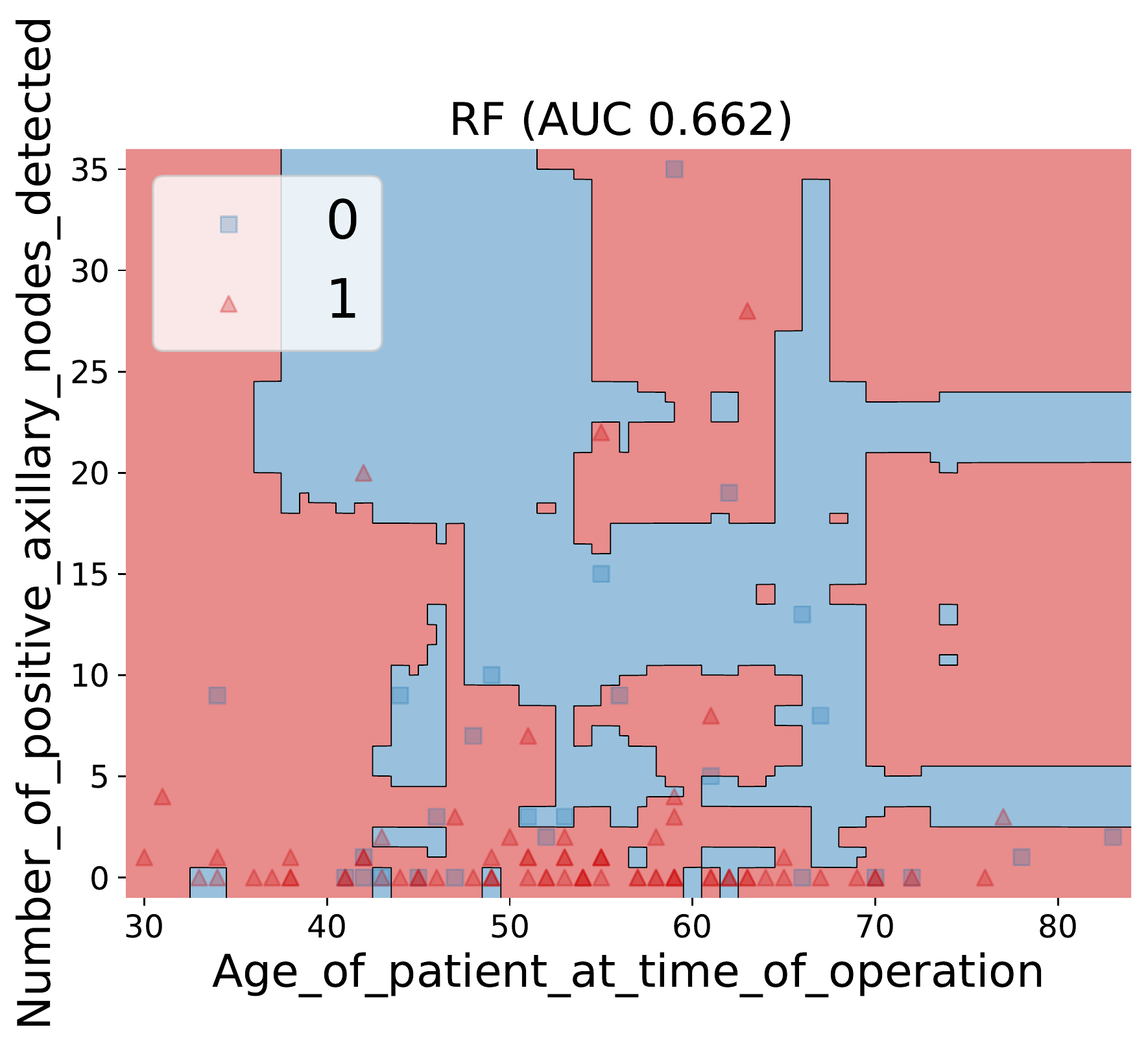} & \includegraphics[width=0.35\textwidth]{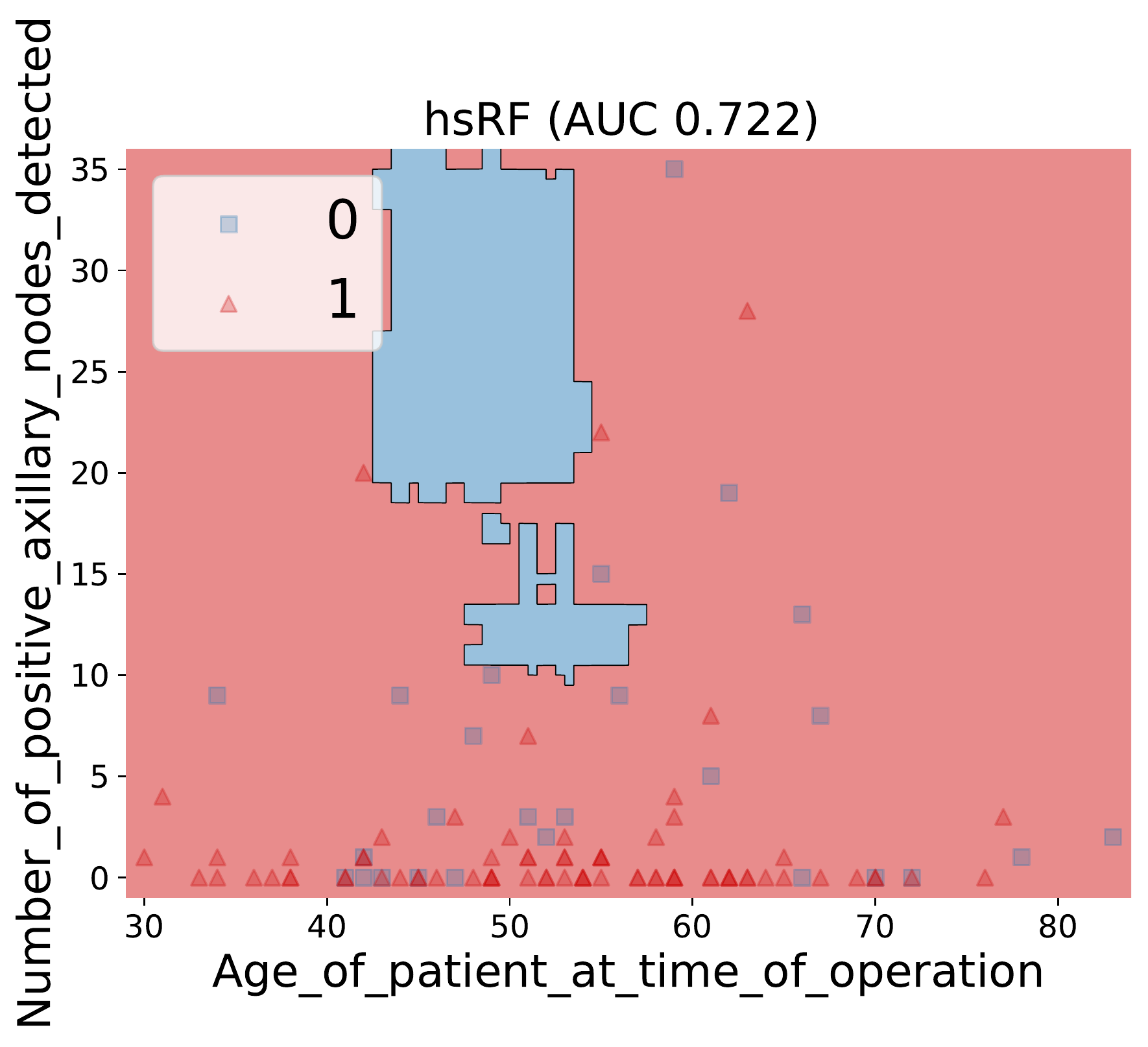}
     \end{tabular}}\\
        \begin{minipage}{0.1cm}
    \rotatebox[origin=c]{90}{\large \textbf{(D)} ionosphere}
    \end{minipage}
    & \raisebox{\dimexpr-.1\height-0.5em}{     \begin{tabular}{cc}
         \includegraphics[width=0.35\textwidth]{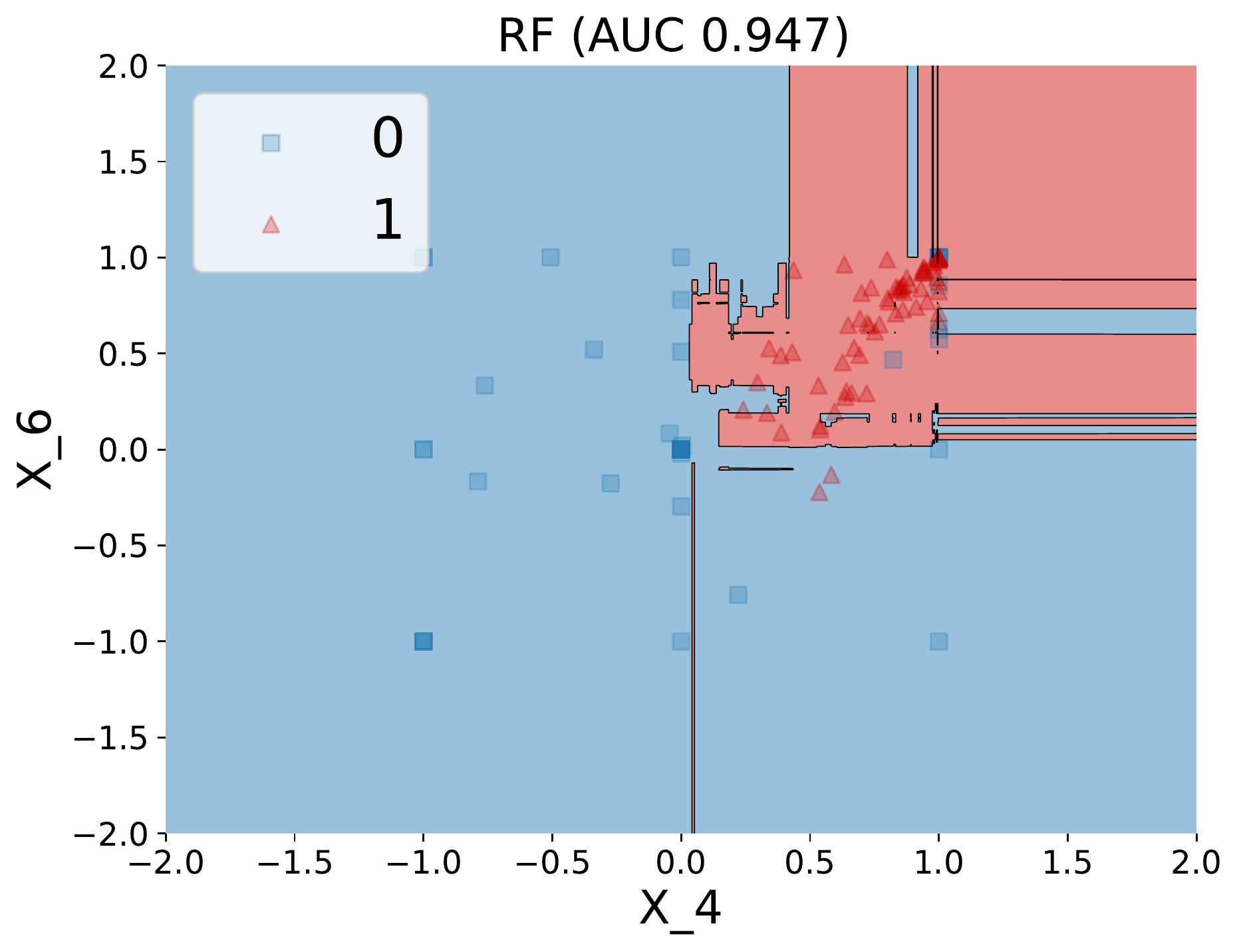} & \includegraphics[width=0.35\textwidth]{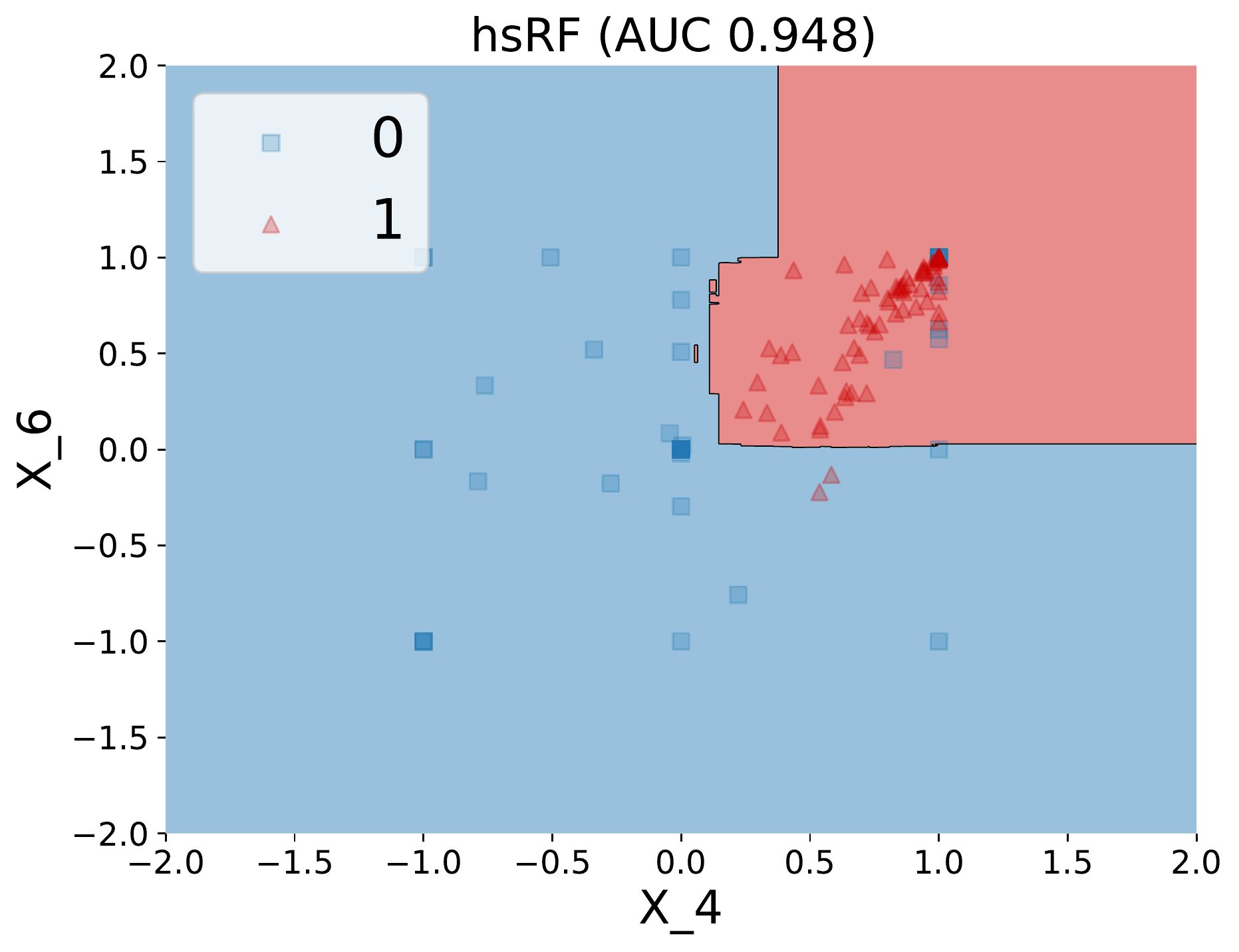}
    \end{tabular}}\\
    \end{tabular}
    \vspace{-10pt}
    \caption{Comparison of decision boundary learnt by Random Forests on the smaller data sets before / after applying \method.Even though \methods does not result in an increase in predictive performance, it learns a smoother, simpler decision boundary, resulting in improved interpretability and simplicity.}
    \label{fig:decision_bounds_simpl}
\end{figure}

\begin{figure}[H]
    \vspace{-8pt}
    \aboverulesep = 0.0mm
    \belowrulesep = 0.0mm
    \centering
    \begin{tabular}{lc}
    \begin{minipage}{0.1cm}
    \rotatebox[origin=c]{90}{\large \textbf{(A)} diabetes}
    \end{minipage}%
    & \raisebox{\dimexpr-.1\height-0.5em}{    \begin{tabular}{cc}
         \includegraphics[width=0.35\textwidth]{figs/decision_boundaries/diabetes_decision_boundary_random_forest.pdf} & \includegraphics[width=0.35\textwidth]{figs/decision_boundaries/diabetes_decision_boundary_shrunk.pdf}
    \end{tabular}
}\\
    \begin{minipage}{0.1cm}
    \rotatebox[origin=c]{90}{\large \textbf{(B)} german credit}
    \end{minipage}
    & \raisebox{\dimexpr-.1\height-0.5em}{    \begin{tabular}{cc}
         \includegraphics[width=0.35\textwidth]{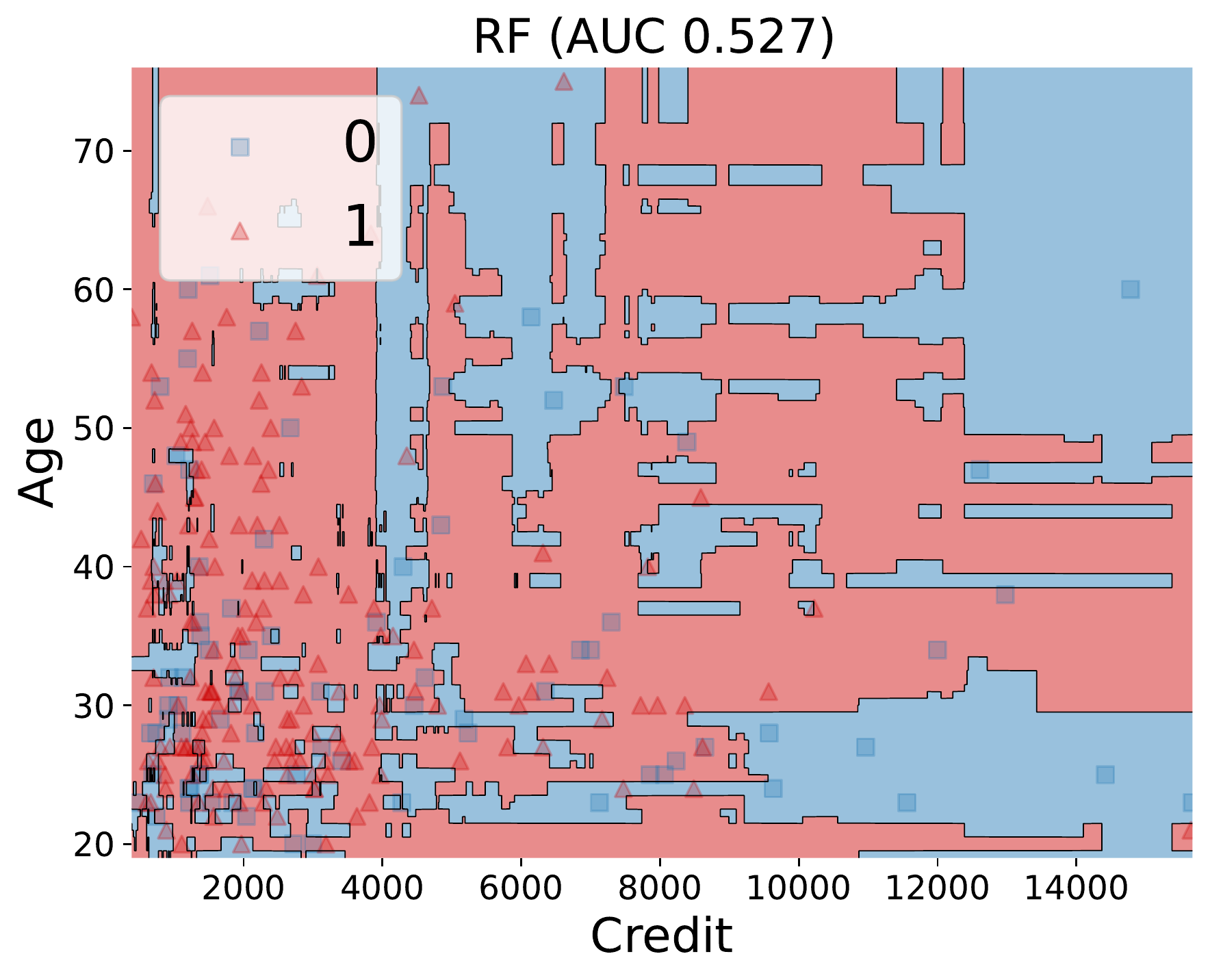} & \includegraphics[width=0.35\textwidth]{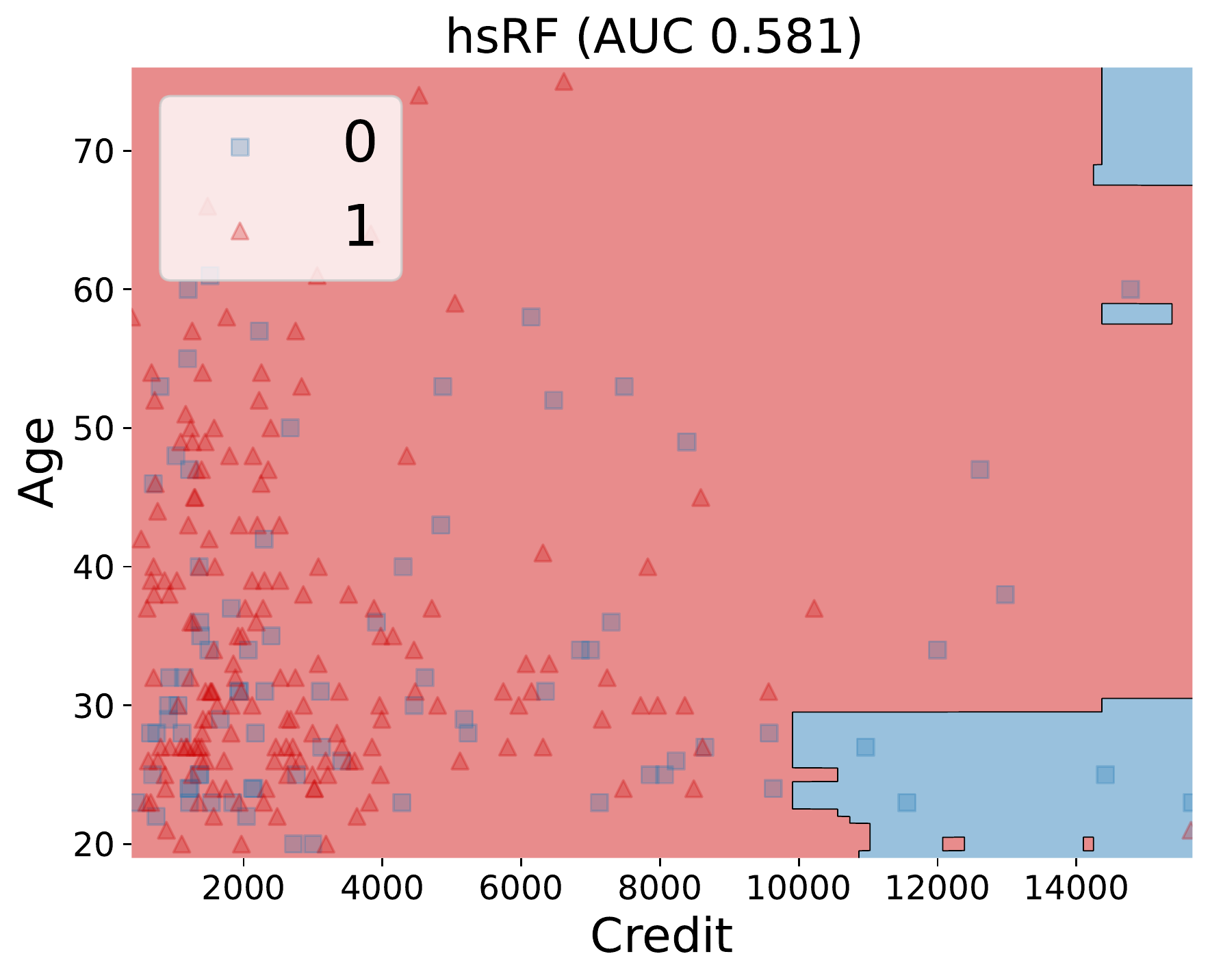}
    \end{tabular}
}\\
        \begin{minipage}{0.1cm}
    \rotatebox[origin=c]{90}{\large \textbf{(C)} juvenile}
    \end{minipage}
    & \raisebox{\dimexpr-.1\height-0.5em}{    \begin{tabular}{cc}
          \includegraphics[width=0.35\textwidth]{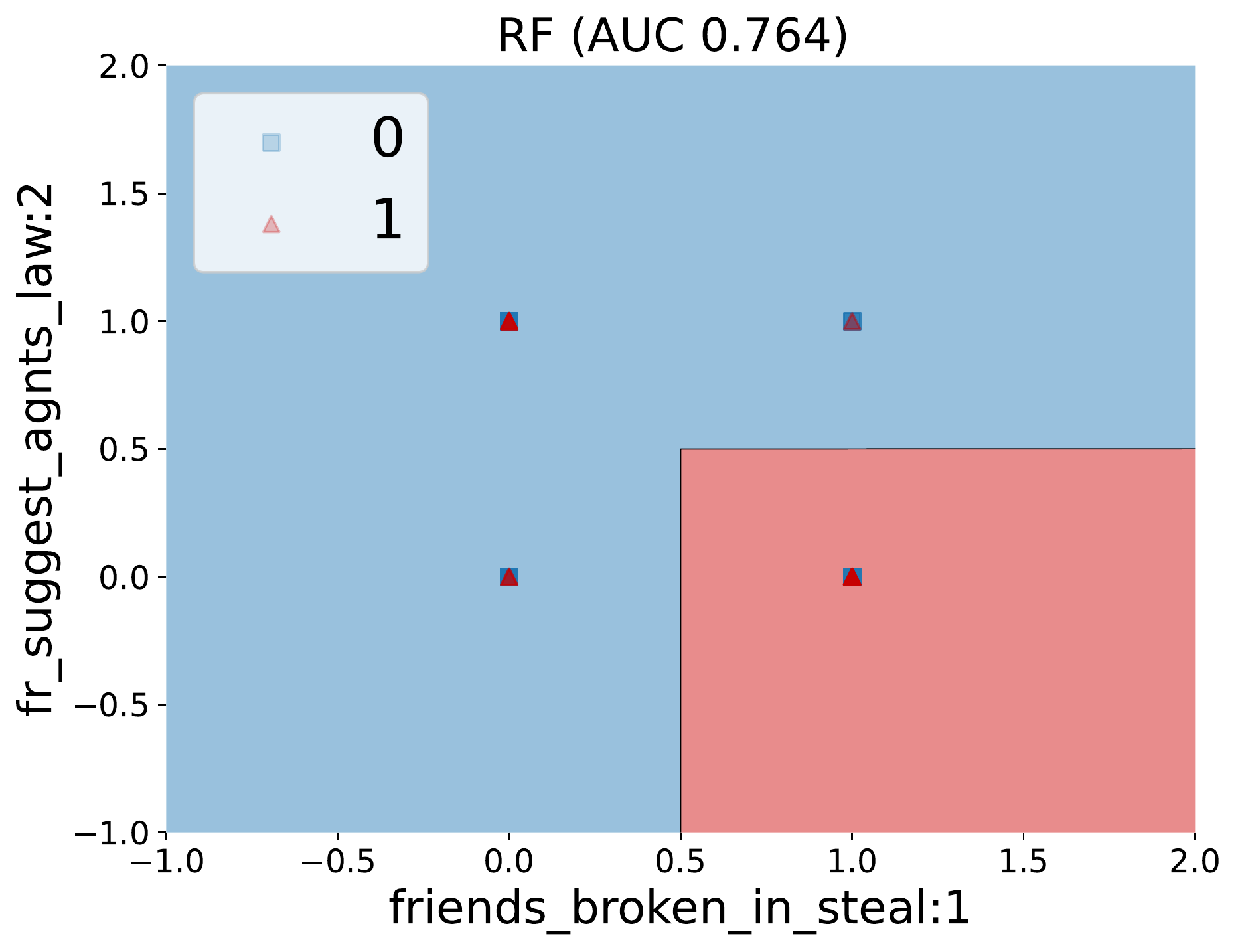} & \includegraphics[width=0.35\textwidth]{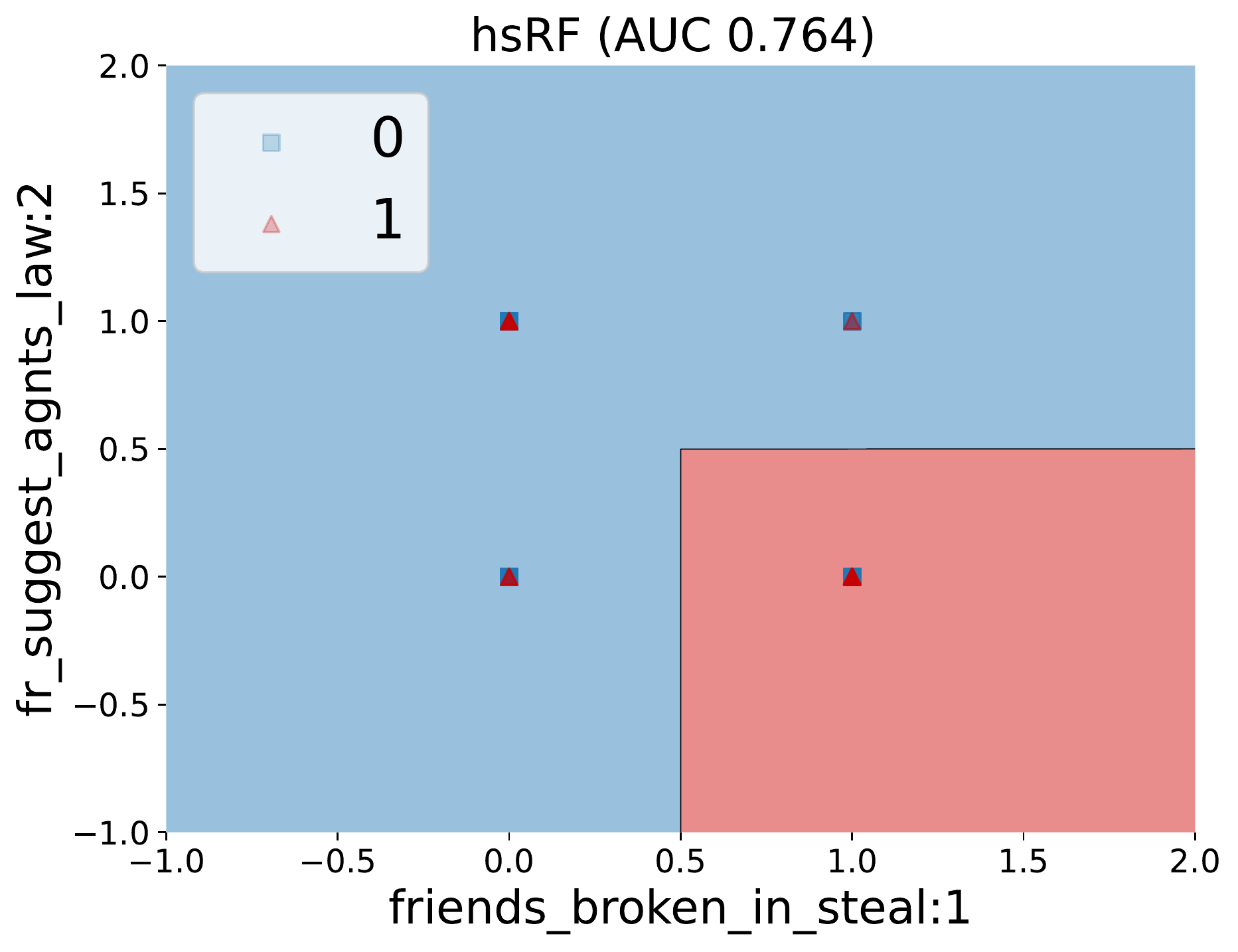}
     \end{tabular}}\\

        \begin{minipage}{0.1cm}
    \rotatebox[origin=c]{90}{\large \textbf{(D)} recidivism}
    \end{minipage}
    & \raisebox{\dimexpr-.1\height-0.5em}{     \begin{tabular}{cc}
         \includegraphics[width=0.35\textwidth]{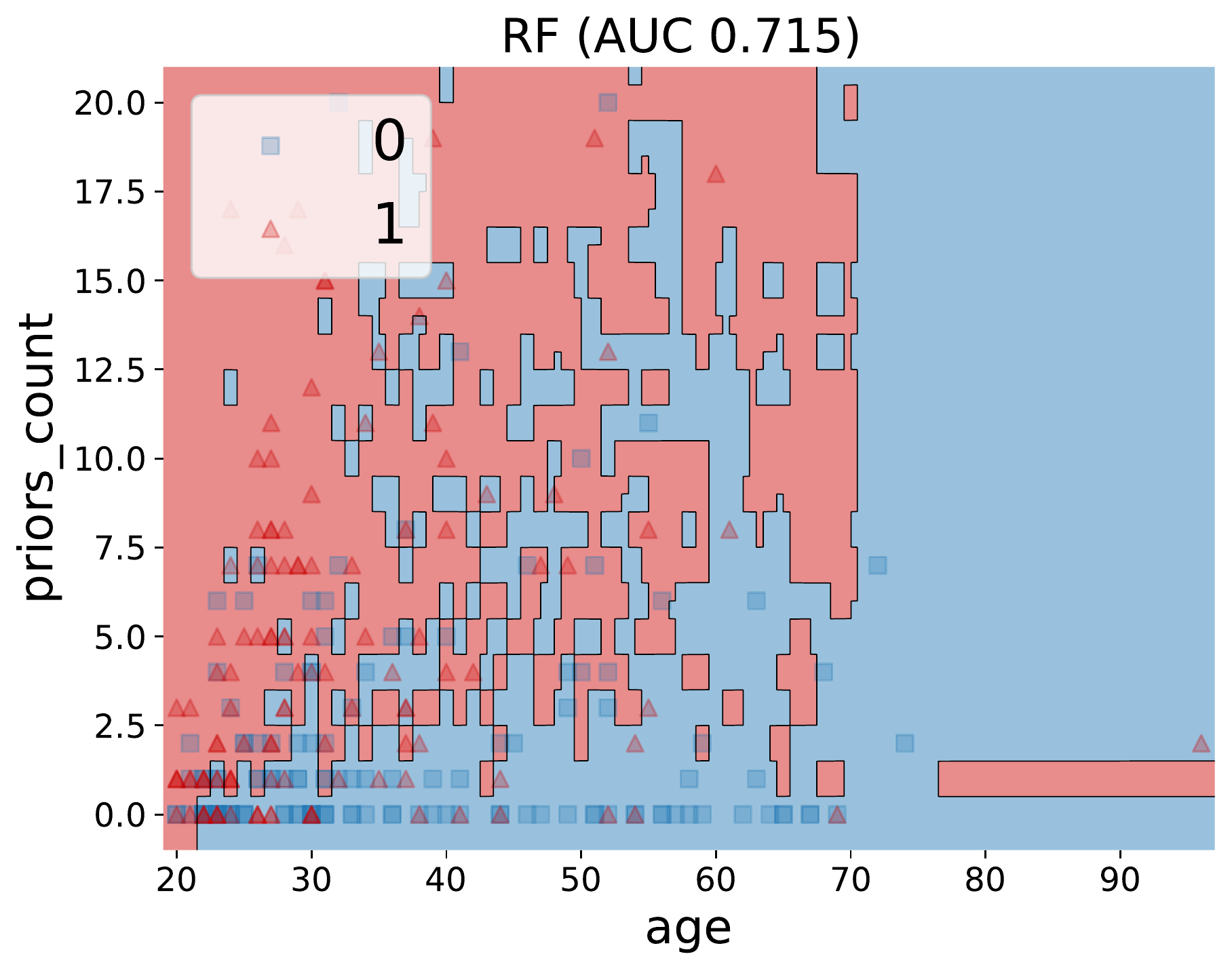} & \includegraphics[width=0.35\textwidth]{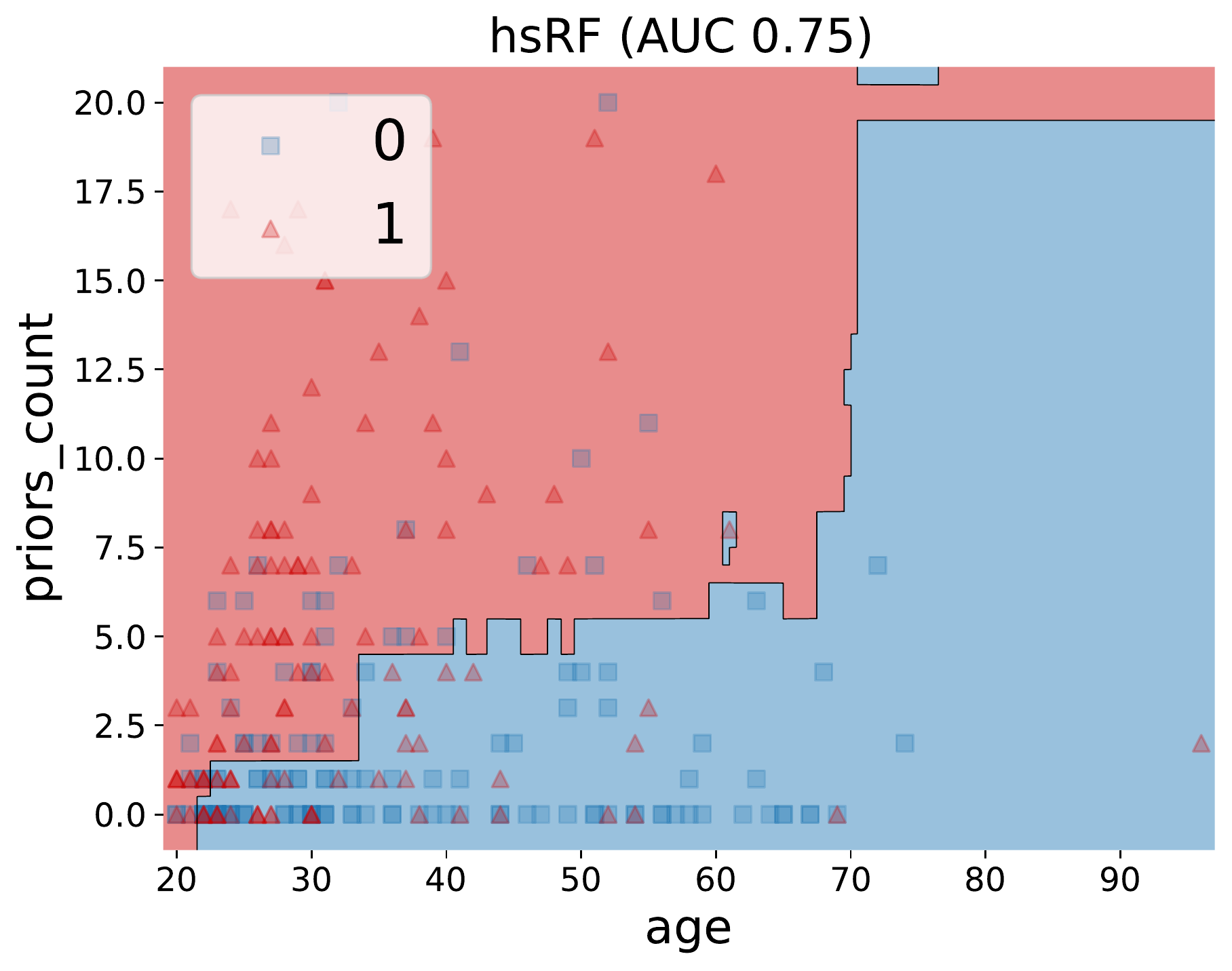}
    \end{tabular}}\\
    \end{tabular}
    \vspace{-10pt}
    \caption{Comparison of decision boundary learnt by Random Forests on the larger data sets before / after applying \method.Even though \methods does not result in an increase in predictive performance, it learns a smoother, simpler decision boundary, resulting in improved interpretability and simplicity.}
    \label{fig:decision_bounds_comp}
\end{figure}

\newpage
\subsection{SHAP Plots}
\label{subsec:shap_plots}
In this section, we investigate SHAP plots \cite{lundberg2017unified} for RF with and without \methods. SHAP plots are a popular tool to explain black-box models such as RFs, and provide explanations by computing a shapley value for every sample and feature. We provide SHAP plots for RF with and without \methods for every dataset in \cref{tab:datasets}. 

\begin{figure}[H]
    \vspace{-8pt}
    \aboverulesep = 0.0mm
    \belowrulesep = 0.0mm
    \centering
    \begin{tabular}{lc}
    \begin{minipage}{0.1cm}
    \rotatebox[origin=c]{90}{\large \textbf{(A)} heart}
    \end{minipage}%
    & \raisebox{\dimexpr-.5\height-1em}{\includegraphics[width=0.7\textwidth]{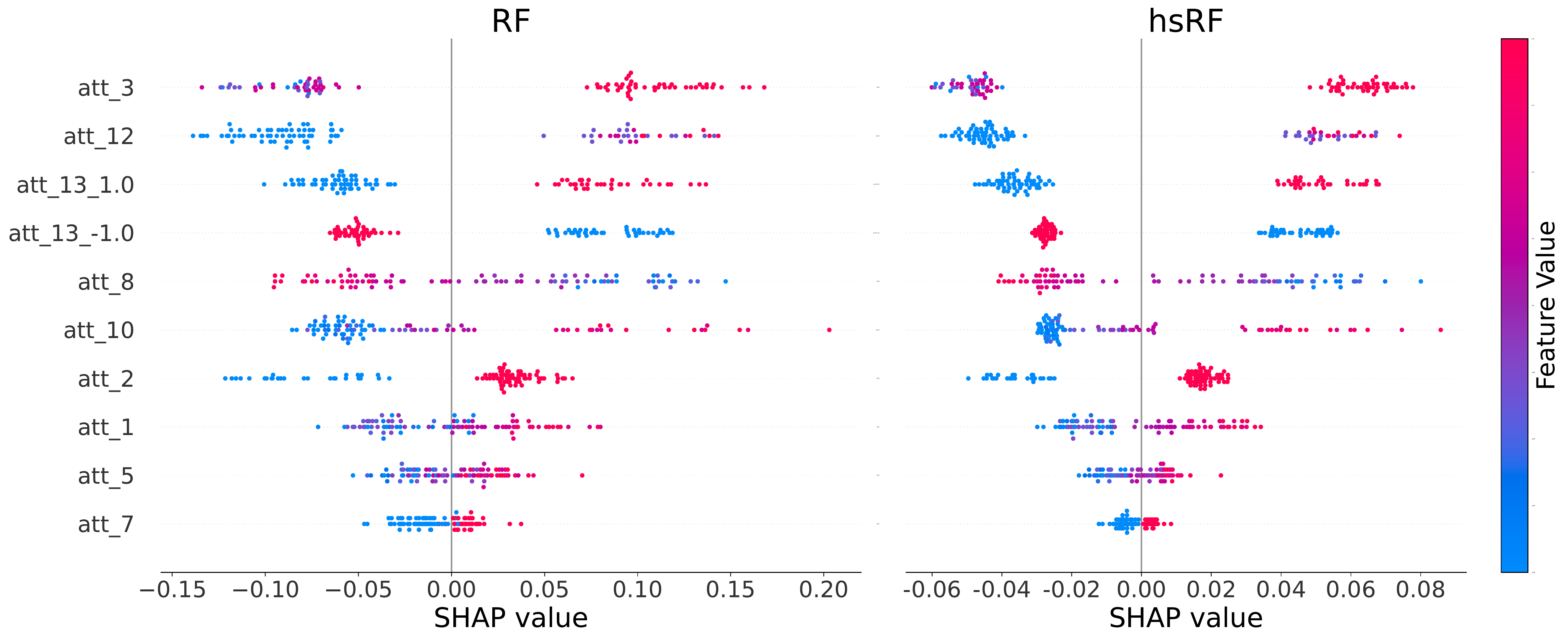}}\\
    \begin{minipage}{0.1cm}
    \rotatebox[origin=c]{90}{\large \textbf{(B)} breast cancer}
    \end{minipage}
    & \raisebox{\dimexpr-.5\height-1em}{  \includegraphics[width=0.7\textwidth]{figs/SHAP_summary_plot/SHAP_summary_plot_breast_cancer.pdf}
}\\
        \begin{minipage}{0.1cm}
    \rotatebox[origin=c]{90}{\large \textbf{(C)} haberman}
    \end{minipage}
    & \raisebox{\dimexpr-.5\height-1em}{ \includegraphics[width=0.7\textwidth]{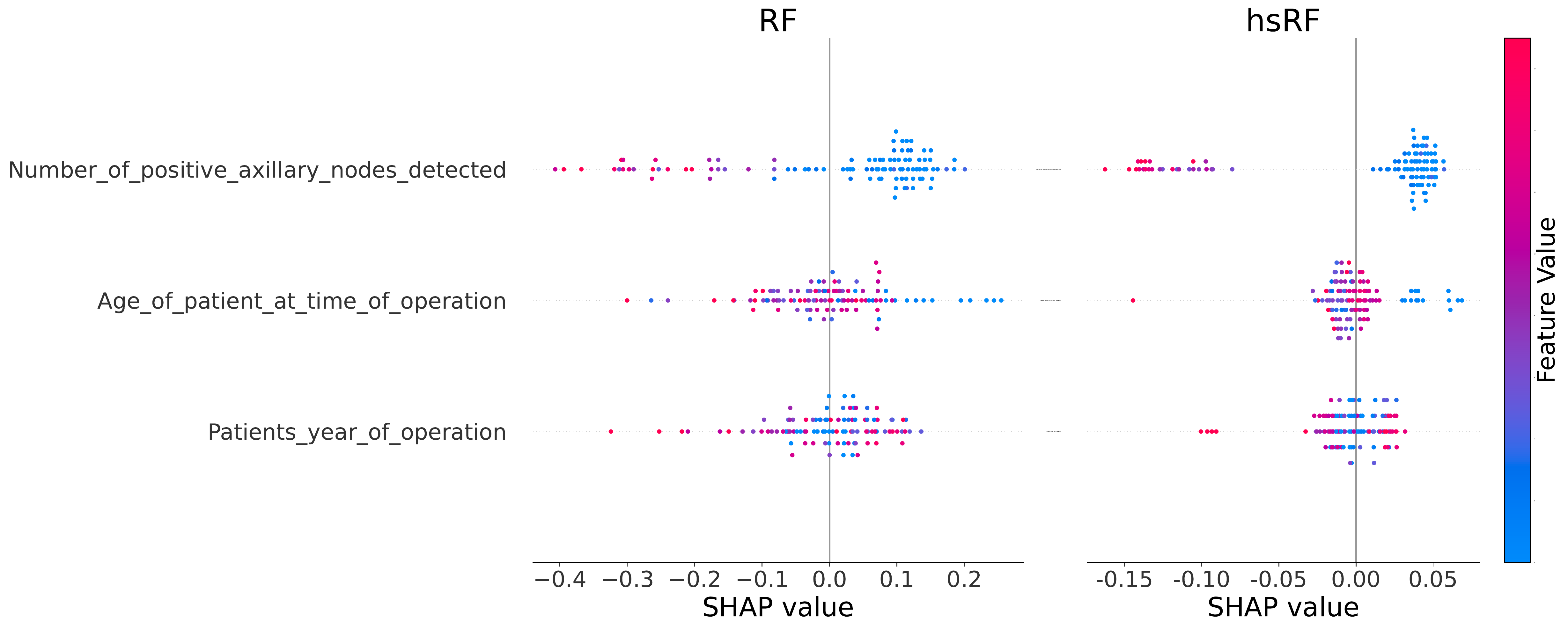}}\\

        \begin{minipage}{0.1cm}
    \rotatebox[origin=c]{90}{\large \textbf{(D)} ionosphere}
    \end{minipage}
    & \raisebox{\dimexpr-.5\height-1em}{ \includegraphics[width=0.7\textwidth]{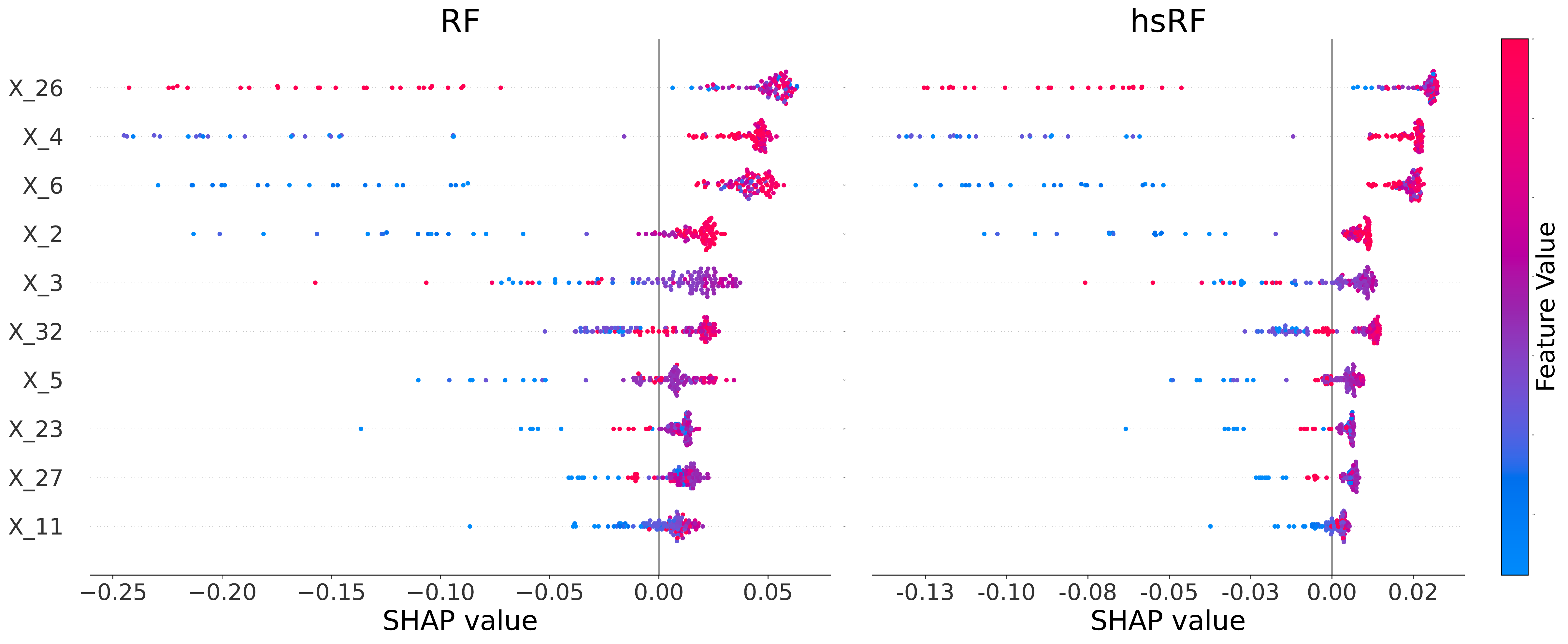}}\\
    \end{tabular}
    \vspace{-5pt}
    \caption{Comparison of SHAP plots learnt by Random Forests on the smaller data sets before / after applying \method.
    \methods displays more clustered SHAP values for each feature, reflecting less heterogeneity in the effect of each feature on predicted response.}
    \label{fig:shap_vals_small}
\end{figure}

\begin{figure}[H]
    \vspace{-8pt}
    \aboverulesep = 0.0mm
    \belowrulesep = 0.0mm
    \centering
    \begin{tabular}{lc}
    \begin{minipage}{0.1cm}
    \rotatebox[origin=c]{90}{\large \textbf{(A)} diabetes}
    \end{minipage}%
    & \raisebox{\dimexpr-.5\height-1em}{\includegraphics[width=0.7\textwidth]{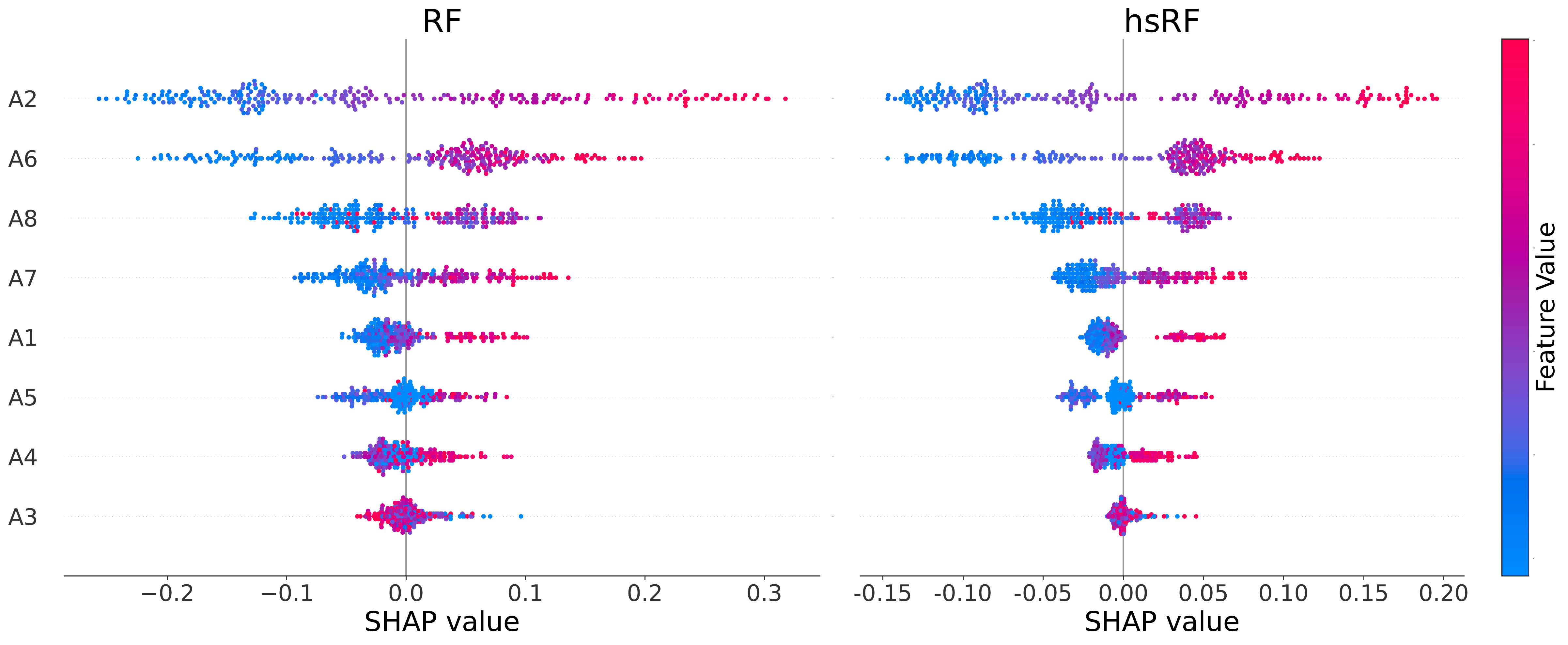}}\\
    \begin{minipage}{0.1cm}
    \rotatebox[origin=c]{90}{\large \textbf{(B)} german credit}
    \end{minipage}
    & \raisebox{\dimexpr-.5\height-1em}{  \includegraphics[width=0.7\textwidth]{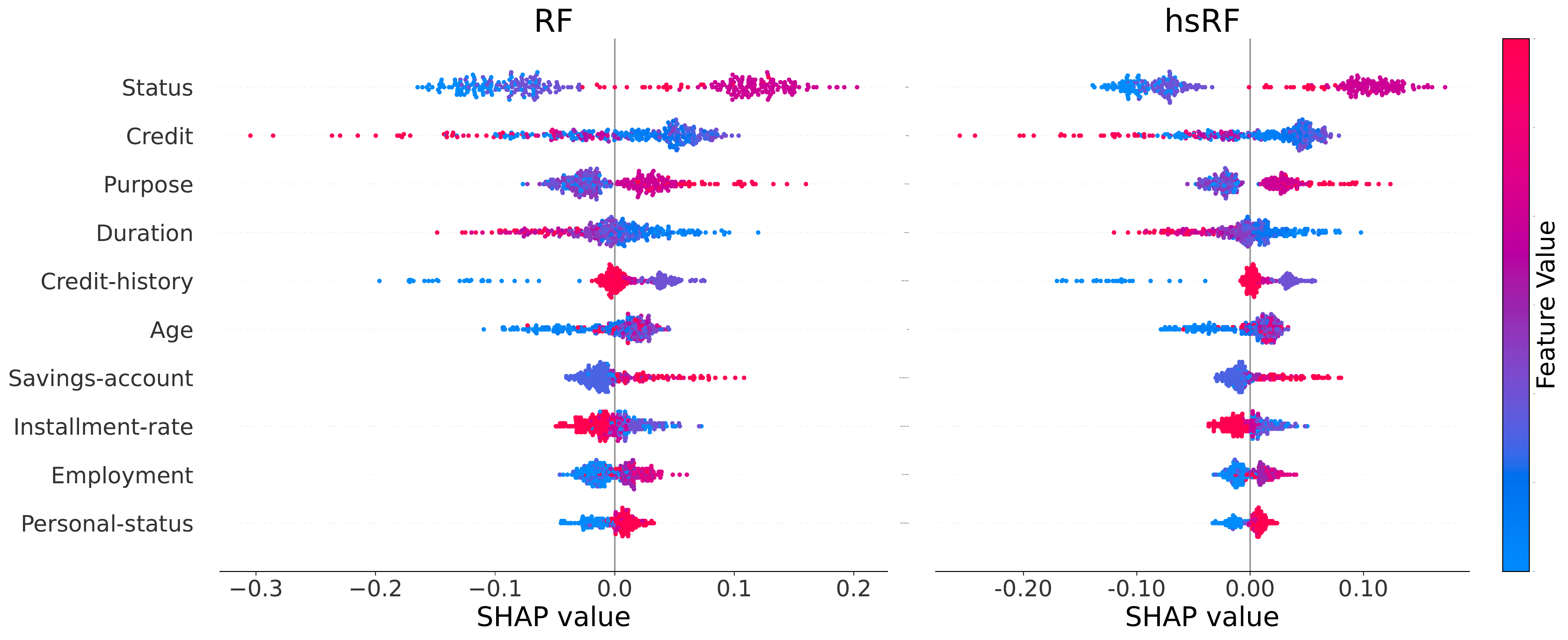}
}\\
        \begin{minipage}{0.1cm}
    \rotatebox[origin=c]{90}{\large \textbf{(C)} juvenile}
    \end{minipage}
    & \raisebox{\dimexpr-.5\height-1em}{ \includegraphics[width=0.7\textwidth]{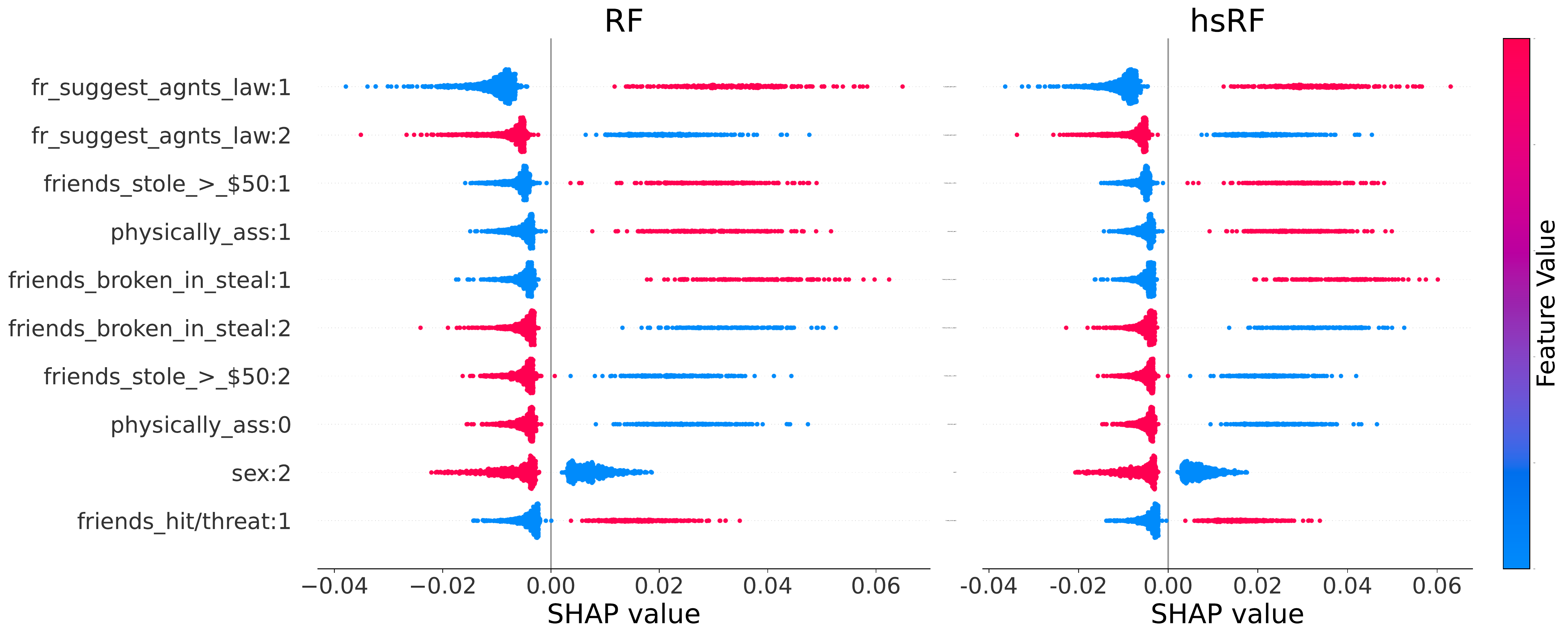}}\\

        \begin{minipage}{0.1cm}
    \rotatebox[origin=c]{90}{\large \textbf{(D)} recidivism}
    \end{minipage}
    & \raisebox{\dimexpr-.5\height-1em}{ \includegraphics[width=0.7\textwidth]{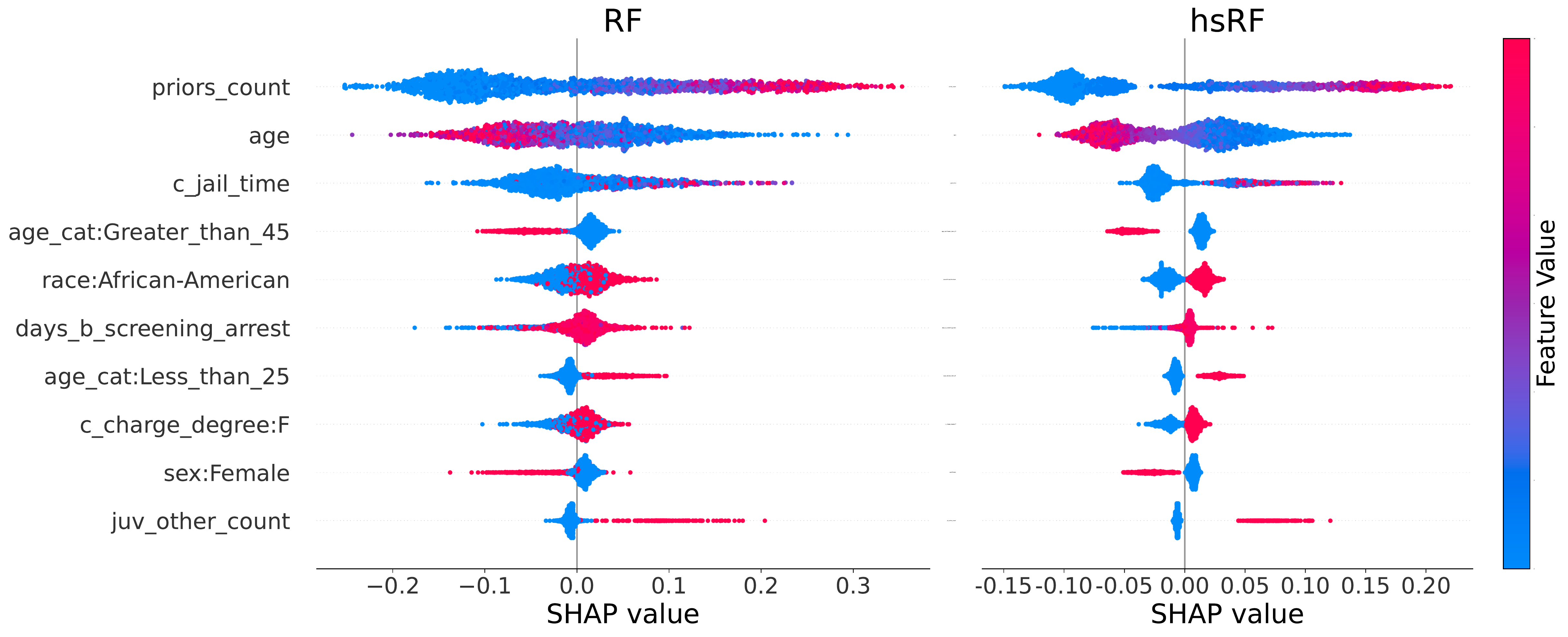}}\\
    \end{tabular}
    \vspace{-5pt}
    \caption{Comparison of SHAP plots learnt by Random Forests on the larger data sets before / after applying \method.
    \methods displays more clustered SHAP values for each feature, reflecting less heterogeneity in the effect of each feature on predicted response.}
    \label{fig:shap_vals_large}
\end{figure}

\clearpage
\subsection{SHAP Variability plots}
\label{subsec:SHAP_variability_supp}
We investigate the stability of SHAP scores to data perturbations arising from different train/test splits for all classification data sets in \cref{tab:datasets}. In order to do this, we randomly choose 50 held-out samples in each data set and measure the variance of their SHAP scores per feature across 100 different train-test splits for RF with and without \methods. We then average the variance per feature across all 50 held-out samples and visualize them in the figures below. As seen in the figures below, we see that SHAP values for RF with \methods are more stable to perturbations, thereby verifying their increased interpretability.  
\begin{figure}[htbp]
    \begin{tabular}{cc}
    \centering
        (A) Heart & (B) Breast cancer\\
         \includegraphics[width=0.45\textwidth]{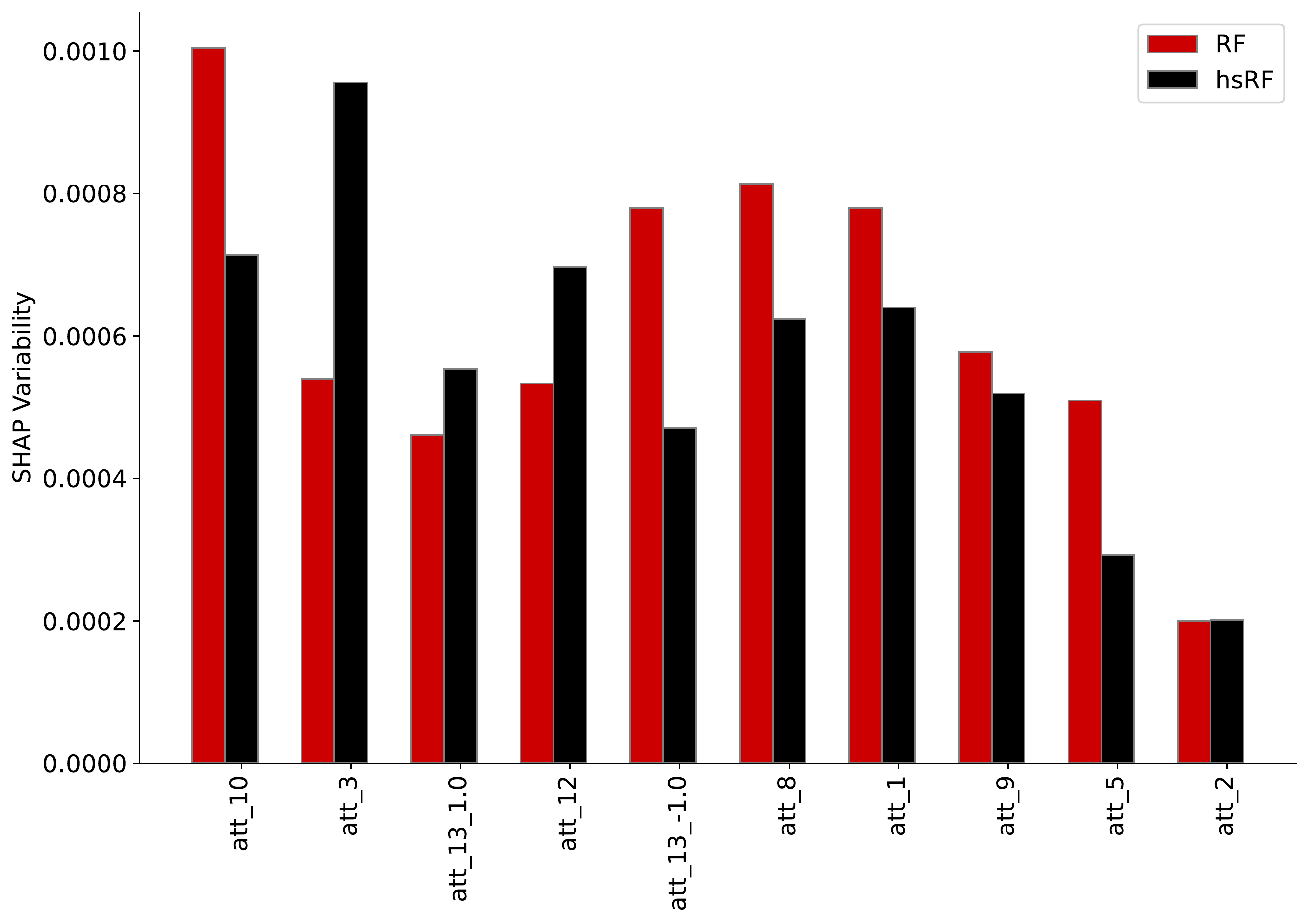} & \includegraphics[width=0.45\textwidth]{figs/SHAP_variability_plot/SHAP_variability_plot_breast_cancer.pdf}\\
        (C) Haberman & (D) Ionosphere\\ 
        \raisebox{-\height}{\includegraphics[width=0.45\textwidth]{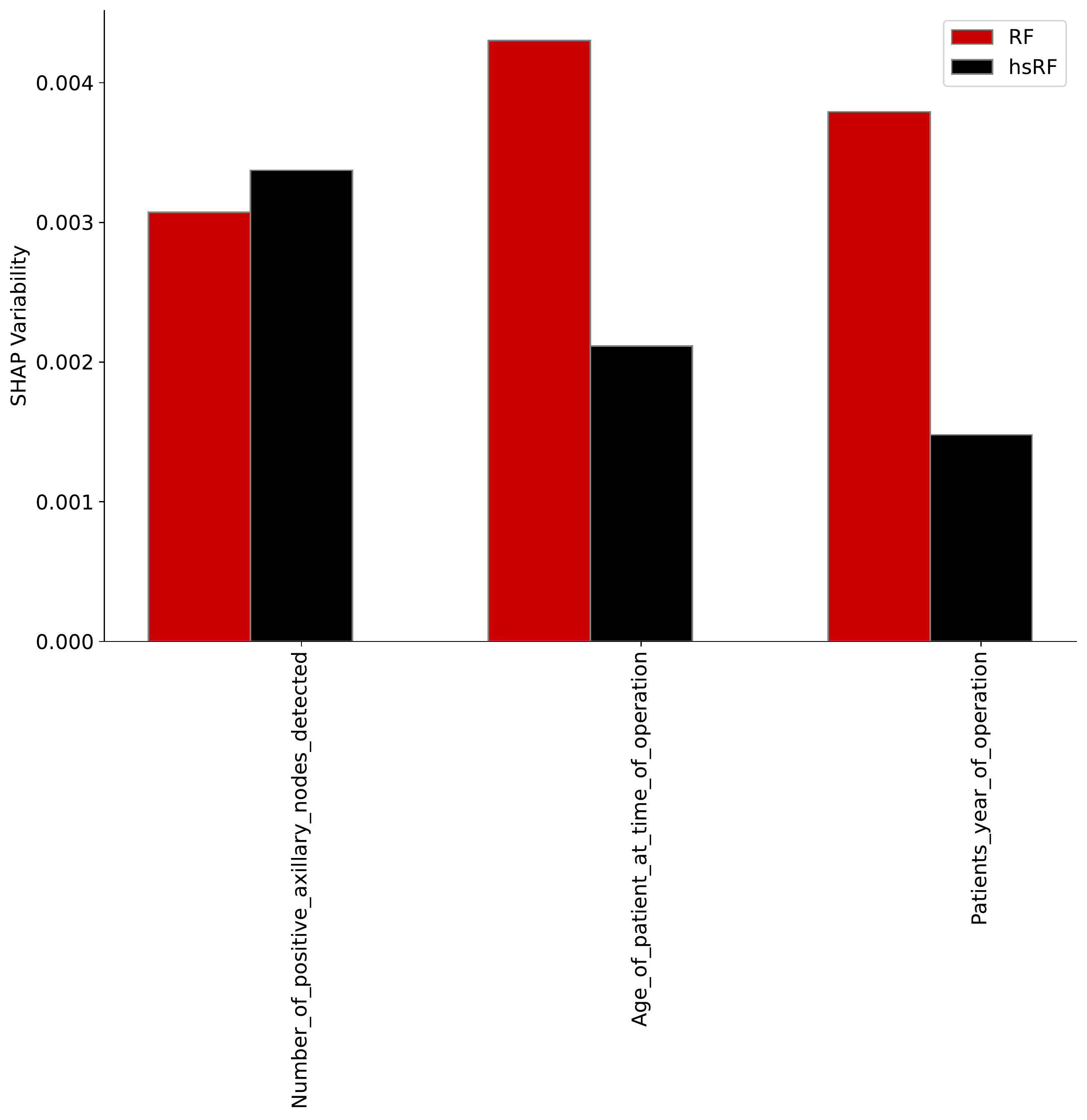}} & 
        \raisebox{-\height}{\includegraphics[width=0.45\textwidth]{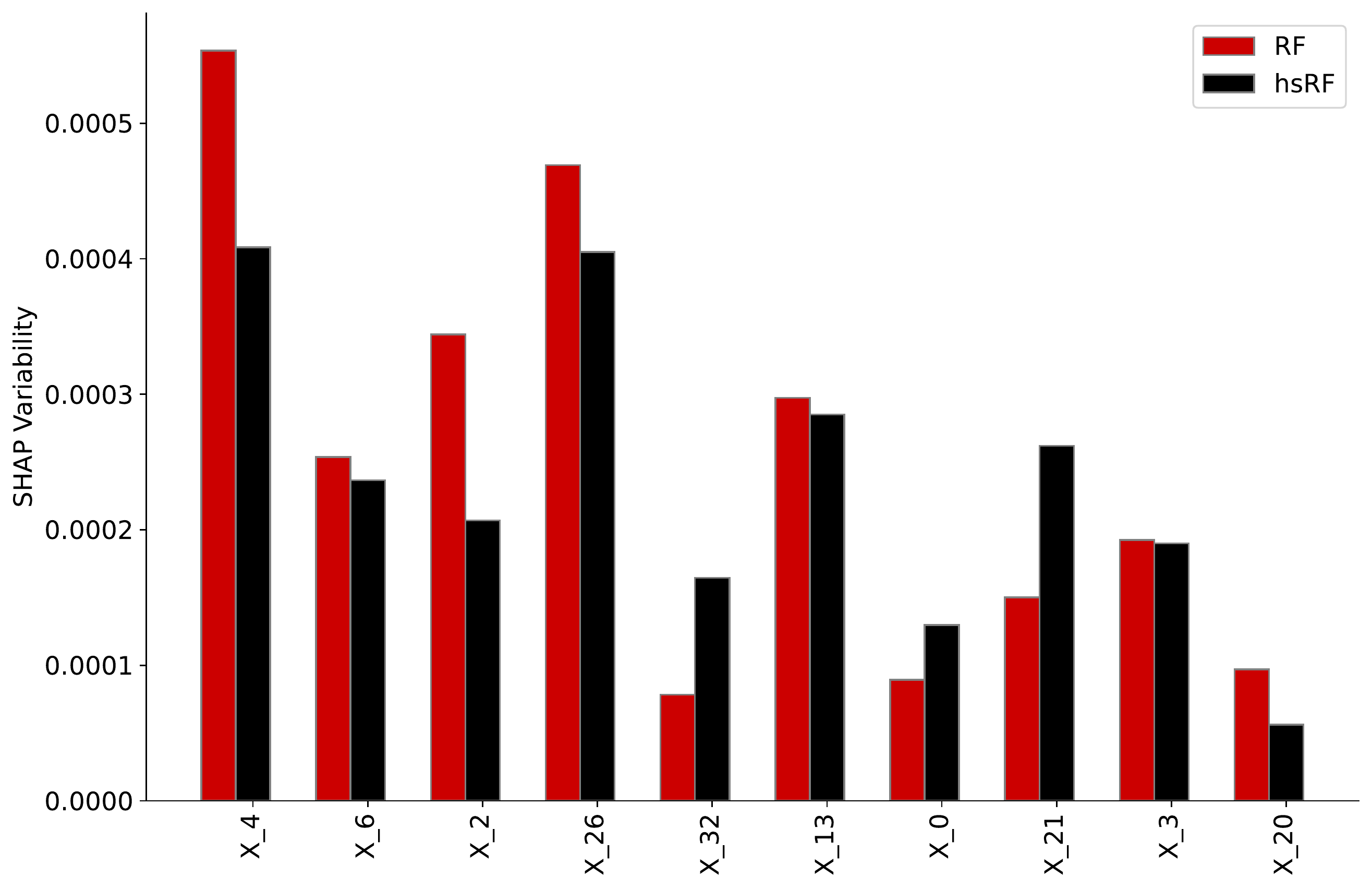}}\\
    \end{tabular}
    \caption{Comparison of SHAP plots learnt by Random Forests on different (small) datasets before / after applying \method.
    \methods displays lower variability across different data perturbations, indicating enhanced stability.}
    \label{fig:shap_variability_small}
\end{figure}

\begin{figure}[H]
    \begin{tabular}{cc}
    \centering
        (A) Diabetes & (B) Credit\\
         \raisebox{-\height}{\includegraphics[width=0.45\textwidth]{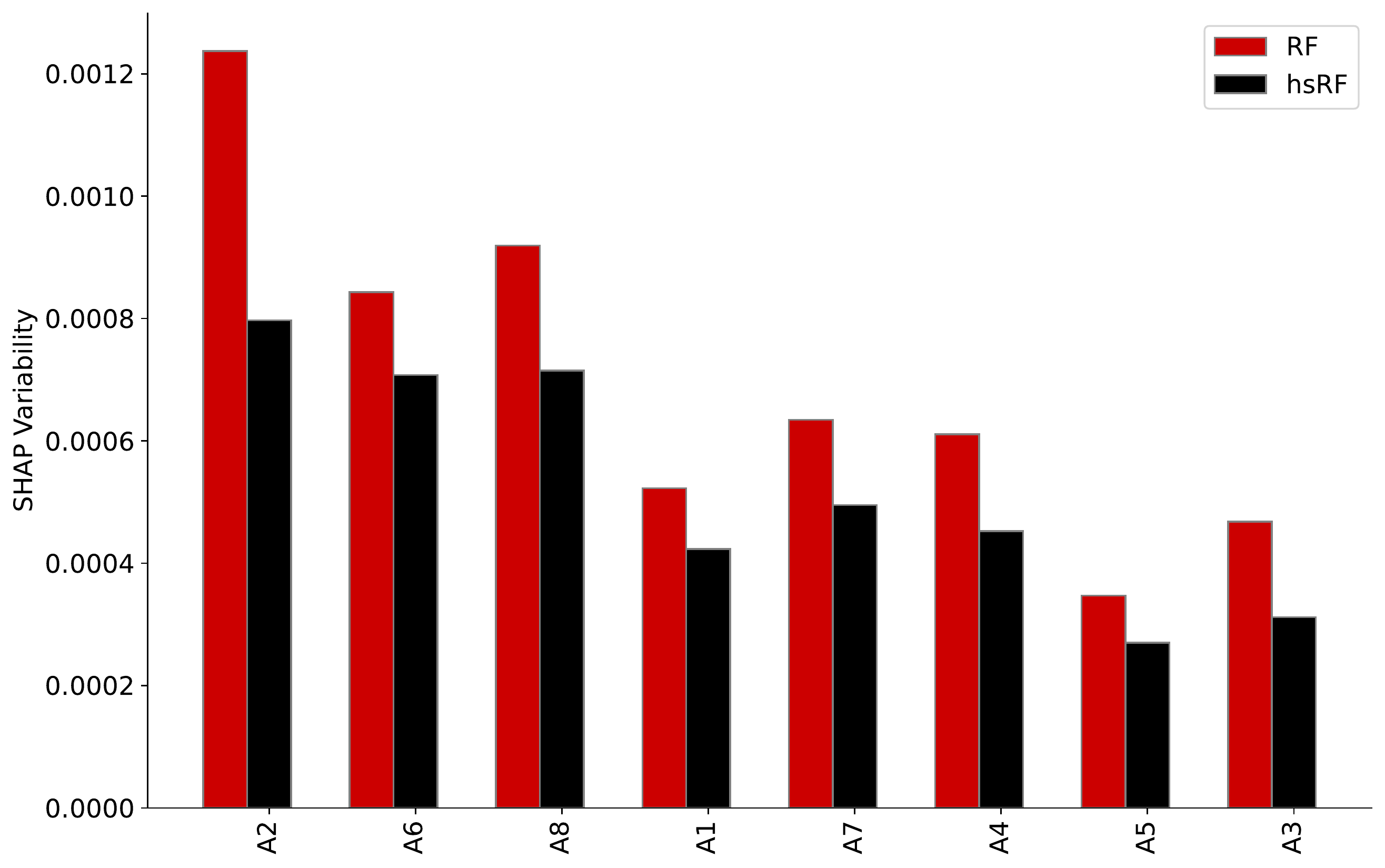}} & \raisebox{-\height}{\includegraphics[width=0.45\textwidth]{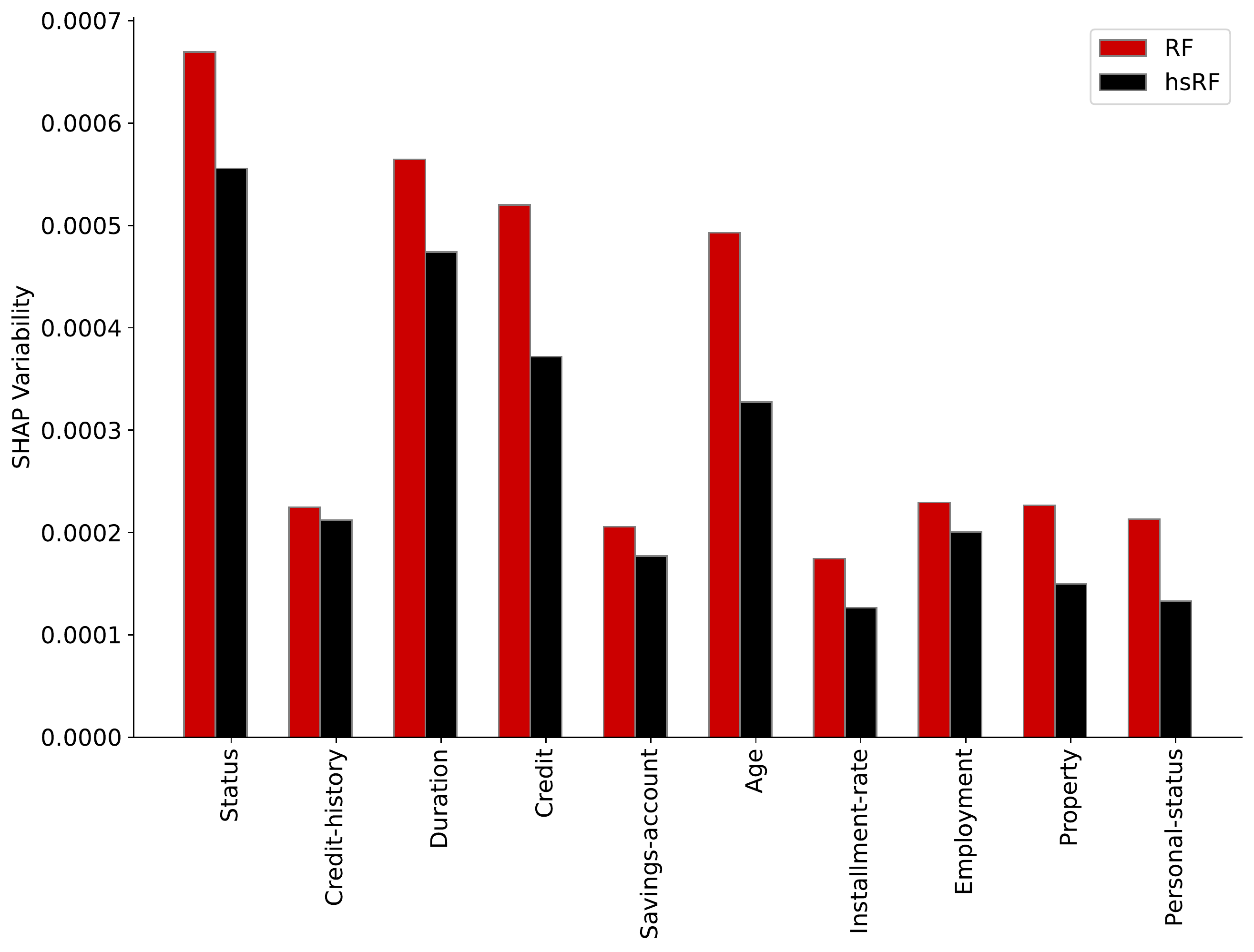}}\\
        (C) Juvenile & (D) Compas\\  
        \raisebox{-\height}{\includegraphics[width=0.45\textwidth]{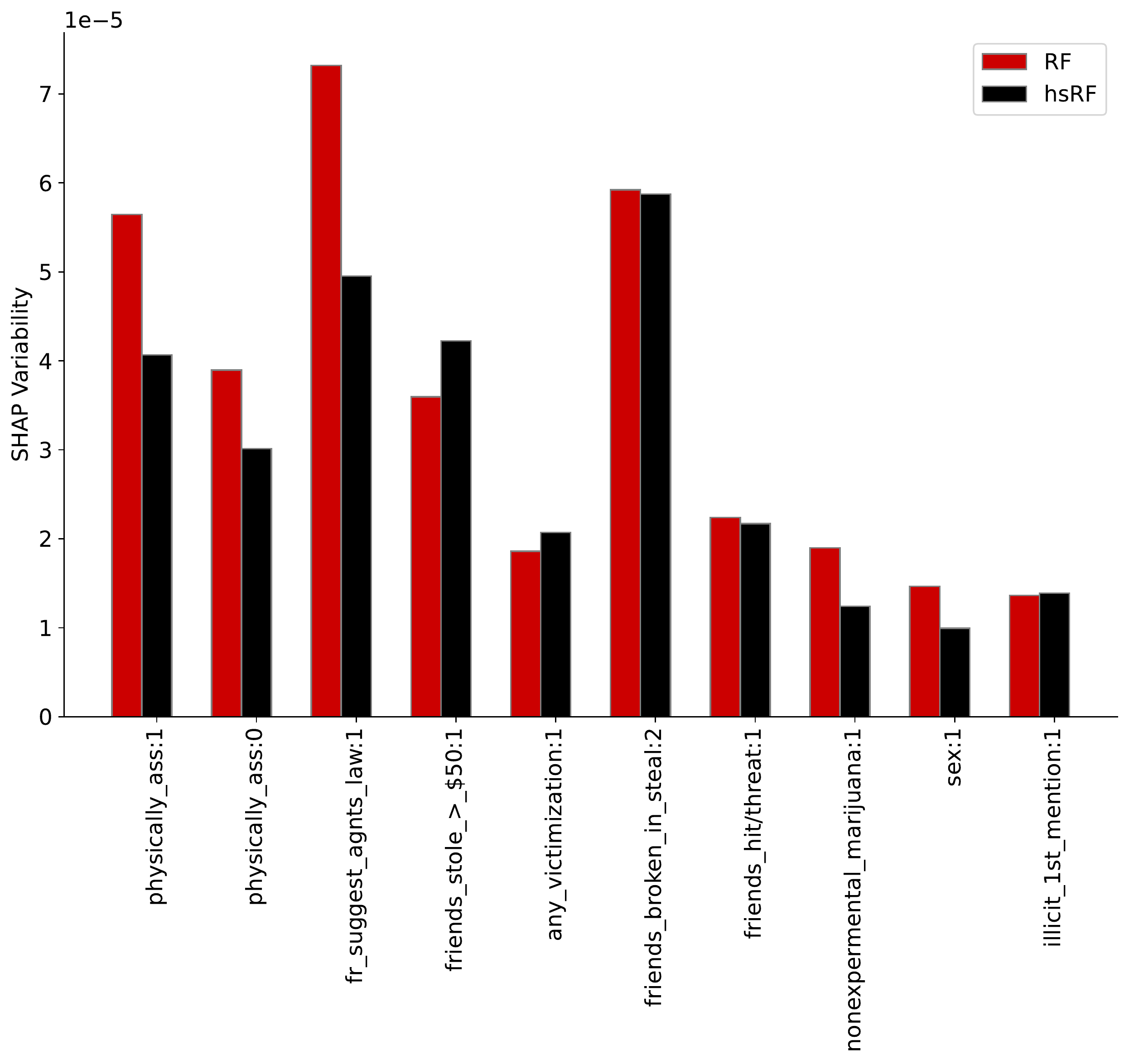}} & \raisebox{-\height}{\includegraphics[width=0.45\textwidth]{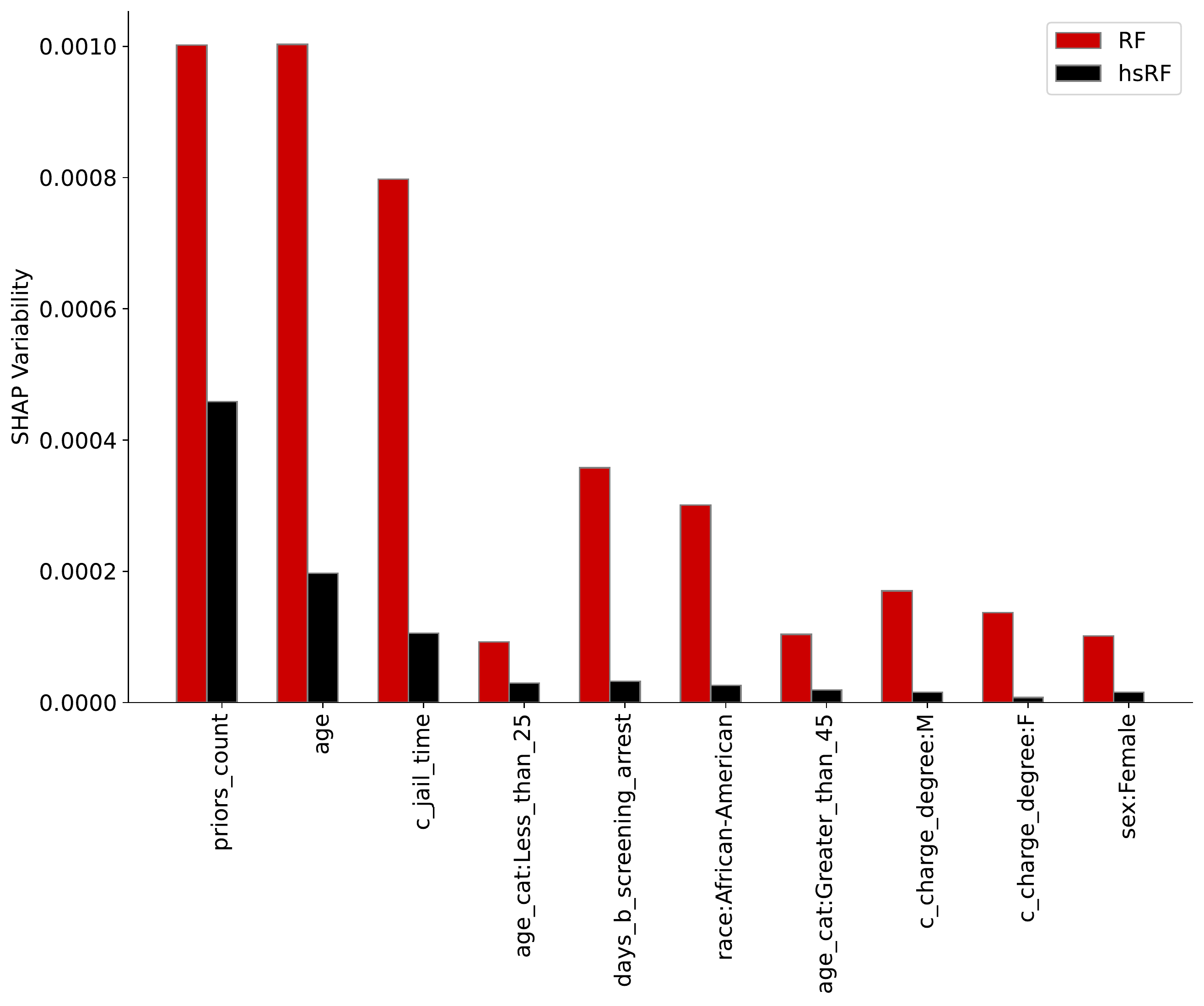}}\\
    \end{tabular}
    \caption{Comparison of SHAP plots learnt by Random Forests on different (large) datasets before / after applying \method.
    \methods displays lower variability across different data perturbations, indicating enhanced stability.}
    \label{fig:shap_variability_big}
\end{figure}

\section{Theory}
\label{sec:supp_theory}
In this section, we provide a proof of \cref{thm:ridge_to_shrinkage} and the heuristic arguments discussed in \cref{sec:theory}.

\subsection{Proof of Theorem \ref{thm:ridge_to_shrinkage}}
\label{subsec:proof_of_thm}
Throughout this proof, we denote the left and right child of a node $\node_{i}$ by $\node_{i,L},\node_{i,R}$. Further, we assume WLOG that the sample mean $\hat\E_{\node_{0}}\braces{y} = 0$. Then using the well known solution to ridge regression, we have that 
\begin{equation*}
    \hat{\bbeta}_{\lambda} = (\Psi(\data)\Psi(\data)^{T} + \lambda I)^{-1}\Psi(\data)^{T}\by
\end{equation*}
Using the orthogonality of the decision stumps and the norm of the features, we have that the transformed covariance matrix  $\Psi(\data)\Psi(\data)^{T}$ is a diagonal matrix with entries $(\Psi(\data)\Psi(\data)^{T})_{ii} = N(\node_{i})$. Therefore, we have the following expansion for the $i$-th coordinate of $\hat{\bbeta}_{\lambda}$
\begin{align}
\label{eq:beta_lambda_expansion}
    \hat{\beta}_{\lambda,i} & =  \frac{\langle \psi_{t_{i}}(\data),\by\rangle}{N(\node_{i}) + \lambda}\nonumber \\ 
    & = \frac{N(\node_{i,R})\sum_{\bx_{i}\in \node_{i,L}}y_{i} - N(\node_{i,L})\sum_{\bx_{i}\in \node_{i,R}}y_{i} }{\sqrt{N(\node_{i,L})N(\node_{i,R})}(N(\node_{i}) + \lambda)} \nonumber \\ 
    & = \frac{\sqrt{N(\node_{i,L})N(\node_{i,R})}}{N(\node_{i}) + \lambda}\paren*{\hat\E_{\node_{i,L}}\braces{y} - \hat\E_{\node_{i,R}}\braces{y}} \nonumber \\ 
     & = \frac{\sqrt{N(\node_{i,L})N(\node_{i,R})}}{N(\node_{i}) + \lambda}\paren*{\hat\E_{\node_{i,L}}\braces{y} - \frac{N(\node_{i})\hat\E_{\node_{i}}\braces{y}}{N(\node_{i,R})} + \frac{\hat\E_{\node_{i,L}}\braces{y}N(\node_{i,L})}{N(\node_{i,R})}}\nonumber \\ 
     & = \sqrt{\frac{N(\node_{i})^{2}N(\node_{i,L})}{N(\node_{i,R})(N(\node_{i})+\lambda)^2}}\paren*{\hat\E_{\node_{i,L}}\braces{y} - \hat\E_{\node_{i}}\braces{y}}
\end{align}

Then given a query point $\bx$ with leaf-to-root path $\node_L \subset \node_{L-1} \subset \cdots \subset \node_0$, assume WLOG that each descendant is always the left child of its ancestor. Then using \eqref{eq:beta_lambda_expansion} and , the prediction at $\bx$ can then be expanded as follows 
\begin{align*}
    \hat{f}_{\lambda}(\bx) & =  \Psi(\bx)^{T}\hat{\bbeta_{\lambda}} \\ 
    & = \sum^{L-1}_{l=0}
    \frac{N(\node_{i,R})\langle \psi_{t_{l}}(\data),\by\rangle}{\sqrt{N(\node_{l,L})N(\node_{l,R})}(N(\node_{l}) + \lambda)} \\
    & = \sum^{L-1}_{l=0} \frac{N(\node_{l})\paren*{\hat\E_{\node_{l,L}}\braces{y} - \hat\E_{\node_{l}}\braces{y}}}{N(\node_{l}) + \lambda} \\
    &= \sum^{L-1}_{l=0} \frac{\hat\E_{\node_{l,L}}\braces{y} - \hat\E_{\node_{l}}\braces{y}}{1 + \lambda/N(\node_{l})}
\end{align*}
We see that the equation above is precisely the same as that proposed for \methods in \eqref{eq:hierarchical_shrinkage}. 
\subsection{Heuristics for equation \eqref{eq:motivating_constant_lambda}}
\label{subsec:motivating_constant_lambda_supp}

Consider the generative model
\begin{equation}
    y = f(\bx) + \epsilon.
\end{equation}
where $\E\braces*{\epsilon~|~\bx} = 0$.
Suppose for the moment that $f(\bx) = \bbeta^T\bx$ is linear, so that
\begin{equation} \label{eq:supp_diff_in_cell_means}
    \E\braces*{y~|~\node_L} - \E\braces*{y~|~\node_R}
     = \bbeta^T\paren*{\E\braces*{\bx~|~\node_L} - \E\braces*{\bx~|~\node_R}}.
\end{equation}
Now further assume that for some $\beta_{min}$ and $\beta_{max}$, we have 
$$
\beta_{min} \leq \abs*{\beta_{i}} \leq \beta_{min}
$$
for all $i$.
Plugging these into \eqref{eq:supp_diff_in_cell_means} allows us to compute
\begin{equation}
\label{eq:diameter_bound}
\beta_{min}^2\textnormal{diam}(\node)^2 \leq \paren*{\E\braces*{y~|~\node_L} - \E\braces*{y~|~\node_R}}^2 \leq \beta_{max}^2\textnormal{diam}(\node)^2.
\end{equation}

Next, for a given node $\node$, let its side lengths be denoted by $l_{i}$, with corresponding measure$\mu(\node) = \prod^{d}_{i=1}l_{j}$. Assuming that all the side lengths of $\node$ are similar, we have that the diameter of the node $\textnormal{diam})(\node) \approx \mu(\node)^{1/d}$. Furthermore, assuming that when a node $\node$ is split into left and right children $\node_{L},\node_{R}$, we split the node fairly evenly, such that the measure of $\node_{L},\node_{R}$ are roughly equivalent. Then we have that the measure of the node $\node$  is $\approx 2^{-depth(\node)}$, which implies that $diam(\node) \approx 2^{-depth(\node)/d}$. Substituting this into \eqref{eq:diameter_bound} gives us the heuristic claimed  in \eqref{eq:motivating_constant_lambda}.

For a more general $C^1$ regression function $f$, note that the $C^1$ assumption implies that $f$ is approximately linear locally, i.e. when the nodes are small enough.

\end{document}